\documentclass[letterpaper]{article} % DO NOT CHANGE THIS
\usepackage{aaai2027}  % DO NOT CHANGE THIS
\nocopyright           % suppresses the AAAI copyright slug for the arXiv version
\usepackage[hyphens]{url}  % DO NOT CHANGE THIS
\usepackage{graphicx} % DO NOT CHANGE THIS
\usepackage{natbib}  % DO NOT CHANGE THIS AND DO NOT ADD ANY OPTIONS TO IT
\usepackage{caption} % DO NOT CHANGE THIS AND DO NOT ADD ANY OPTIONS TO IT
\usepackage{algorithm}
\usepackage{algorithmic}

\usepackage{newfloat}
\usepackage{listings}
\DeclareCaptionStyle{ruled}{labelfont=normalfont,labelsep=colon,strut=off} % DO NOT CHANGE THIS
\floatstyle{ruled}
\newfloat{listing}{tb}{lst}{}
\floatname{listing}{Listing}

\usepackage{booktabs}

\usepackage{amsmath,amssymb,amsfonts,mathtools}
\usepackage{bm}
\usepackage{booktabs}
\usepackage{tabularx}
\usepackage{amsthm}
\newtheorem{proposition}{Proposition}
\newtheorem{corollary}{Corollary}
\newtheorem{assumption}{Assumption}
\theoremstyle{remark}

\newtheorem{lemma}{Lemma}

\usepackage{dsfont}

\usepackage{xcolor}

\newcommand{\zb}{\mathbf{z}}
\newcommand{\xb}{\mathbf{x}}

\newcommand{\Dec}{\mathcal{D}_\theta}
\newcommand{\cyc}[1]{\mathcal{C}_{#1}}     % round-trip consistency error
\newcommand{\err}[1]{\mathcal{E}_{#1}}     % true (unobservable) rollout error
\newcommand{\fwd}{\Phi_{+}}
\newcommand{\bwd}{\Phi_{-}}
\newcommand{\cd}{c_d}
\newcommand{\MSE}{\mathrm{MSE}}
\newcommand{\sv}{\mathbf{s}}                          % pair state
\newcommand{\pnorm}[1]{\left\Vert #1 \right\Vert}     % normalized RMS norm
\newcommand{\perr}[1]{\mathcal{E}^{\mathrm{p}}_{#1}}  % pair-level rollout error

\title{Round-Trip Consistency: Bidirectional Diffusion Models \\ Can Predict Their Own Rollout Errors}

\author {
    Alexander Scheinker
}
\affiliations{
    Los Alamos National Laboratory\\
    Los Alamos, NM, 87544, USA\\
    ascheink@lanl.gov
}

\begin{document}
%%%%%%%%%%%%%%%%%%%%%%%%%%%%%%%%%%%%%%%%%%%%%%%%%%%%%%
%%%%%%%%%%%%%%%%%%%%%%%%%%%%%%%%%%%%%%%%%%%%%%%%%%%%%%
%%%%%%%%%%%%%%%%%%%%%%%%%%%%%%%%%%%%%%%%%%%%%%%%%%%%%%

\maketitle

\begin{abstract}
Autoregressive models accumulate error over long rollouts, yet at deployment there is no ground truth to measure it against. We train a single conditional latent diffusion model that steps a dynamical system forward or backward in time via a direction
flag, and show that this bidirectionality supplies a measurement-free test-time error signal: rolling forward $i$ steps and then backward $i$ steps must return the model to its start, so the round-trip discrepancy $\mathcal{C}_i$ is a self-supervised proxy for the unobservable rollout error: no ensembles, no held-out data, no governing equations, for one extra rollout. We validate on compressible magnetohydrodynamics (MHD), an astrophysical turbulent radiative mixing layer, and natural face videos (CelebV-HQ). On held-out MHD trajectories, $\mathcal{C}_i$ ranks rollout error (Spearman $0.91$-$0.98$ at fixed depth; $0.69 \pm 0.16$ within trajectories), and a simple calibrator fit on training
rollouts predicts its magnitude to within $1.14\times$ ($68\%$) and $1.29\times$ ($95\%$) with near-nominal coverage - one nat beyond a depth-only predictor, transferring to all six decoded physical fields. The same signal flags the out-of-distribution Orszag-Tang vortex (AUROC $0.98$; $1.0$ by depth $10$) exactly where sampling-dispersion baselines invert, and it cuts incurred error by $15\%$ at $80\%$ coverage - three times the depth-only baseline. Bidirectional training comes at negative cost, beating direction specialists in both directions, and the backward direction doubles as a fast inverse solver. On LE-PDE-UQ's turbulent Navier-Stokes benchmark, a single bidirectional model reaches accuracy within $1.3\times$ of their ten-model ensemble at a tenth of the training cost, with the best training-free pixel-level calibration. Round-trip consistency turns reversibility into a practical trust signal for generative models.
\end{abstract}

\begin{links}
     \link{Code}{https://github.com/alexscheinker/round-trip-consistency}
%     \link{Datasets}{https://aaai.org/example/datasets}
%     \link{Extended version}{https://aaai.org/example/extended-version}
\end{links}

% =========================== Introduction =============================
%%%%%%%%%%%%%%%%%%%%%%%%%%%%%%%%%%%%%%%%%%%%%%%%%%%%%%
%%%%%%%%%%%%%%%%%%%%%%%%%%%%%%%%%%%%%%%%%%%%%%%%%%%%%%
%%%%%%%%%%%%%%%%%%%%%%%%%%%%%%%%%%%%%%%%%%%%%%%%%%%%%%
\section{Introduction}
%%%%%%%%%%%%%%%%%%%%%%%%%%%%%%%%%%%%%%%%%%%%%%%%%%%%%%
%%%%%%%%%%%%%%%%%%%%%%%%%%%%%%%%%%%%%%%%%%%%%%%%%%%%%%
%%%%%%%%%%%%%%%%%%%%%%%%%%%%%%%%%%%%%%%%%%%%%%%%%%%%%%
\label{sec:intro}

\begin{figure}[h]
\centering
   \includegraphics[width=1.0\columnwidth]{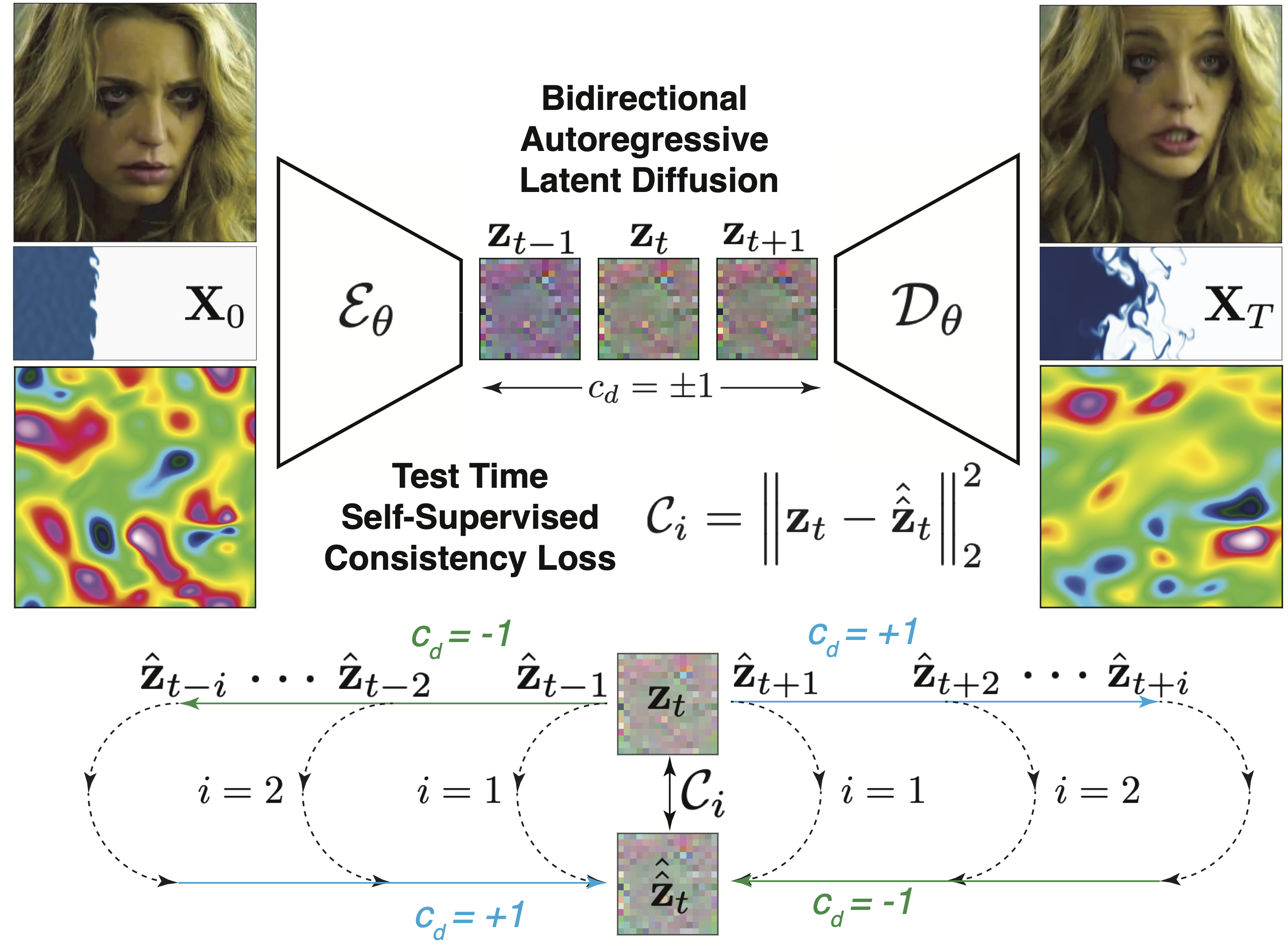}
  \caption{Consistency-based, test-time self-supervised error estimation with a
bidirectional diffusion model: a directional flag $c_d=\pm1$ selects
forward or backward rollout; reversing it rolls back to an estimate
$\hat{\hat{\mathbf{z}}}_t$ of the starting point.}
  \label{fig:Closs}
\end{figure}

Learned surrogates increasingly stand in for numerical solvers across
scientific computing \citep{sanchez2020learning, brandstetter2022message,
li2021fourier, lu2021learning, um2020solver, kochkov2021machine,
raissi2019physics, karniadakis2021physics}, and diffusion models now act
as \emph{probabilistic} simulators, from turbulence to operational
weather prediction \citep{kohl2024benchmarking, cachay2023dyffusion,
price2025gencast, lam2023graphcast}. Deployed autoregressively, these
models feed their own predictions back as inputs: small one-step errors
compound, the input distribution drifts from the training data
\citep{brandstetter2022message, lippe2023pderefiner}, and accuracy
degrades at a rate that varies unpredictably across initial conditions.
In deployment: plasma control \citep{degrave2022magnetic},
accelerator diagnostics \citep{scheinker2024cdvae,scheinker2025latent, scheinker2026phaseflow}, and weather prediction \citep{price2025gencast}, the true state is unknown at test time, so a model cannot say \emph{how far into the future it can still be
trusted}.

Standard remedies quantify predictive \emph{uncertainty} rather
than realized \emph{error}: deep ensembles \citep{lakshminarayanan2017simple,
price2025gencast}, MC dropout \citep{gal2016dropout}, learned variance \citep{wu2024lepdeuq}, spread of repeated samples
\citep{lippe2023pderefiner, shu2024zeroshot}, and conformal intervals
\citep{angelopoulos2023conformal} all measure how much the model
disagrees with itself, not whether the learned dynamics are being
applied accurately to the trajectory at hand. Dispersion-based
signals degrade under distribution shift that
autoregressive rollouts induce \citep{ovadia2019can}.

We take a different route, built on an old and simple idea: check the
consistency of a round trip (Fig.~\ref{fig:Closs}). Forward--backward
agreement flags unreliable optical-flow correspondences
\citep{sundaram2010dense, meister2018unflow}, cycle losses supervise
unpaired translation and temporal correspondence \citep{zhu2017unpaired,
wang2019learning}, and cycle defects quantify uncertainty in inverse
imaging when an analytic forward model is available
\citep{huang2023cycle}. We port this principle to learned dynamics by
training a \emph{single} conditional latent diffusion model with a
direction flag $\cd\in\{+1,-1\}$, so that one network represents both a system's forward evolution and its temporal inverse; the denoiser conditions on the two most recent frames --- the least context that determines a velocity (the check applies verbatim for any $n \ge 1$). Reversibility then becomes a checkable invariant: an accurate model composed with its own inverse is the identity, so the discrepancy left after rolling forward $i$ steps and backward $i$ steps, the round-trip consistency error $\cyc{i}$, is measurable at test time with no ground truth, no ensemble, and no access to the governing equations, for the price of one extra rollout.  We emphasize that no reversibility of the underlying physics is
assumed. All three systems studied are dissipative; the backward
direction is a learned statistical inverse, and the only assumption our
theory requires is placed on that learned map (Sec.~5.1), never on the
dynamics. The check is necessary rather than sufficient because forward and backward errors could in principle cancel, which is precisely why quantifying how faithfully $\cyc{i}$ tracks the true error is the central contribution of this paper.

We validate on three dissipative, nonlinear systems: high-resolution
compressible magnetohydrodynamics (MHD), with the canonical
Orszag--Tang vortex \citep{orszag1979} as a deliberate
out-of-distribution stress test; an astrophysical turbulent radiative
mixing layer from the Well \citep{ohana2024well}; and natural face
videos. Our aim is explicitly \emph{not} state-of-the-art rollout
accuracy. Our context length, architecture scale, and sampling schedules are held fixed and modest throughout, but the question such tuning never answers: whether a model can \emph{know}, at test time and without ground truth, how wrong its rollout is. A trust signal matters most precisely when the surrogate is imperfect, which every deployed surrogate is.

We find that $\cyc{i}$ tracks the true rollout error across depths and trajectories: a simple calibrator fit on training rollouts predicts its magnitude on held-out MHD trajectories to within a factor of $1.14$ (68\%): one nat better than a depth-only predictor, with the gain transferring to all six decoded physical fields --- supports selective prediction ($15\%$ lower incurred error at $80\%$ coverage, three times the depth-only-deferral baseline \citep{geifman2017selective}), and flags the out-of-distribution Orszag-Tang vortex immediately (AUROC $1.0$ by depth $10$; $0.98$ for the trajectory-mean score).
Our contributions are:
\begin{itemize}
  \item \textbf{Bidirectional diffusion dynamics model.} A single
  conditional latent diffusion with direction flag $\cd\in\{+1,-1\}$
  rolls a system forward (surrogate solver) or backward (inverse
  solver) in time with one set of weights.
  \item \textbf{Round-trip consistency as self-supervised UQ.} A
  test-time error proxy $\cyc{i}$, computed by cycling a rollout
  forward and back, requiring no ground truth, no ensembles, and no
  architectural changes: one extra rollout yields a trust /
  early-stop criterion for autoregressive generation.
  \item \textbf{Quantitative validation and theory.} Correlation with
  true error across depths and systems, out-of-distribution detection, selective prediction, comparisons against sampling-based UQ baselines, and a Lipschitz-style certificate relating $\cyc{i}$ to $\err{i}$ (Sec.~\ref{sec:theory}).
\end{itemize}

% ========================== Related Work ==============================
%%%%%%%%%%%%%%%%%%%%%%%%%%%%%%%%%%%%%%%%%%%%%%%%%%%%%%
%%%%%%%%%%%%%%%%%%%%%%%%%%%%%%%%%%%%%%%%%%%%%%%%%%%%%%
%%%%%%%%%%%%%%%%%%%%%%%%%%%%%%%%%%%%%%%%%%%%%%%%%%%%%%
\section{Related Work}
\label{sec:related}

\begin{table*}[t]
\centering
\caption{Test-time trust signals for learned predictors. ``Both dir.\
learned'' asks whether forward and backward maps are both neural (no
analytic operator); ``comp.\ rollout'' whether the signal is defined
over a multi-step autoregressive horizon; ``test-time data'' whether
external observations are required at deployment; the last column
states what the method \emph{does} with its signal.}
\label{tab:positioning}
\footnotesize
\setlength{\tabcolsep}{4pt}
\begin{tabularx}{\textwidth}{llcccX}
\toprule
Method & Signal & \shortstack[c]{Both dir.\\learned} &
\shortstack[c]{Comp.\\rollout} &
\shortstack[c]{Test-time\\data} & Uses signal for \\
\midrule
Cycle-consistency UQ \citep{huang2023cycle} & cycle defect
  & --$^{a}$ & -- & -- & error/OOD flags \\
PDE-Refiner \citep{lippe2023pderefiner} & sample spread
  & -- & \checkmark & -- & implicit confidence \\
LE-PDE-UQ \citep{wu2024lepdeuq} & learned variance
  & -- & \checkmark & -- & error bars \\
DiffusionRollout \citep{yoo2026diffusionrollout} & sample spread
  & -- & \checkmark & -- & step-size scheduling \\
DiffusionPDE \citep{huang2024diffusionpde} & obs.\ guidance
  & --$^{b}$ & -- & \checkmark & constraining sampling \\
Cycle loss \citep{chakraborty2022cycle} & training loss
  & \checkmark & -- & n/a & training regularizer \\
\midrule
Round-trip $\cyc{i}$ (ours) & cycle defect
  & \checkmark & \checkmark & -- & flags, selective prediction \\
\bottomrule
\end{tabularx}
%\smallskip
{\footnotesize $^{a}$\,Forward leg is a known physical operator; only
the inverse is learned. \quad $^{b}$\,One diffusion prior guided in
both directions by observations.}
\end{table*}

\textbf{Surrogates and diffusion models for dynamics.}
Learned simulators range from graph networks and message-passing
solvers to neural operators and solver-in-the-loop hybrids
\citep{sanchez2020learning, brandstetter2022message, li2021fourier,
lu2021learning, um2020solver, kochkov2021machine}; diffusion models
entered as probabilistic forecasters, from turbulence benchmarking and
forecasting frameworks \citep{cachay2023dyffusion, kohl2024benchmarking}
to operational ensemble weather prediction \citep{price2025gencast},
with diffusion-style refinement stabilizing long rollouts
\citep{lippe2023pderefiner}. Latent generation with transformer
denoisers \citep{rombach2022high, peebles2023dit} makes
high-resolution, multi-field states tractable; we operate in this
latent autoregressive regime, by 2026 an established class
\citep{shysheya2024conditional, li2025latentflow,
chen2024diffusionforcing}, and add bidirectionality as the enabling
structure. Although our multi-field MHD surrogate is, to our
knowledge, among the first for that system, we deliberately do not
rest the paper's claims on it: the bidirectional model is
infrastructure; the contribution is what its structure makes
measurable.

\textbf{Uncertainty quantification for rollouts.}
Ensembles, MC dropout, and conformal calibration are the workhorses of deep UQ \citep{lakshminarayanan2017simple, gal2016dropout,
angelopoulos2023conformal}, and their reliability erodes under exactly the distribution shift that autoregressive rollouts induce
\citep{ovadia2019can}. For PDE surrogates, LE-PDE-UQ
\citep{wu2024lepdeuq} evolves latent uncertainty alongside the state
and reports gains over ensembles, Bayesian layers, and dropout, making
it the strongest published baseline for our setting (evaluated
head-to-head on its own benchmark in Sec.~\ref{sec:exp-lepdeuq});
sample spread supplies zero-shot uncertainty \citep{shu2024zeroshot,
lippe2023pderefiner}. All of these measure the model's
\emph{dispersion}, that is how much it disagrees with itself, rather
than whether the learned dynamics are applied accurately to the
trajectory at hand; we compare against dispersion signals for error
detection and selective prediction \citep{geifman2017selective,
hendrycks2017baseline}. Relatedly, diffusion posterior sampling,
score-based data assimilation, and DiffusionPDE \citep{chung2023dps,
rozet2023sda, huang2024diffusionpde} solve inverse and
state-estimation problems by guiding a prior with test-time
\emph{observations}; our backward direction is instead learned, and
the error signal we study requires no observations at all.

\textbf{Cycle consistency and nearest neighbors.}
Forward--backward agreement has long flagged unreliable optical-flow
correspondences \citep{sundaram2010dense, meister2018unflow}, and
cycle losses supervise unpaired translation and temporal
correspondence \citep{zhu2017unpaired, wang2019learning}.
\citet{huh2020trsoden} impose time-reversal symmetry as a
\emph{training} regularizer, \citet{arik2022saf} attach an auxiliary
backcasting head, and \citet{chakraborty2022cycle} reverse predicted
trajectories as a training-time loss; our signal needs no auxiliary
head, applies at test time, and directly checks the generative
dynamics. (\emph{Consistency models} \citep{song2023consistency} are a diffusion distillation technique, unrelated to the cycle consistency studied here.) The direct precedent is \citet{huang2023cycle}, who quantify uncertainty in inverse \emph{imaging} by cycling a \emph{known} forward operator with a learned inverse over a single, static problem, with supporting theory and an OOD application. Two differences define our setting: no analytic model exists in either direction. Both legs of the cycle are the same learned bidirectional generator, and the cycle closes over a $2i$-step autoregressive rollout in which both legs accumulate error with depth, which is why our central claim is quantitative: that $\cyc{i}$ tracks the depth-dependent $\err{i}$ (Sec.~\ref{sec:exp-corr}). Among rollout-time signals, DiffusionRollout \citep{yoo2026diffusionrollout} uses the \emph{dispersion} of repeated samples to schedule step sizes,
whereas $\cyc{i}$ is a \emph{functional check} computable from a
single deterministic cycle (compared head-to-head in
Table~\ref{tab:uq}). Table~\ref{tab:positioning} situates these
signals; no existing method converts its trust signal into a
correction of the prediction itself, and such adaptation is
deliberately out of scope here (Sec.~\ref{sec:discussion}), keeping
the detection claims cleanly falsifiable.

In related work, bidirectional diffusion \emph{bridges} share our one-network,
direction-flag design: BDBM \citep{kieu2025bidirectional} extends
denoising diffusion bridge models \citep{zhou2024denoising} so a single
network transports between paired domains in both directions, and
Bi-Bridge \citep{hua2026bibridge} exploits the symmetry of the DDBM
bridge as a training regularizer for low-light enhancement. Both of those works
independently observe that bidirectional training outperforms
direction specialists. In a bridge, however, both legs run between
\emph{given} endpoints, and arrival is guaranteed by construction
(Doob's $h$-transform), so composing forward and backward yields no
measurable defect; our backward leg is seeded by the model's own
predicted terminal pair and must return unanchored, which is precisely
what makes $\mathcal{C}_i$ informative.

\textbf{Generative models as scientific diagnostics.}
Conditional generative models increasingly serve as virtual
diagnostics for large experimental facilities, e.g., reconstructing
charged-particle beam phase space from sparse measurements
\citep{scheinker2024cdvae, scheinker2026phaseflow}. The bidirectional MHD application is developed in depth in a preliminary, application-focused preprint by the authors \citet{scheinker2026bidirectional}; the present paper isolates the round-trip consistency metric, quantifies its fidelity as an error estimator, and evaluates it beyond a single system.

% ============================ Background ==============================
%%%%%%%%%%%%%%%%%%%%%%%%%%%%%%%%%%%%%%%%%%%%%%%%%%%%%%
%%%%%%%%%%%%%%%%%%%%%%%%%%%%%%%%%%%%%%%%%%%%%%%%%%%%%%
%%%%%%%%%%%%%%%%%%%%%%%%%%%%%%%%%%%%%%%%%%%%%%%%%%%%%%
\section{Background}
%%%%%%%%%%%%%%%%%%%%%%%%%%%%%%%%%%%%%%%%%%%%%%%%%%%%%%
%%%%%%%%%%%%%%%%%%%%%%%%%%%%%%%%%%%%%%%%%%%%%%%%%%%%%%
%%%%%%%%%%%%%%%%%%%%%%%%%%%%%%%%%%%%%%%%%%%%%%%%%%%%%%
\label{sec:background}

\textbf{Problem setup.}
We observe trajectories $\xb_0,\dots,\xb_T$ of a dynamical system, with each state $\xb_t\in\mathbb{R}^{d\times d\times c}$ a multi-channel field snapshot (for MHD: density, pressure, velocity, and magnetic-field components). A surrogate learns the transition map and is deployed autoregressively: seeded with observed states, it feeds its own predictions back as inputs to reach depth $i$. The true rollout error at depth $i$, $\err{i}=\MSE(\zb_{t+i},\hat\zb_{t+i})$ (in latent space; decoded-field metrics in the supplementary), is what is needed and cannot be computed at deployment, because $\zb_{t+i}$ is unknown. Our goal is a test-time estimate of $\err{i}$ using only the
model itself.

\textbf{Latent encoding and diffusion.}
Following latent generative modeling \citep{rombach2022high}, each physical field is encoded independently by a $\beta$-VAE \citep{kingma2014vae, higgins2017beta} with a weak KL penalty
($\beta=10^{-3}$), mapping a $512\times512$ field to a $16\times16\times4$ latent ($256\times$ compression) that stays compact without collapsing detail; dynamics are learned in this latent space. On the latents we use standard denoising diffusion \citep{ho2020denoising, song2021score}: a network $\epsilon_\theta$ is trained to predict the noise injected by a fixed variance schedule, which defines the reverse-time sampler \citep{anderson1982}, with conditioning information entering $\epsilon_\theta$ alongside the noised latent. Our contribution lies in \emph{what} is conditioned on and \emph{how} the resulting bidirectionality is exploited.

% ============================= Method =================================
%%%%%%%%%%%%%%%%%%%%%%%%%%%%%%%%%%%%%%%%%%%%%%%%%%%%%%
%%%%%%%%%%%%%%%%%%%%%%%%%%%%%%%%%%%%%%%%%%%%%%%%%%%%%%
%%%%%%%%%%%%%%%%%%%%%%%%%%%%%%%%%%%%%%%%%%%%%%%%%%%%%%
\section{Bidirectional Latent Diffusion Dynamics}
%%%%%%%%%%%%%%%%%%%%%%%%%%%%%%%%%%%%%%%%%%%%%%%%%%%%%%
%%%%%%%%%%%%%%%%%%%%%%%%%%%%%%%%%%%%%%%%%%%%%%%%%%%%%%
%%%%%%%%%%%%%%%%%%%%%%%%%%%%%%%%%%%%%%%%%%%%%%%%%%%%%%
\label{sec:model}

A single denoiser $\epsilon_\theta$ learns the bidirectional transition
density
\begin{equation}
  \hat\zb_{t+\cd}\sim p_\theta\!\left(\zb_{t+\cd}\mid \zb_t,\,
  \zb_{t-\cd},\,\cd\right),\quad \cd\in\{+1,-1\},
  \label{eq:bidir}
\end{equation}
trained with the conditional noise-prediction loss
\begin{equation}
  \mathcal{L}(\theta)=\mathbb{E}_{t,\cd,k,\epsilon}
  \big\|\epsilon-\epsilon_\theta\big(\zb^{(k)}_{t+\cd},k,\zb_t,
  \zb_{t-\cd},\cd\big)\big\|_2^2,
  \label{eq:loss}
\end{equation}
with physical time $t$ and direction $\cd$ drawn uniformly over training trajectories and $k$ the denoising step, $\zb^{(k)}$ denoting the correspondingly noised target. Drawing both signs of $\cd$ forces the shared weights to denoise consistently toward the future ($\cd=+1$, surrogate solver) and the past ($\cd=-1$, inverse solver). Rollouts compose \eqref{eq:bidir} autoregressively; decoded fields are $\hat\xb_t=\Dec(\hat\zb_t)$. The denoiser is a diffusion transformer \citep{peebles2023dit}
throughout ($n{=}2$ context frames unless noted): target, context, and
anchor frames are patchified into one token sequence, so conditioning
on past states acts through attention, while the scalar conditions ---
diffusion step, simulation-time index, and the direction flag $\cd$
--- modulate every block via adaLN-Zero (full architecture in
Supp.~Sec.~A). Throughout, ``forward'' and ``backward'' refer to the direction of
physical time $t$, selected by $c_d$, not to the diffusion process's
own noising/denoising steps, indexed by $k$; every physical step, in
either temporal direction, is produced by a complete denoising pass.

%%%%%%%%%%%%%%%%%%%%%%%%%%%%%%%%%%%%%%%%%%%%%%%%%%%%%%
%%%%%%%%%%%%%%%%%%%%%%%%%%%%%%%%%%%%%%%%%%%%%%%%%%%%%%
%%%%%%%%%%%%%%%%%%%%%%%%%%%%%%%%%%%%%%%%%%%%%%%%%%%%%%
\section{Round-Trip Consistency}
%%%%%%%%%%%%%%%%%%%%%%%%%%%%%%%%%%%%%%%%%%%%%%%%%%%%%%
%%%%%%%%%%%%%%%%%%%%%%%%%%%%%%%%%%%%%%%%%%%%%%%%%%%%%%
%%%%%%%%%%%%%%%%%%%%%%%%%%%%%%%%%%%%%%%%%%%%%%%%%%%%%%
\label{sec:consistency}

Let $\fwd^{\,i}$ denote an $i$-step forward rollout seeded by the true
pair $(\zb_{t-1},\zb_t)$ and $\bwd^{\,i}$ the $i$-step backward rollout
seeded by the terminal predicted pair. An error-free model satisfies
$\bwd^{\,i}\circ\fwd^{\,i}=\mathrm{Id}$. Round-trip consistency
error at depth $i$ is
\begin{equation}
  \cyc{i}=\tfrac12\Big[\MSE\big(\zb_{t-1},\tilde\zb^{(i)}_{t-1}\big)
  +\MSE\big(\zb_{t},\tilde\zb^{(i)}_{t}\big)\Big],
  \label{eq:cyc}
\end{equation}
where $\tilde\zb^{(i)}$ are the returned seeds. Every quantity in \eqref{eq:cyc} is available at test time, the
anchor pair is encoded measured data, the returned pair is produced
entirely by the model, in contrast to $\err{i}$. Given a tolerance $\tau$, a rollout is trusted to the largest depth with $\cyc{i}\le\tau$: an early-stop criterion for autoregressive generation.

\textbf{Properties and cost.}
$\cyc{i}$ costs one backward rollout per checked depth --- a $2\times$
inference overhead, with no change to
training. We use deterministic DDIM sampling throughout, which makes
the cycle well defined; averaging $S$ stochastic cycles is a variant
whose spread we compare as a dispersion baseline
(Table~\ref{tab:uq}). The check is necessary, not sufficient: Sec.~\ref{sec:theory} bounds
when small $\cyc{i}$ certifies small $\err{i}$, and the experiments
measure how often cancellation bites in practice.

\subsection{When Does the Proxy Certify the Error?}
\label{sec:theory}

Because the model is second-order Markov, the natural state is the pair $\sv_k:=(\zb_{t+k-1},\zb_{t+k})\in\mathbb{R}^{2n}$, with the normalized norm $\pnorm{\sv}:=\big(\tfrac{1}{2n}\|\sv\|_2^2\big)^{1/2}$ chosen so that $\cyc{i}=\pnorm{\sv_0-\tilde\sv_0}^2$ matches Eq.~\eqref{eq:cyc} exactly. With a deterministic sampler one model step is a map on pairs: we write $\fwd,\bwd:\mathbb{R}^{2n}\!\to\!\mathbb{R}^{2n}$ for the forward and backward steps, so the forward rollout is $\hat\sv_i=\fwd^{\,i}(\sv_0)$, the returned seed is $\tilde\sv_0=\bwd^{\,i}(\hat\sv_i)$, and the pair-level rollout error
is $\perr{i}:=\pnorm{\sv_i-\hat\sv_i}^2$, which dominates the
terminal-state error via $\err{i}\le 2\,\perr{i}$.

\begin{assumption}\label{ass:lip}
Let $\mathcal{D}\subset\mathbb{R}^{2n}$ contain the two backward
trajectories $\{\bwd^{\,k}(\sv_i)\}_{k=0}^{i}$ and
$\{\bwd^{\,k}(\hat\sv_i)\}_{k=0}^{i}$. On $\mathcal{D}$, $\forall a,b,$ and some $0<\mu\le L<\infty$, the backward step satisfies
\begin{equation}
  \mu\,\pnorm{a-b}\;\le\;\pnorm{\bwd(a)-\bwd(b)}\;\le\;L\,\pnorm{a-b}.
  \label{eq:bilip}
\end{equation}
Define the backward residual on true data,
$\delta_i:=\pnorm{\bwd^{\,i}(\sv_i)-\sv_0}$, i.e., the error of the
backward model when seeded with the \emph{true} terminal pair.
\end{assumption}

The upper inequality in \eqref{eq:bilip} is ordinary Lipschitz
continuity; the lower one, co-Lipschitz continuity, is precisely
the anti-cancellation condition: it forbids the backward map from
collapsing distinct terminal states onto the same returned seed
(unpacked in Supp.~Sec.~C).

\begin{proposition}\label{prop:sandwich}
Under Assumption~\ref{ass:lip},
\begin{equation}
  \Big(\max\big\{\mu^{\,i}\sqrt{\perr{i}}-\delta_i,\,0\big\}\Big)^{2}
  \le \cyc{i} \le
  \Big(L^{\,i}\sqrt{\perr{i}}+\delta_i\Big)^{2}.
  \label{eq:sandwich}
\end{equation}
\end{proposition}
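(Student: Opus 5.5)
The plan is to compare the two backward trajectories named in Assumption~\ref{ass:lip}: one seeded at the true terminal pair $\sv_i$, the other at the predicted terminal pair $\hat\sv_i$. Write $\tilde\sv_0=\bwd^{\,i}(\hat\sv_i)$ and $\bar\sv_0:=\bwd^{\,i}(\sv_i)$. By definition $\cyc{i}=\pnorm{\sv_0-\tilde\sv_0}^2$, $\delta_i=\pnorm{\bar\sv_0-\sv_0}$, and $\sqrt{\perr{i}}=\pnorm{\sv_i-\hat\sv_i}$. The argument is therefore a triangle inequality around the point $\bar\sv_0$, combined with an $i$-fold iteration of the bi-Lipschitz bound \eqref{eq:bilip}.

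\textbf{Key step: propagate \eqref{eq:bilip} through $i$ backward steps.} For each $k=0,\dots,i-1$, both points $\bwd^{\,k}(\sv_i)$ and $\bwd^{\,k}(\hat\sv_i)$ lie in $\mathcal{D}$ by hypothesis, so \eqref{eq:bilip} applies to them. It gives
\[
\mu\,\pnorm{\bwd^{\,k}(\sv_i)-\bwd^{\,k}(\hat\sv_i)}\le\pnorm{\bwd^{\,k+1}(\sv_i)-\bwd^{\,k+1}(\hat\sv_i)}\le L\,\pnorm{\bwd^{\,k}(\sv_i)-\bwd^{\,k}(\hat\sv_i)}.
\]
Inducting on $k$ then yields
\[
\mu^{\,i}\sqrt{\perr{i}}\le\pnorm{\bar\sv_0-\tilde\sv_0}\le L^{\,i}\sqrt{\perr{i}}.
\]
This is the only place where care is needed. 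The assumption is stated only on $\mathcal{D}$, not globally, and it covers exactly the iterates with $k\le i$, which are the ones used here. So the induction is legitimate, but I would state explicitly that only pairs of corresponding iterates are ever compared.

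\textbf{Upper bound.} Since $\pnorm{\cdot}$ is a scaled Euclidean norm, the triangle inequality applies:
\[
\pnorm{\sv_0-\tilde\sv_0}\le\pnorm{\sv_0-\bar\sv_0}+\pnorm{\bar\sv_0-\tilde\sv_0}\le\delta_i+L^{\,i}\sqrt{\perr{i}}.
\]
Both sides are nonnegative, so squaring preserves the inequality and gives the right half of \eqref{eq:sandwich}.

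\textbf{Lower bound.} The reverse triangle inequality gives
\[
\pnorm{\sv_0-\tilde\sv_0}\ge\pnorm{\bar\sv_0-\tilde\sv_0}-\pnorm{\bar\sv_0-\sv_0}\ge\mu^{\,i}\sqrt{\perr{i}}-\delta_i.
\]
The left side is also $\ge0$, so it dominates $\max\{\mu^{\,i}\sqrt{\perr{i}}-\delta_i,0\}$. Squaring two nonnegative quantities preserves order, which gives the left half of \eqref{eq:sandwich}.

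Taking the $\max$ with $0$ is essential here. Without it, squaring a possibly negative lower bound would break the monotonicity of the square and the inequality could fail.
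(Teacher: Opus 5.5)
Your proposal is correct and matches the paper's proof essentially step for step. The paper also iterates the bi-Lipschitz bound along the two backward trajectories; its separate lemma on $i$-step separation propagation gives $\mu^{i}\sqrt{\perr{i}}\le\pnorm{\bwd^{\,i}(\sv_i)-\tilde\sv_0}\le L^{i}\sqrt{\perr{i}}$, after which it applies the triangle inequality both ways around $\bwd^{\,i}(\sv_i)$, clips the lower branch at zero, and squares the nonnegative bounds.
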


\begin{corollary}\label{cor:certificate}
Under Assumption~\ref{ass:lip},
\begin{equation}
  \sqrt{\err{i}}\;\le\;\sqrt{2}\;\mu^{-i}
  \Big(\sqrt{\cyc{i}}+\delta_i\Big).
  \label{eq:certificate}
\end{equation}
Small $\cyc{i}$ certifies small true error when the learned backward map is well conditioned ($\mu$ not too small) and accurate on clean data ($\delta_i$ small) -- both properties of the model alone, estimable offline without test-time ground truth.
\end{corollary}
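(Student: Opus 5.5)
The corollary follows by inverting the lower half of the sandwich in Proposition~\ref{prop:sandwich} and then passing from the pair-level error to the terminal-state error. I would first prove that lower half, since it carries all the content.

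\textbf{Lower bound.} The triangle inequality in $\pnorm{\cdot}$ gives
\[
\pnorm{\sv_0-\tilde\sv_0}\;\ge\;\pnorm{\bwd^{\,i}(\sv_i)-\bwd^{\,i}(\hat\sv_i)}-\pnorm{\bwd^{\,i}(\sv_i)-\sv_0}.
\]
The last term is $\delta_i$ by definition.

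\textbf{Co-Lipschitz iteration.} For the first term, I would iterate the lower inequality in \eqref{eq:bilip} $i$ times. At step $k$ apply it to the pair $a=\bwd^{\,k}(\sv_i)$, $b=\bwd^{\,k}(\hat\sv_i)$. Both points lie on the two backward trajectories, and Assumption~\ref{ass:lip} puts those trajectories inside $\mathcal{D}$. This yields
\[
\pnorm{\bwd^{\,i}(\sv_i)-\bwd^{\,i}(\hat\sv_i)}\;\ge\;\mu^{i}\,\pnorm{\sv_i-\hat\sv_i}\;=\;\mu^{i}\sqrt{\perr{i}}.
\]
This is the one point needing care: I must check that every intermediate iterate used in the induction is among the listed trajectory points. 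They are, for $k=0,\dots,i-1$.

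\textbf{Assembling the corollary.} Combining the two displays gives $\sqrt{\cyc{i}}\ge\mu^{i}\sqrt{\perr{i}}-\delta_i$, using $\cyc{i}=\pnorm{\sv_0-\tilde\sv_0}^2$. Since $\mu>0$, rearranging gives
\[
\sqrt{\perr{i}}\;\le\;\mu^{-i}\big(\sqrt{\cyc{i}}+\delta_i\big).
\]
I would then invoke $\err{i}\le 2\,\perr{i}$, stated just before the assumption. It holds because $\perr{i}$ averages the squared errors of the two components of the pair with weight $\tfrac12$ each, so the terminal component's MSE is at most twice $\perr{i}$. Taking square roots gives $\sqrt{\err{i}}\le\sqrt2\,\sqrt{\perr{i}}$, which combined with the previous display is \eqref{eq:certificate}.

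\textbf{Main obstacle.} Nothing here is hard. The only real obstacle is bookkeeping the domain condition in the iteration step above, together with making sure the normalization of $\pnorm{\cdot}$ matches both $\cyc{i}$ and the factor $2$ in $\err{i}\le2\perr{i}$.
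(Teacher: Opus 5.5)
Your proposal is correct and follows the paper's own proof essentially step for step. The paper likewise proves $\mu^{i}\pnorm{\sv_i-\hat\sv_i}\le\pnorm{\bwd^{\,i}(\sv_i)-\bwd^{\,i}(\hat\sv_i)}$ by induction along the two backward trajectories contained in $\mathcal{D}$. It then inserts $\bwd^{\,i}(\sv_i)$ via the triangle inequality to obtain the unclipped lower branch $\sqrt{\cyc{i}}\ge\mu^{i}\sqrt{\perr{i}}-\delta_i$, rearranges, and finishes with $\err{i}\le 2\,\perr{i}$ from its pair-versus-frame lemma.
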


The proof combines \eqref{eq:bilip} with the triangle inequality
around $\bwd^{\,i}(\sv_i)$; the full derivation is in Supp.~Sec.~C.
Both constants are measurable offline: the residual $\delta_i$ is the detector's noise floor,
estimated on validation data (informative when $\cyc{i}\gg\delta_i^2$), and the factor $\mu^{-i}$ grows geometrically, so the certificate is meaningful at moderate turnaround depths and for well-conditioned backward maps: as $\mu\to 0$ the bound loosens as a collapsing
inverse can hide errors. Condition \eqref{eq:bilip} constrains the \emph{learned}
backward map on rollout-visited pairs only; no invertibility of the
underlying physical dynamics is assumed. Because \eqref{eq:sandwich}
pins $\sqrt{\cyc{i}}$ between two affine functions of
$\sqrt{\perr{i}}$, it justifies fitting monotone or heteroscedastic
Gaussian calibrators of the observed error on $\cyc{i}$
\citep{nix1994estimating, kuleshov2018accurate}, in the spirit of
\citet{huang2023cycle}; Sec.~\ref{sec:exp-calibration} fits exactly
such a calibrator. Estimation of $(\mu,L)$ along rollouts, certified
bi-Lipschitz architectures, and stochastic sampling are discussed in
Supp.~Sec.~C.

% =========================== Experiments ==============================
%%%%%%%%%%%%%%%%%%%%%%%%%%%%%%%%%%%%%%%%%%%%%%%%%%%%%%
%%%%%%%%%%%%%%%%%%%%%%%%%%%%%%%%%%%%%%%%%%%%%%%%%%%%%%
%%%%%%%%%%%%%%%%%%%%%%%%%%%%%%%%%%%%%%%%%%%%%%%%%%%%%%
\section{Experiments}
%%%%%%%%%%%%%%%%%%%%%%%%%%%%%%%%%%%%%%%%%%%%%%%%%%%%%%
%%%%%%%%%%%%%%%%%%%%%%%%%%%%%%%%%%%%%%%%%%%%%%%%%%%%%%
%%%%%%%%%%%%%%%%%%%%%%%%%%%%%%%%%%%%%%%%%%%%%%%%%%%%%%
\label{sec:experiments}

%\subsection{Setup}
\textbf{Data.}
Synthetic 2D compressible MHD trajectories with randomized initial
density, pressure, and velocity, and magnetic fields derived from a
random vector potential, simulated with a constrained-transport scheme
\citep{mocz2014constrained} at $512\times512$; 100 steps over 1\,s;
500 training and 50 held-out test trajectories (generation details in
Supp.~Sec.~A). Out-of-distribution evaluation uses the Orszag--Tang vortex \citep{orszag1979} with the model trained only on the synthetic distribution. \textbf{Baselines.} Dispersion-based UQ signals computed from the \emph{same} trained model: $S$-sample rollout spread \citep{lippe2023pderefiner, shu2024zeroshot}. MC-dropout \citep{gal2016dropout} and deep ensembles \citep{lakshminarayanan2017simple} are inapplicable to our single trained, dropout-free model without
retraining (Table~\ref{tab:uq}). LE-PDE-UQ \citep{wu2024lepdeuq} is evaluated head-to-head on its own Navier--Stokes benchmark in Sec.~\ref{sec:exp-lepdeuq}. \textbf{Metrics.} Per-trajectory and per-depth Spearman/Pearson correlation between
$\cyc{i}$ and $\err{i}$; AUROC for flagging high-error and OOD
rollouts; risk--coverage curves for selective prediction and
calibration of the $\tau$ rule; rollout MSE per field vs.\ depth for fidelity.

\subsection{Does the Proxy Track the Truth?}
\label{sec:exp-corr}
% THE key experiment -- the quantitative claim the arXiv version
% asserts but does not yet measure. [inference-only -- days]
The leftmost panel of Fig.~\ref{fig:fit_MHD} shows the raw relationship on the
held-out MHD trajectories; Sec.~\ref{sec:exp-calibration} quantifies
it as a calibrated predictor. Table~\ref{tab:uq} compares $\cyc{}$
against the natural sampling-dispersion alternative, and the two turn
out to be \emph{complementary}.
In-distribution, the $S{=}5$ spread is a slightly better magnitude
predictor ($-0.99$ vs.\ $-0.62$ nats): dispersion measures the
aleatoric width of the realized noise draw, at $5\times$
inference cost and only under stochastic sampling. Out of
distribution the two signals \emph{invert}: on the Orszag--Tang
vortex the model is confidently wrong, at depth $5$ its dispersion
is \emph{lower} than every in-distribution trajectory's (AUROC
$0.00$), ranking the OOD case as the safest in the batch, but
$\cyc{}$ flags it above all fifty (AUROC $1.00$). Conditional width
has no reason to grow under distribution shift; realized round-trip
drift must. $\cyc{}$ thus offers near-parity in-distribution at
$2\times$ cost, compatibility with deterministic deployment, and the
failure sensitivity a trust signal exists to provide; Sec.~\ref{sec:exp-lepdeuq} composes the two.

%\begin{figure}[t]
%\centering
%\includegraphics[width=0.7\columnwidth]{fig_corr_scatter.png}
%\caption{Round-trip consistency $\cyc{i}$ vs.\ true latent rollout
%error $\err{i}$ on the 50 held-out MHD trajectories, colored by depth $i$ (turnaround-anchored cycles; %deterministic sampler). Fixed-depth Spearman is $0.91$--$0.98$ (Sec.~\ref{sec:exp-calibration}).}
%\label{fig:corr-scatter}
%\end{figure}

\begin{table}[t]
\centering
\caption{Test-time signals vs.\ true rollout error on held-out MHD
trajectories. $\rho_{20}$: Spearman across trajectories at fixed depth
$i{=}20$; NLL / $\times_{68}$: identically-fit calibrator on the
held-out set; OOD: AUROC separating the Orszag--Tang vortex (single
canonical trajectory; percentile rank) from the 50 held-out
trajectories; cost: inference overhead. }
\label{tab:uq}
\footnotesize
\setlength{\tabcolsep}{3pt}
\resizebox{\columnwidth}{!}{%
\begin{tabular}{lccccc}
\toprule
Signal & $\rho_{20}$ $\uparrow$ & NLL $\downarrow$ & $\times_{68}$ &
OOD AUROC $\uparrow$ & Cost \\
\midrule
Round-trip $\cyc{}$ (ours) & $0.97$ & $-0.62$ & $1.14$ &
$\mathbf{1.00}$ ($i \le 10$) & $\mathbf{2\times}$ \\
$S{=}5$ rollout spread & $\mathbf{0.98}$ & $\mathbf{-0.99}$ &
$\mathbf{1.09}$ & $0.00$ ($i{=}5$) -- $0.50$ & $5\times$ \\
MC-dropout variance & \multicolumn{4}{c}{n/a --- no dropout layers} & --- \\
Deep ensemble & \multicolumn{4}{c}{n/a --- single trained model} &
$3\times$ train \\
\bottomrule
\end{tabular}}
\end{table}

\subsection{Calibrated Error Prediction}
\label{sec:exp-calibration}

Rank correlation shows that $\cyc{i}$ orders errors correctly; a
deployed trust signal should go further and \emph{predict} the error's
magnitude. We therefore fit a heteroscedastic Gaussian calibrator
\citep{nix1994estimating}, modeling $\log \err{i} \sim
\mathcal{N}\!\big(\mu(\log \cyc{i}),\,\sigma^2(\log \cyc{i})\big)$ with
polynomial $\mu$ and $\log\sigma$ on the training-trajectory rollouts;
degrees $(4,2)$ are selected by trajectory-level 5-fold
cross-validation with the one-standard-error rule (further safeguards in Supp.~Sec.~E), and the 50 held-out trajectories are touched
exactly once, for the evaluation reported here (4{,}900
depth--trajectory pairs). Two baselines are fit with the
\emph{identical} recipe: a \emph{depth-only} calibrator that predicts
$\err{i}$ from the depth $i$ alone, the critical control, since any
signal that merely grows with depth matches it and a global
constant.

Table~\ref{tab:calibration} summarizes held-out performance. The
$\cyc{i}$-based calibrator predicts the unobservable error to within a
factor of $1.14$ (68\%) and $1.29$ (95\%) of held-out points with
near-nominal coverage and a miscalibration area of $0.013$
\citep{kuleshov2018accurate}, at roughly half the interval width of
the depth-only baseline (mean $\log$-space $\sigma$ of $0.13$ vs.\
$0.26$). The headline is the proper-score margin: it improves held-out
log-likelihood by $1.0$ nat over the depth-only calibrator, and the
advantage persists \emph{at every probed depth} ($0.6$--$1.3$ nats at
$i \in \{5,10,20,40,80\}$), establishing that the self-supervised
signal carries per-trajectory information beyond the shared growth of
error with depth. Consistently, $\cyc{i}$ and $\err{i}$ are
rank-correlated within trajectories (Spearman $0.69 \pm 0.16$ across
held-out trajectories; $0.64 \pm 0.14$ on training rollouts) and across
trajectories at fixed depth ($0.91$--$0.98$ at
$i \in \{5,10,20,40,80\}$; $0.97$ at $i{=}20$, Table~\ref{tab:uq},
at or slightly above the reliability ceiling derived in
Supp.~Sec.~B; scatter shown in Supp.~Fig.~1).

\textbf{Transfer to decoded field-space errors.}
The quantity an operator ultimately cares about is the decoded
physical-field error, which interposes decoder reconstruction noise
between $\cyc{i}$ and the target. We therefore fit one calibrator per
field with the same recipe, taking $\log$ decoded per-field error as
the target: the gain over the identically-fit depth-only baseline is
positive for all six fields ($+0.32$ to $+0.73$ nats, mean $+0.59$),
with the true field error predicted to within a factor of
$1.17$--$1.30$ (68\%; per-field breakdown and discussion in
Supp.~Sec.~E, Table~1 there). The latent, self-supervised signal thus
remains a quantitative predictor all the way down to the decoded
physical quantities.

One limitation is visible and expected: while pooled calibration is
near-nominal, depth-conditional calibration drifts at the deepest
rollouts (per-depth miscalibration area up to $0.27$ at $i \ge 40$,
vs.\ $0.013$ pooled), this is the regime where
Corollary~\ref{cor:certificate}'s $\mu^{-i}$ factor predicts the
signal loosens. A trajectory-mean augmentation of the calibrator
recovers $+0.26$ nats and largely repairs the mid-depth drift (full
analysis in Supp.~Sec.~E).

\begin{table}[t]
\centering
\caption{Calibrated error prediction on held-out MHD trajectories
(50 trajectories $\times$ 98 depths); all calibrators fit on training
rollouts only, with the identical heteroscedastic recipe. NLL:
held-out Gaussian negative log-likelihood (nats); RMSE$_{\log}$: RMSE
of $\mu$ in log space; $\times_{68}$/$\times_{95}$: multiplicative factor containing the true error for 68\%/95\% of
points; Cov: empirical $\pm1\sigma$/$\pm2\sigma$ coverage (nominal
$68.3/95.4$); MCA: miscalibration area.}
\label{tab:calibration}
\footnotesize
\setlength{\tabcolsep}{4pt}
\resizebox{\columnwidth}{!}{%
\begin{tabular}{lcccccc}
\toprule
Calibrator & NLL $\downarrow$ & RMSE$_{\log}$ $\downarrow$ &
$\times_{68}$ & $\times_{95}$ & Cov$_{68/95}$ (\%) & MCA $\downarrow$ \\
\midrule
Global constant & $1.44$ & $1.02$ & $1.94$ & $9.83$ & $87.5\,/\,93.7$ & $0.095$ \\
Depth-only      & $0.38$ & $0.34$ & $1.41$ & $1.92$ & $51.8\,/\,84.3$ & $0.059$ \\
$\mu(\cyc{i})$ (ours) & $\mathbf{-0.62}$ & $\mathbf{0.13}$ &
$\mathbf{1.14}$ & $\mathbf{1.29}$ & $\mathbf{66.3\,/\,95.1}$ &
$\mathbf{0.013}$ \\
\bottomrule
\end{tabular}}
\end{table}

\begin{figure*}[h]
\centering
   \includegraphics[width=1.0\textwidth]{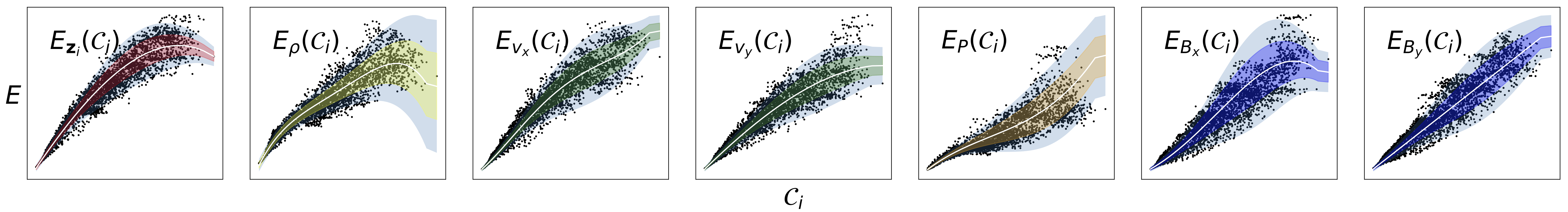}
  \caption{Predicted vs.\ true RMS errors on test data for the rolled-out
latents $\mathbf{z}_i$ and all six MHD fields: the true error (black
dots) is predicted as a function of $\mathcal{C}_i$ and falls within
$\pm 2\sigma$ bands fit from training data alone.}
  \label{fig:fit_MHD}
\end{figure*}

\subsection{Selective Prediction and OOD Detection}
\label{sec:exp-selective}
% [inference-only -- days]
Because $\mu(\cyc{i})$ is a calibrated prediction of the error
(Sec.~\ref{sec:exp-calibration}), it serves directly as a deferral score \citep{geifman2017selective}. Risk--coverage on the held-out trajectories (Supp.~Fig.~6): deferring the $20\%$ of predictions with the highest predicted error reduces the mean incurred latent error by $15\%$ (at $90\%$/$70\%$ coverage: $7\%$/$23\%$), where the critical control, deferring by depth alone (early
stopping), achieves only $5\%$ ($2\%$/$9\%$). The per-trajectory information certified in
nats by the calibrator is thus directly actionable.
For out-of-distribution detection we roll the model on the
Orszag--Tang vortex, dynamics it has never seen, and compare $\cyc{}$ against the 50 held-out in-distribution
trajectories; with a single canonical OOD trajectory, AUROC equals its
percentile rank among them. The alarm rings immediately: AUROC is
$1.0$ at depths $5$ and $10$ (the vortex's cycle error exceeds
\emph{every} in-distribution value), $0.98$ for the trajectory-mean
score, and weakens at mid depths ($0.64$--$0.88$) where
in-distribution errors approach the same saturation scale
(Supp.~Sec.~B), the ideal profile: an OOD flag is most valuable before compute is spent on a long
rollout (Fig. \ref{fig:ood-hist}).

\begin{figure}[t]
\centering
\includegraphics[width=1.0\columnwidth]{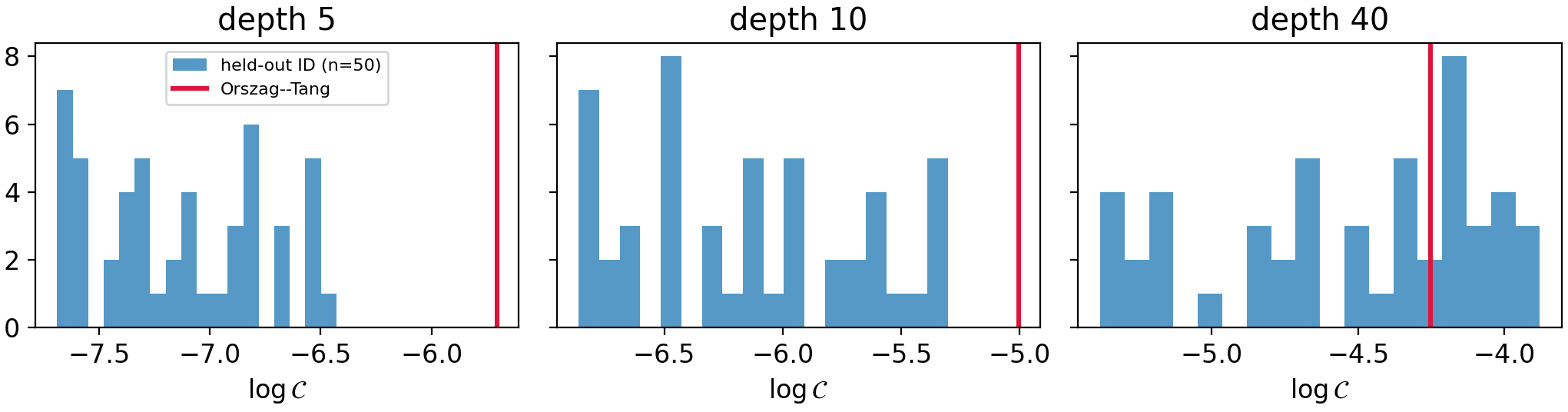}
\caption{Out-of-distribution detection: distribution of
$\log \cyc{}$ over the 50 held-out trajectories at three depths, with the Orszag--Tang vortex marked: total shallow depth separation.}
\label{fig:ood-hist}
\end{figure}

\subsection{Inverse Rollouts}
\label{sec:exp-inverse}
The bidirectionality is also a capability in its own right: setting $\cd{=}-1$ turns the trained surrogate into a fast inverse solver at no additional training cost. Seeded with only a true terminal pair $(\zb_{T-1},\zb_T)$ of a held-out trajectory, backward rollouts reconstruct the preceding plasma history (decoded-field reconstructions in Supp.~Fig.~2) - precisely the rollouts whose residual $\delta_i$ is the noise floor of Supp. Sec. C with backward error growing in depth comparably to the forward direction. Cycling a backward rollout \emph{forward} defines its own consistency error $\cyc{i}^{\,-}$, which tracks the backward rollout error just as $\cyc{i}$ tracks the forward one (Supp.~Fig.~3); the same round trips viewed in the latent space where all metrics are computed are shown in the forward roll out in Supp.~Fig.~4 and backward roll out in Supp.~Fig.~5.

\subsection{Beyond Physics: Natural Face Videos}
\label{sec:exp-faces}
As a probe, not a benchmark battle, we train the \emph{identical} architecture on natural video: face clips from CelebV-HQ \citep{zhu2022celebvhq} (35{,}666 clips, 15{,}653 identities). Frames are resized to $256\times256$ and encoded per-frame by a \emph{frozen}, pretrained Stable Diffusion image VAE \citep{rombach2022high} into
$32\times32\times4$ latents (no video VAE is trained; cf.\ the
per-field encoding of Sec.~\ref{sec:background}). The same
bidirectional denoiser \eqref{eq:bidir}-\eqref{eq:loss} then learns
facial dynamics. Identity-disjoint held-out clips supply true frames,
so $\err{i}$ is measurable in the frozen latent space exactly as in
the physics settings, and $\cyc{i}$ is computed by the same cycle.
Talking-head dynamics are only mildly time-asymmetric, which is what
makes the backward direction learnable here; strongly irreversible
content (e.g., pouring, smoke) is out of scope.
\begin{figure}[t]
\centering
\includegraphics[width=1.0\columnwidth]{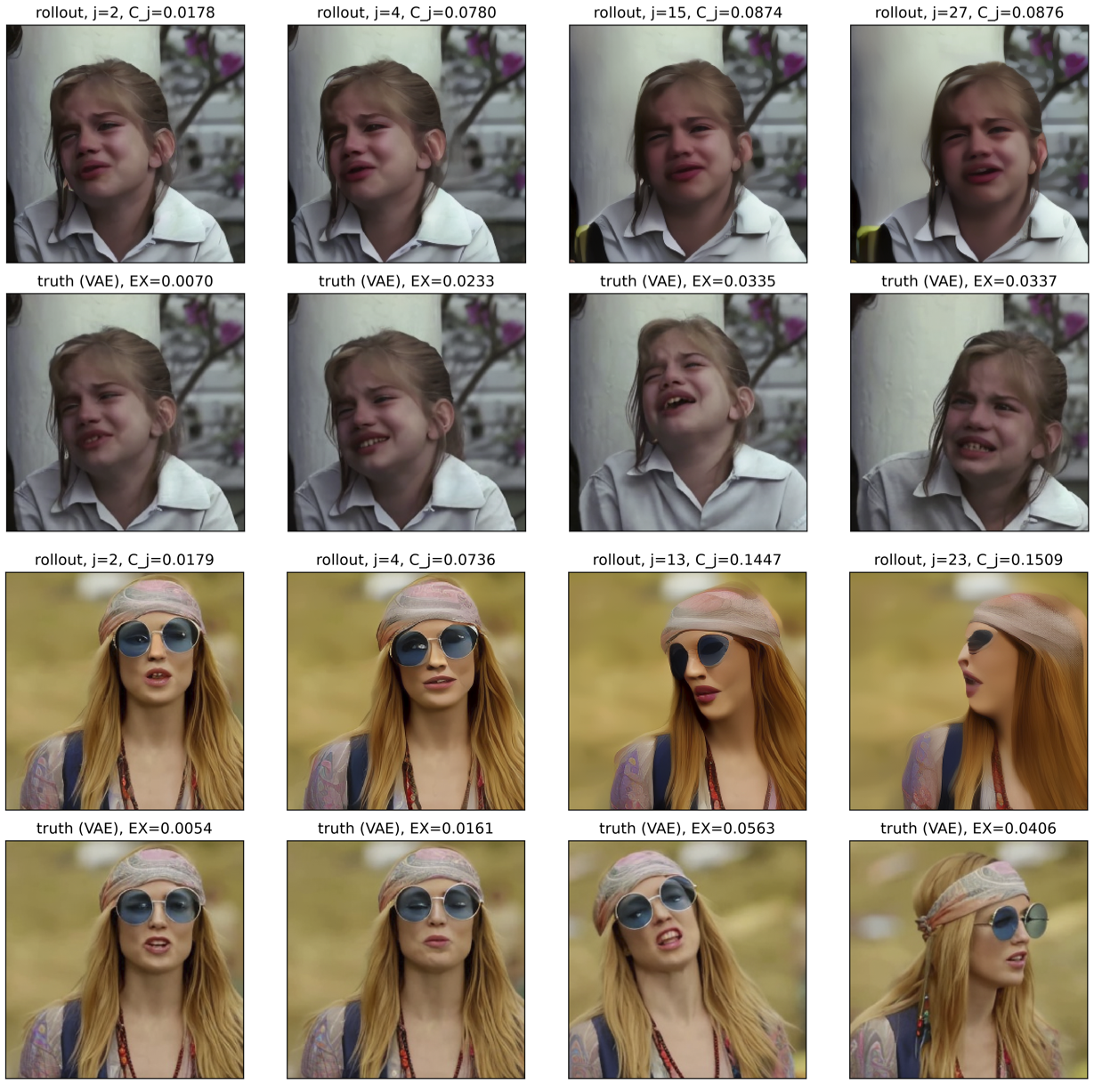}
\caption{$\cyc{}$ tends to detect when faces morph OOD.}
\label{fig:face-drift}
\end{figure}
Absolute rollout fidelity is deliberately modest (a competitive video
predictor requires scale beyond this probe), and talking-head futures
are intrinsically multimodal, so any single rollout's pointwise latent
error grows quickly against the one realized future, this is precisely the
regime in which a trust signal matters and, as
Sec.~\ref{sec:exp-calibration} anticipated, in which the
per-trajectory signal is strongest: clip-to-clip difficulty variation
dominates the metric noise. On 150 identity-disjoint held-out clips,
$\cyc{}$ ranks clips by error at fixed depth with Spearman
$0.79$-$0.81$ (latent) and $0.73$-$0.80$ (decoded pixels) at rollout
depths $3$-$21$ ($\approx 0.4$-$2.5$\,s), \emph{without decay at
depth}. Unlike the radiative layer, the clip factor never
saturates. The same calibration recipe transfers: a
\emph{two-parameter} calibrator (a single global line in $\log \cyc{}$ with constant scatter; selection and configuration in Supp.~Sec.~A.6)
fit on half the held-out clips predicts the disjoint half's latent
error within a factor of $1.28$ (68\%) with $95.3\%$ coverage at
$\pm 2\sigma$, a $+0.43$-nat gain over the depth-only baseline
($+0.34$ nats for decoded pixel error). The probe's
question is answered affirmatively: the model predicts the
growth of its own error, with no test-time ground truth, even when
its predictions have visibly drifted (Fig.~\ref{fig:face-drift}).

\subsection{Learned-Variance UQ on Navier--Stokes}
\label{sec:exp-lepdeuq}

\textbf{Setup.}
We compare directly against LE-PDE-UQ \citep{wu2024lepdeuq} on its own
benchmark: 2D incompressible Navier--Stokes in vorticity form on a unit
torus ($64^2$, $Re{=}10^4$; the standard neural-operator dataset of
\citealp{li2021fourier}), 1200 trajectories of 20 steps, using their exact
data files, splits (train = first 1000, test = last 200), and protocol: the past 10 steps predict the next, rolled out autoregressively
for 10 steps. Our pipeline is unchanged: a per-snapshot cVAE
($64^2 \!\to\! 8{\times}8{\times}8$) and the identical bidirectional
denoiser with $n{=}10$ context frames and deterministic 25-step DDIM
sampling. Two training additions (context/anchor noise; pushforward-style
context augmentation \citep{brandstetter2022message}) and checkpoint
selection by validation \emph{rollout} error are used; both are
motivated and disclosed in full in Supp.~Sec.~E. All uncertainty objects are fit on held-out calibration trajectories
from the validation split, not training trajectories whose
rollout error under-represents deployment error here by $4.8\times$ on
average. Final numbers produced on untouched test
split (full protocol, and the predicted-vs-measured test bands, in
Supp.~Sec.~E and Supp.~Fig.~8). Rollout error on this benchmark grows geometrically, $e_d \approx e_1\, r^{\,d-1}$ with $r \approx 1.21$ per step (fit within ${\sim}5\%$ at every depth; Supp.~Fig.~7), the depth-uniform
amplification the geometric factors of Sec.~\ref{sec:theory}
anticipate. A depth-only calibrator captures the shared rate $r$;
whatever distinguishes reliable from unreliable rollouts lives in the
per-trajectory deviations about this trend (cf.\ the variance
decomposition of Supp.~Sec.~B) which is what $\cyc{i}$ measures.

\textbf{Results.}
Table~\ref{tab:lepdeuq} reports exact metrics and pooling conventions: the \texttt{uncertainty\_toolbox} calibration metrics
\citep{chung2021uncertainty} (miscalibration area (MA), MACE, RMSCE) on
per-pixel predictions, per-trajectory block relative $L_2$, and global
MAE. A single bidirectional model attains a block relative $L_2$ of
$0.244$ on the untouched test split: within $1.16\times$ of
their best single model \emph{with uncertainty} ($0.211$) and
$1.29\times$ of their most accurate configurations (the ten-model
ensemble, $0.190$, and the single model without an uncertainty
head, $0.189$) at a tenth of the training cost, with a larger
MAE ($0.204$ vs.\ their $0.161$-$0.181$),
reflecting that metric's weighting of the late, high-amplitude frames of
this spin-up flow. Their single-model uncertainty is not free: the
$\sigma$ head costs $11.5\%$ accuracy ($0.189 \to 0.211$), whereas the
round-trip signal attaches to the unmodified model at zero accuracy
cost. Averaging $S{=}5$ independently seeded rollouts improves accuracy by
only $0.6\%$, the seed-to-seed dispersion is ${\sim}14\times$
smaller than the realized error: the mistakes are systematic given
context, the same blindness of self-disagreement that motivates
round-trip checking (Sec.~\ref{sec:exp-corr}). 

For calibration, converting the calibrated $\mu(\cyc{})$ into a
constant-per-frame pixel $\sigma$ lands at MA${}=0.100$, on par with
an identically fitted depth-only $\sigma$ ($0.090$): at this
benchmark's moderate heterogeneity (fixed-depth Spearman $0.32$--$0.59$), a frame-level
scalar from $\cyc{}$ adds little beyond depth. The signal's value
here arrives through \emph{shape} and \emph{composition}: the
forward--backward--forward cycle disagreement (one deterministic
model, $3\times$ rollouts) reaches $0.088$, and composing
$\mu(\cyc{})$'s scale with the seed spread's spatial shape reaches
MA${}=0.082$, the best training-free result, against $0.058$
for their learned single-model $\sigma$ and $0.014$ for their ten-model ensemble. Dispersion alone
is not calibration: their ten-model
spread-only reaches $0.182$ and our $S{=}5$ seed spread $0.118$,
both behind the $2\times$ depth-only ablation. The residual gap to
learned per-pixel uncertainty is within-frame heteroscedasticity that
no frame-level scalar represents ($z$-score diagnostics in
Supp.~Sec.~E). In latent space the same run completes the
paper's cross-system heterogeneity spectrum (between the radiative
layer's $0.11$ and the faces' $0.79$); latent-space correlations and
calibration are in Supp.~Sec.~E.

\begin{table}[t]
\centering
\caption{Head-to-head on LE-PDE-UQ's Navier--Stokes benchmark, their
metrics and pooling. LE-PDE-UQ rows: test split, as published. Our rows: one-shot test
evaluation ($n{=}200$ trajectories; all uncertainty objects fit on
the validation split). Cost:
training / inference relative to one model, one rollout.}
\label{tab:lepdeuq}
\footnotesize
\setlength{\tabcolsep}{2.5pt}
\resizebox{\columnwidth}{!}{%
\begin{tabular}{lcccccc}
\toprule
Signal & MA $\downarrow$ & MACE & RMSCE & $L_2$ $\downarrow$ & MAE & Cost \\
\midrule
\multicolumn{7}{l}{\emph{LE-PDE-UQ \citep{wu2024lepdeuq}}} \\
Latent single (no $\sigma$) & -- & -- & -- & $0.189$ & $0.161$ &
$1\times$ tr., no UQ \\
Latent single ($+\sigma$) & $0.058$ & $0.057$ & $0.065$ & $0.211$ &
$0.181$ & $1\times$ tr., $\sigma$ head \\
Latent ensemble (10, spread only) & $0.182$ & $0.181$ & $0.202$ &
$0.190$ & $0.161$ & $10\times$ tr., $10\times$ inf. \\
Latent ensemble (10, $+\sigma$) & $\mathbf{0.014}$ & $0.014$ & $0.016$ &
$0.190$ & $0.161$ & $10\times$ tr., $10\times$ inf. \\
\midrule
\multicolumn{7}{l}{\emph{Ours (single bidirectional model)}} \\
$\mu(\cyc{})$, constant $\sigma$ & $0.100$ & $0.099$ & $0.112$ &
$0.244$ & $0.204$ & $2\times$ rollouts \\
$\;+$ cycle-disagreement shape & $0.088$ & $0.087$ & $0.099$ & $0.244$
& $0.204$ & $3\times$ rollouts \\
$S{=}5$ seed spread alone & $0.118$ & $0.117$ & $0.133$ & $0.242$ &
$0.203$ & $5\times$ rollouts \\
$\mu(\cyc{})\times$ seed-spread shape & $\mathbf{0.082}$ & $0.081$
& $0.092$ & $0.242$ & $0.203$ & $5\times$ rollouts \\
depth-only $\sigma$ (ablation) & $0.090$ & $0.089$ & $0.102$ & $0.244$
& $0.204$ & $2\times$ rollouts \\
\bottomrule
\end{tabular}}
\end{table}

\subsection{Ablations}
\label{sec:exp-ablations}
\textbf{Does bidirectionality cost accuracy?}
The opposite, in the low-data regime. On the turbulent radiative layer (80 training trajectories) we train the identical architecture forward-only, backward-only, and bidirectionally ($p_{\mathrm{bwd}}{=}0.5$) at matched compute, evaluated direction-matched with fixed noise (10
seeds; mean $\pm$ SE; Supp.~Sec.~D). The shared model outperforms each specialist \emph{on the specialist's own direction}: $0.0630 \pm 0.0006$ vs.\ $0.0675 \pm 0.0007$ forward and $0.0621 \pm 0.0006$ vs.\ $0.0689 \pm 0.0007$ backward, a $7$-$10\%$ improvement despite seeing only half as many examples per direction, and it overfits markedly later (Supp.~Sec.~D). 

Each trajectory supplies transitions in both temporal directions, so the direction flag acts as free augmentation. The specialists fail severely off-direction ($5$-$7\times$ higher loss), confirming the round-trip check requires both directions: the structure that enables $\cyc{i}$ thus comes at negative cost: one network, half the parameters of two specialists, better accuracy in both directions. A linear analysis explains the mechanism: for stationary dynamics the time-reversed map is a fixed reparameterization of the forward one ($\Sigma A^{\!\top}\Sigma^{-1}$; exactly $A^{\!\top}$ for whitened latents), so the direction flag implements a correctly-specified weight tying whose estimation variance is up to half a specialist's, a gain scaling as $1/n$, largest in the low-data regime measured here (Supp.~Sec.~D). On the turbulent radiative layer the situation inverts instructively: the per-trajectory factor is smaller ($\pm 11\%$ vs.\ MHD's $\pm 28\%$) and is masked by $\cyc{i}$'s own per-depth noise, the variance decomposition of Supp.~Sec.~B bounds the observable
fixed-depth correlation at $\approx 0.12$, matching the measured
$0.11$ ($n{=}72$), so a depth-only calibrator is already
near-optimal here. The per-trajectory signal is thus valuable in
proportion to how sharply trajectories differ relative to the metric's
noise: large on MHD, marginal here, and maximal by construction for
natural video, where clip-to-clip heterogeneity dominates.

% ====================== Discussion and Conclusion =====================
%%%%%%%%%%%%%%%%%%%%%%%%%%%%%%%%%%%%%%%%%%%%%%%%%%%%%%
%%%%%%%%%%%%%%%%%%%%%%%%%%%%%%%%%%%%%%%%%%%%%%%%%%%%%%
%%%%%%%%%%%%%%%%%%%%%%%%%%%%%%%%%%%%%%%%%%%%%%%%%%%%%%
\section{Discussion and Conclusion}
%%%%%%%%%%%%%%%%%%%%%%%%%%%%%%%%%%%%%%%%%%%%%%%%%%%%%%
%%%%%%%%%%%%%%%%%%%%%%%%%%%%%%%%%%%%%%%%%%%%%%%%%%%%%%
%%%%%%%%%%%%%%%%%%%%%%%%%%%%%%%%%%%%%%%%%%%%%%%%%%%%%%
\label{sec:discussion}

Round-trip consistency turns a structural property of one network
representing both temporal directions of a system into a practical
trust signal: an error meter that travels with the model, needs no
ground truth, ensembles, or governing equations, and costs one extra
rollout. The evidence is quantitative: the meter predicts held-out MHD
rollout error to within a factor of $1.14$ ($68\%$) with near-nominal
coverage, and a single bidirectional model approaches a ten-model
ensemble's accuracy at a tenth of its training cost
(Secs.~\ref{sec:exp-calibration} and~\ref{sec:exp-lepdeuq}).
Bidirectional diffusion models can, in a precise and calibrated sense,
predict their own rollout errors. The approach has clear limitations.
The check is necessary rather than sufficient: cancellation is
empirically rare ($95.1\%$ of held-out points within the calibrated
$\pm2\sigma$ band), but a small $\cyc{i}$ is evidence, not
certification, and Proposition~\ref{prop:sandwich} makes the
dependence on backward-map conditioning explicit. Backward dynamics of
strongly dissipative systems are ill-posed in the continuum; the
learned inverse remains informative at the horizons studied, but very
long horizons and stiff systems deserve dedicated study. And $\cyc{i}$
lives in latent space; relating it to physically calibrated
field-space metrics is a natural refinement. Because $\cyc{i}$ is a
measurable scalar cost, it can in principle be \emph{minimized} online
by gradient-free adaptive feedback \citep{scheinker2016bounded},
turning error detection into error correction at test time;
ongoing work beyond this paper's scope. The principle may also extend
to discrete sequence models, where time-reversed language models
already provide unsupervised feedback \citep{yerram2024trlm}; the
severe non-injectivity of language ($\mu\to0$ in
Sec.~\ref{sec:theory}) makes that a distinct and interesting regime. This method requires the two temporal directions to be learned independently; an architecture whose backward map is the exact algebraic inverse of its forward map closes every round trip by construction and measures nothing.

\section*{Acknowledgments}
This work was funded by Los Alamos National Laboratory and TPUs were provided by Google's TPU developer program.
% ---------------------------------------------------------------------
%%%%%%%%%%%%%%%%%%%%%%%%%%%%%%%%%%%%%%%%%%%%%%%%%%%%%%
%%%%%%%%%%%%%%%%%%%%%%%%%%%%%%%%%%%%%%%%%%%%%%%%%%%%%%
%%%%%%%%%%%%%%%%%%%%%%%%%%%%%%%%%%%%%%%%%%%%%%%%%%%%%%

%%%%%%%%%%%%%%%%%%%%%%%%%%%%%%%%%%%%%%%%%%%%%%%%%%%%%%
%%%%%%%%%%%%%%%%%%%%%%%%%%%%%%%%%%%%%%%%%%%%%%%%%%%%%%
%%%%%%%%%%%%%%%%%%%%%%%%%%%%%%%%%%%%%%%%%%%%%%%%%%%%%%
\bibliography{aaai27_references_v7}
%%%%%%%%%%%%%%%%%%%%%%%%%%%%%%%%%%%%%%%%%%%%%%%%%%%%%%
%%%%%%%%%%%%%%%%%%%%%%%%%%%%%%%%%%%%%%%%%%%%%%%%%%%%%%
%%%%%%%%%%%%%%%%%%%%%%%%%%%%%%%%%%%%%%%%%%%%%%%%%%%%%%

\setcounter{secnumdepth}{2} % kit default is 0, but this paper uses

\noindent This document contains the experimental configurations, additional
analyses, and full derivations supporting the main paper. Works cited only
here appear in this document's own reference list.

\clearpage

\appendix

\begin{center}
    \LARGE \bf Supplementary Materials: \\ Round-Trip Consistency: Bidirectional Diffusion Models Can Predict Their Own Rollout Errors \\ \vspace{0.5em}
    \large Appendix
\end{center}
\vspace{2em}

%%%%%%%%%%%%%%%%%%%%%%%%%%%%%%%%%%%%%%%%%%%%%%%%%%%%%%
%%%%%%%%%%%%%%%%%%%%%%%%%%%%%%%%%%%%%%%%%%%%%%%%%%%%%%
%%%%%%%%%%%%%%%%%%%%%%%%%%%%%%%%%%%%%%%%%%%%%%%%%%%%%%
\section{Experimental Configurations}
%%%%%%%%%%%%%%%%%%%%%%%%%%%%%%%%%%%%%%%%%%%%%%%%%%%%%%
%%%%%%%%%%%%%%%%%%%%%%%%%%%%%%%%%%%%%%%%%%%%%%%%%%%%%%
%%%%%%%%%%%%%%%%%%%%%%%%%%%%%%%%%%%%%%%%%%%%%%%%%%%%%%
\label{app:configs}

%%%%%%%%%%%%%%%%%%%%%%%%%%%%%%%%%%%%%%%%%%%%%%%%%%%%%%
\subsection{Embedding into Latent Space}
%%%%%%%%%%%%%%%%%%%%%%%%%%%%%%%%%%%%%%%%%%%%%%%%%%%%%%
\label{cVAE}

The first step for each of the data sets that we work with is to compress to a lower dimensional latent space by using a variational autoencoder (VAE) or a conditional variational autoencoder (cVAE) \cite{kingma2014vae, sohn2015cvae}. For the CelebV-HQ dataset, we utilize a pre-trained Stable Diffusion VAE \cite{rombach2022high}, whose architecture is inherited from a VQGAN approach \cite{esser2021taming}. The model was aquired using the diffusers library from Huggingface \cite{von-platen-etal-2022-diffusers}, class name AutoencoderKL, diffusers version 0.4.2 with silu activation functions and block out channels $[128, 256, 512, 512]$, 4 latent channels, 2 layers per residual block, and norm groups of size 32.

For all other data, each of the independent multi-channel inputs, such as the six fields of MHD, or the four fields of the turbulent radiative layer, a single cVAE is used for each field, guided by a one-hot encoded conditional vector via Feature-wise Linear Modulation (FiLM) \cite{perez2018film}. Each cVAE uses multiple residual blocks at each stage of compression and our scale/shift 
\begin{equation}
    \mathrm{FiLM}(\mathbf{x}) = \gamma(\mathbf{c}) \odot \mathbf{x} + \beta(\mathbf{c})
\end{equation}
is applied to the GroupNorm output inside residual blocks, as done in adaptive group normalization (AdaGN) used in modern diffusion U-Nets \cite{dhariwal2021diffusion}. Single-head spatial self-attention \cite{vaswani2017attention}, equivalent to a non-local block \cite{wang2018nonlocal}, is applied at the lowest resolution on both sides of the latent bottleneck. 

For The Well, the cVAE trained for 500 epochs, with a batch size of 64, with Adam, with a learning rate of $1e-4$, KL divergence weight of $\beta = 1e-3$, 16 latent channels, 32 input activations which were then multiplied by factors of $[2,4,8,16,32]$ through the encoder and reverse of that through the decoder, resulting in a latent space representation of size $4\times 12 \times 16$ of each input field image of size $128 \times 384 \times 1$, with 2 residual blocks at each resolution and attention at the pinch point with 4 heads, with a total parameter count of 240 M.

For the Navier Stokes / LE-PDE-UQ cVAE, it was trained for 500 epochs with a batch size of 32 and then an additional 500 epochs with a batch size of 8, each with Adam with a learning weight of $1e-4$, KL weight of $1e-4$, two residual blocks per resolution in the encoder and decoder, and 4 attention heads in the attention layers on either side of the pinch. The model has 16 input activations followed by 3 down and up stages whose activation numbers are 32, 64, and 128, with a latent channel size of 8, so that an input image of size $64\times 64 \times 1$ is encoded into a latent of size $8\times 8 \times 8$. The cVAE had a total parameter count of 3.9 M.

The MHD work was all done on a Google V6e TPU, all other work was done on NVIDIA H200 GPUs.

%%%%%%%%%%%%%%%%%%%%%%%%%%%%%%%%%%%%%%%%%%%%%%%%%%%%%%
\subsection{Denoiser Architecture and Conditioning}
%%%%%%%%%%%%%%%%%%%%%%%%%%%%%%%%%%%%%%%%%%%%%%%%%%%%%%
\label{app:denoiser}

All experiments use a denoising diffusion model \citep{ho2020denoising} with the same transformer denoiser setup; only the latent geometry $(H_\ell, W_\ell, F)$ changes between systems. The denoiser is a diffusion transformer (DiT) \cite{peebles2023scalable}: the noisy target frame, the $N$ nearest context frames, and an anchor frame are decomposed into $2\times2$ patches, linearly embedded, and concatenated into a single token sequence with learned positional embeddings, so that self-attention acts jointly over space and a short temporal window. The scalar conditions (diffusion timestep, the simulation-time index of the target, and a temporal direction flag) are fused into one conditioning vector (sinusoidal embeddings for the two timesteps, a learned two-entry embedding for the direction) that modulates every transformer block via adaLN-Zero \cite{peebles2023scalable}. The direction flag makes a single network bidirectional: it is trained on a mixture of forward and reverse-time examples in which the anchor is the initial condition $z_0$ when predicting $z_{t+1}$ from the $N$ frames ending at $z_t$ (direction $+1$), and the final state $z_{T-1}$ when predicting $z_{t-1}$ from the $N$ frames ending at $z_{t+1}$ (direction $-1$); context slots are ordered nearest-last, so each slot carries the same distance-to-target meaning in both directions. Training minimizes the standard $\epsilon$-prediction objective \cite{ho2020denoising} under a cosine noise schedule \cite{nichol2021improved} with min-SNR-$\gamma$ loss weighting ($\gamma = 5$) \cite{hang2023efficient}, and an exponential moving average of the parameters is retained for sampling. At inference, each frame is drawn with a deterministic DDIM sampler \cite{song2021denoising} and the model is rolled out autoregressively in either time direction, appending each generated frame to the sliding context window, in the spirit of autoregressive conditional diffusion emulators for turbulent flow \cite{kohl2026benchmarking}.

\paragraph{Tokenization.}
Each latent frame is a channels-last array of shape $H_\ell \times
W_\ell \times F$. The input to the denoiser is the ordered frame list
$[\,\text{noised target},\ \text{context}_1, \dots, \text{context}_n,\
\text{anchor}\,]$, with contexts ordered farthest-to-nearest (the
frame adjacent to the target is last) and the anchor fixed to $z_0$
for forward and $z_{T-1}$ for backward prediction; near a trajectory
boundary, missing context slots are padded with the anchor frame.
Every frame is split into non-overlapping $p \times p$ patches
($p{=}2$), giving $(H_\ell/p)(W_\ell/p)$ tokens of dimension $p^2 F$
per frame, each mapped by a shared linear layer to the transformer
width. The $(n{+}2)$ frames' tokens are concatenated along the
sequence axis, and a single learned positional embedding (initialized
$0.02\,\mathcal{N}(0,\mathbf{I})$) over the full sequence encodes both
spatial location and frame role, since each frame occupies a fixed
block of positions. Conditioning on past states therefore requires no
extra machinery: the target's tokens attend to the context and anchor
tokens through ordinary self-attention.

\paragraph{Scalar conditioning (adaLN-Zero).}
The three scalars: denoising step $k$, the target frame's simulation-time index $t$, and the direction flag $\cd \in \{+1,-1\}$ form a single conditioning vector
\begin{eqnarray}
c &=& \mathrm{MLP}\!\left( \mathrm{SiLU} \left [ W_k\,
\mathrm{sin}(k)\right ] + \mathrm{SiLU} \left [ W_t\, \mathrm{sin}(t)\right ] \right . \nonumber \\
&& \left .+ E_d[\mathds{1}_{\cd<0}] \right ) \in \mathbb{R}^{256},
\end{eqnarray}
where $\mathrm{sin}(\cdot)$ is a sinusoidal embedding and $E_d$ a learned two-entry table. Following adaLN-Zero \citep{peebles2023dit}, each transformer block applies pre-norm LayerNorm without affine parameters and derives, from $c$ through a zero-initialized linear map, six per-block vectors: shift, scale, and residual gate for the attention sublayer and again for the MLP sublayer. Zero initialization makes every block the identity at the start of training. The final layer applies a two-parameter adaLN modulation followed by a zero-initialized linear projection back to patch dimension; only the \emph{target} frame's tokens pass through it and are un-patchified into the noise prediction $\hat\epsilon \in \mathbb{R}^{H_\ell \times W_\ell \times F}$.
 
\paragraph{Block and size details.}
For the diffusion transformer, as describe in \ref{app:denoiser}, we used width 384, depth 12, 6 attention heads, MLP ratio 4 with GELU, conditioning width 256, patch size 2, fixed across all systems. For the face-video runs the attention uses qk-normalization (per-head LayerNorm applied to queries and keys before the dot product), which we found necessary to prevent attention-logit growth and the associated late-training divergence; the physics runs did not require it. Per system, with $n{=}2$ context frames the token sequence lengths are: MHD, $(H_\ell, W_\ell, F) = (16, 16, 24)$ (six fields, four latent channels each), $4 \times 64 = 256$ tokens; turbulent radiative layer, $(4, 12, F)$ with $F = 64$ ($4$ fields $\times$ $16$ latent channels), $4 \times 12 = 48$ tokens; face videos, $(32, 32, 4)$ (frozen Stable Diffusion VAE latents), $4 \times 256 = 1024$ tokens. Training uses the $\epsilon$-prediction objective of the main paper with min-SNR-$\gamma$ weighting ($\gamma{=}5$) and an exponential moving average of the weights (decay $0.999$); sampling uses deterministic DDIM. Optimization schedules and per-system data processing are given in the configuration appendices.

\paragraph{Parameter counts.}
For the CelebV-HQ diffusion model, it is working on latent images of size $32 \times 32 \times 4$ and this was the most challenging data set by far because of the multi-modal nature, with each videos actual dynamics independent of each other. For DDIM training, 1000 diffusion steps were used, and for sampling at inference time we used 30 steps. With a patch size of 2, hidden token dimension of 2048, 16 attention heads were used per transformer and the model was 16 transformers deep, the conditional embedding dimension used was 512, and the total parameter count was 912 million parameters.

For the MHD data, for DDIM training, 1000 diffusion steps were used, and for sampling at inference time we used 50 steps. The diffusion model had a depth of 12 with 6 attention heads each, a conditional dimension of 256, hidden token dimension of 384, conditional embedding dimension of 256, resulting in a total parameter count of 29 million parameters. 

For the Well data, the same setup was used as for MHD with 200 diffusion steps for DDIM training and 25 steps for sampling. The diffusion model had a depth of 12 with 6 attention heads each, a conditional dimension of 256, hidden token dimension of 384, conditional embedding dimension of 256, resulting in a total parameter count of 29 million parameters. 

For the Navier-Stokes data, the same setup was used as for MHD and Well with 200 diffusion steps for DDIM training and 25 steps for sampling. The diffusion model had a depth of 12 with 6 attention heads each, a conditional dimension of 256, hidden token dimension of 384, conditional embedding dimension of 256, resulting in a total parameter count of 29 million parameters.

%%%%%%%%%%%%%%%%%%%%%%%%%%%%%%%%%%%%%%%%%%%%%%%%%%%%%%
\subsection{2D Compressible MHD}
%%%%%%%%%%%%%%%%%%%%%%%%%%%%%%%%%%%%%%%%%%%%%%%%%%%%%%
\label{app:mhd}

For modeling 2D MHD, we utilize the tool and approach of P. Mocz \cite{mocz2014constrained}, which considers the conservation law dynamics
\begin{equation}
    \frac{\partial \mathbf{U}}{\partial t} + \nabla \cdot \mathbf{F}(\mathbf{U}) = 0,
\end{equation}
\begin{equation}
\mathbf{U} = \begin{pmatrix}
\rho \\
\rho\mathbf{v} \\
\rho e \\
\mathbf{B}
\end{pmatrix}, 
\quad 
\mathbf{F}(\mathbf{U}) = \begin{pmatrix}
\rho \mathbf{v} \\
\rho\mathbf{v}\mathbf{v}^T + p - \mathbf{B}\mathbf{B}^T\\
\rho e \mathbf{v} + p\mathbf{v} - \mathbf{B}(\mathbf{v}\cdot \mathbf{B}) \\
\mathbf{B} \mathbf{v}^T - \mathbf{v}\mathbf{B}^T,
\end{pmatrix}
\end{equation}
where $p=p_\mathrm{gas} + \frac{1}{2}\mathbf{B}^2$ is the total gas pressure, $e=u+\frac{1}{2}\mathbf{v}^2+\frac{1}{2\rho}\mathbf{B}^2$ is the total energy per unit mass, and $u$ is the thermal energy per unit mass.

For MHD, at each time step one MHD state is represented by six fields ($\rho, P, v_x, v_y, B_x, B_y$), having size $512\times 512 \times 6$. We split that into 6 single-channel inputs of size $512 \times 512 \times 1$ and then encode the first field with a conditional vector $[1,0,0,0,0,0]$, the second with $[0,1,0,0,0,0]$, $\dots$, and the sixth with $[0,0,0,0,0,1]$, with FiLM conditioning on each residual block of both the encoder and decoder, as described in \ref{cVAE}. Each channel is encoded into a $16{\times}16{\times}4$ latent ($\beta=10^{-3}$). The number of weights in the encoder's down-sampling layers of the cVAE are $[32, 64, 128, 256, 512]$, which is mirrored for the decoder, with 1 residual block per resolution and an 8 headed attention block on either side of the latent bottleneck. With this configuration our cVAE model has 192 million parameters. We use 500 train and 50 test trajectories where each trajectory consists of 100 steps over 1 second, as generated by using a spatial resolution of $512\times512$, with 100 steps over 1\,s via the constrained transport code \citep{mocz2014constrained}. The random vector potential generating the initial magnetic fields
is a mixture of 100 Gaussians. The model is trained for 1M steps with a batch size of 16. Latent roll out error vs round-trip consistency for the MHD data is shown, colored by turnaround depth, in Fig. \ref{fig:corr-scatter}. Examples of forward and backward autoregressive MHD field rollouts are shown in Fig. \ref{fig:inv-fields}.

\begin{figure}[t]
\centering
\includegraphics[width=0.9\columnwidth]{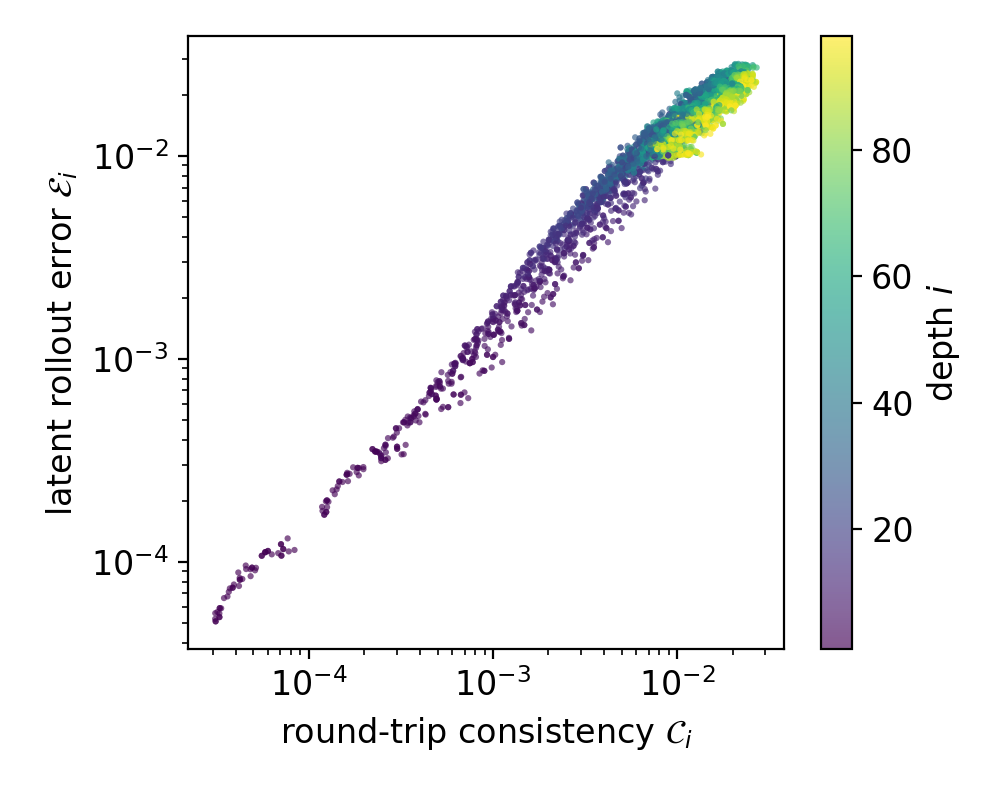}
\caption{Round-trip consistency $\cyc{i}$ vs.\ true latent rollout
error $\err{i}$ on the 50 held-out MHD trajectories, colored by depth $i$ (turnaround-anchored cycles; deterministic sampler). Fixed-depth Spearman is $0.91$--$0.98$.}
\label{fig:corr-scatter}
\end{figure}

%%%%%%%%%%%%%%%%%%%%%%%%%%%%%%%%%%%%%%%%%%%%%%%%%%%%%%
\subsection{External 2D Benchmark: Turbulent Radiative Layer (the Well)}
%%%%%%%%%%%%%%%%%%%%%%%%%%%%%%%%%%%%%%%%%%%%%%%%%%%%%%
\label{app:well}
\texttt{turbulent\_radiative\_layer\_2D} from the Well
\citep{ohana2024well}: an astrophysical radiative turbulent mixing
layer (Kelvin--Helmholtz instability between hot dilute and cold dense
gas with radiative cooling); 90 trajectories (10 seeds $\times$ 9
cooling times) of 101 steps at $128\times384$, four fields (density,
pressure, $v_x$, $v_y$), periodic in $x$ and zero-gradient in $y$. We
use the official train/valid/test split and report the benchmark's
standard VRMSE alongside our metrics for anchoring. Each field is
encoded by its own VAE with the recipe of Appendix~\ref{app:mhd},
yielding $4{\times}12{\times}4$ latents per field (rectangular latent
grids; no square-shape assumptions); the bidirectional denoiser
configuration is unchanged unless noted. 

\paragraph{Comparison to The Well baselines.}
Following the evaluation protocol of \citet{ohana2024well}, we report the
variance-scaled root mean squared error,
\begin{equation}
\mathrm{VRMSE}\left(\hat{u}, u\right) \;=\;
\sqrt{\frac{\big\langle (\hat{u}-u)^{2} \big\rangle_{\Omega}}
     {\big\langle (u-\langle u\rangle_{\Omega})^{2} \big\rangle_{\Omega} + \varepsilon}},
\qquad \varepsilon = 10^{-7},
\end{equation}
where $\langle\cdot\rangle_{\Omega}$ is the spatial mean over the $128\times384$
grid and the denominator uses the unbiased spatial variance, matching the
reference implementation. VRMSE is computed per snapshot and per field in
physical units---latent predictions are first decoded with the cVAE---and then
averaged, so a score of $1$ corresponds to predicting the spatial mean of the
target field. On single-step prediction for the
\texttt{turbulent\_radiative\_layer\_2D} test split, with each frame predicted
from the two preceding ground-truth frames, our model attains
$\mathrm{VRMSE}=0.59$ (density $0.21$, pressure $1.32$, $v_x$ $0.30$,
$v_y$ $0.54$), versus $0.50$ (FNO), $0.50$ (TFNO), $0.24$ (U-net), and $0.20$
(CNextU-net) for the baselines of \citet{ohana2024well}. Three protocol
differences favor the baselines: they condition on four past states rather
than our two; they operate at full spatial resolution, whereas our predictions
pass through a $64\times$-compressed latent bottleneck, cVAE reconstruction
of the ground truth alone yields $\mathrm{VRMSE}=0.30$ on this split ($0.67$
on pressure), an irreducible floor in this latent space that already exceeds
the total one-step error of the two strongest baselines---and they are
deterministic regressors, whereas we report a single stochastic sample.
Consistent with the per-field analysis of \citet{ohana2024well}, error is
concentrated in the pressure field; notably, this holds for the
reconstruction floor as well, and the one-step error is roughly twice the
floor in every field, indicating that the pressure field on this dataset is
intrinsically difficult both to compress and to predict. We emphasize that
the model was not tuned for single-step accuracy: our contribution concerns
forward--backward rollout consistency, for which competitive (rather than
state-of-the-art) single-step fidelity is enough. Over autoregressive
rollouts, seeded and conditioned throughout on a two-frame context, our
time-windowed VRMSE is $1.12$ over steps 6--12 and $2.86$ over steps 13--30,
versus $0.54$/$1.01$ for CNextU-net, $0.66$/$1.04$ for U-net, $1.79$/$3.54$
for FNO, and $6.01$/${>}10$ for TFNO: despite the latent bottleneck and
shorter context, our model outperforms both spectral baselines at every
horizon while trailing the convolutional ones. We note that pointwise VRMSE
structurally favors deterministic regressors at long horizons on chaotic
dynamics: as predictability is lost, a regressor reverting to the conditional
mean approaches a score of $1$, whereas an emulator drawing a sharp sample
that is statistically consistent with, but decorrelated from, the reference
incurs $\mathrm{VRMSE}\approx\sqrt{2}$ even in the ideal case. Part of the
long-horizon gap therefore reflects the metric's preference for
mean-reverting predictions rather than sample fidelity per se.

%%%%%%%%%%%%%%%%%%%%%%%%%%%%%%%%%%%%%%%%%%%%%%%%%%%%%%
\subsection{Navier-Stokes}
%%%%%%%%%%%%%%%%%%%%%%%%%%%%%%%%%%%%%%%%%%%%%%%%%%%%%%
\label{app:NavStok}
For the 2D turbulent radiative layer data, the governing equations are
\begin{eqnarray}
    \frac{\partial \rho}{\partial t} &=& - \nabla\cdot(\rho v), \\
    \frac{\partial \rho v}{\partial t} &=& - \nabla \cdot (\rho v v + P ), \\
    \frac{\partial E}{\partial t} &=& - \nabla \cdot ((E+P)v) - \frac{E}{t_{\mathrm{cool}}}, \\
    E &=& \frac{P}{\gamma-1}, \qquad \gamma = \frac{5}{3},
\end{eqnarray}
where $\rho$ is density, $v$ is 2D velocity, $P$ is pressure, $E$ is total energy, and $t_{\mathrm{cool}}$ the cooling time.

%%%%%%%%%%%%%%%%%%%%%%%%%%%%%%%%%%%%%%%%%%%%%%%%%%%%%%
\subsection{CelebV-HQ Face Videos}
%%%%%%%%%%%%%%%%%%%%%%%%%%%%%%%%%%%%%%%%%%%%%%%%%%%%%%
\label{app:faces}

\paragraph{Evaluation.} $\err{i}$ as latent-space MSE on
identity-disjoint test clips (decoded LPIPS as a secondary check);
$\cyc{i}$ as defined in the main paper, with the deterministic sampler
(S-averaged variant ablated). Some examples of autoregressive CELEBV-HQ rollouts and their $\cyc{i}$ values are shown in Fig. \ref{fig:CELEBVHQ_Rolls} and Fig. \ref{fig:CELEBVHQ_Rolls_More}.

%%%%%%%%%%%%%%%%%%%%%%%%%%%%%%%%%%%%%%%%%%%%%%%%%%%%%%
\section{Why the Per-Trajectory Signal Is Strong on MHD and Marginal on the Radiative Layer: A Variance Decomposition}
%%%%%%%%%%%%%%%%%%%%%%%%%%%%%%%%%%%%%%%%%%%%%%%%%%%%%%
\label{app:heterogeneity}

For each system we remove the depth trend from the log quantities and split the residual variance into a between-trajectory component (a per-trajectory difficulty factor) and within-trajectory fluctuation ($\sigma_b$, $\sigma_w$ below; ICC $= \sigma_b^2/(\sigma_b^2{+}\sigma_w^2)$), on the training ensembles ($n{=}100$ MHD, $n{=}72$ radiative layer):

\begin{center}
\footnotesize
\setlength{\tabcolsep}{4pt}
\begin{tabular}{llcccc}
\toprule
System & & $\sigma_b$ & $\sigma_w$ & ICC & rel. \\
\midrule
MHD         & $\err{}$ & $0.250$ & $0.074$ & $0.92$ & $0.96$ \\
            & $\cyc{}$ & $0.349$ & $0.118$ & $0.90$ & $0.95$ \\
Rad.\ layer & $\err{}$ & $0.104$ & $0.191$ & $0.23$ & $0.48$ \\
            & $\cyc{}$ & $0.057$ & $0.220$ & $0.06$ & $0.25$ \\
\bottomrule
\end{tabular}
\end{center}

Here reliability (rel.) $= \sigma_b / \sqrt{\sigma_b^2 + \sigma_w^2}$ is the correlation between a single-depth reading and the trajectory factor it estimates. Both systems possess a per-trajectory factor; its magnitude differs ($\pm 28\%$ vs.\ $\pm 11\%$ multiplicative), and so does the metric noise it must be read through.

\paragraph{Attenuation ceiling.} If the underlying factors of
$\cyc{}$ and $\err{}$ are correlated at $\rho_f$, the observable
fixed-depth correlation across trajectories is bounded by classical
measurement-error attenuation,
\begin{eqnarray}
\rho_{\mathrm{obs}} &=& \rho_f \cdot
\frac{\sigma_b^{\cyc{}}}{\sqrt{(\sigma_b^{\cyc{}})^2 +
(\sigma_w^{\cyc{}})^2}} \cdot
\frac{\sigma_b^{\err{}}}{\sqrt{(\sigma_b^{\err{}})^2 +
(\sigma_w^{\err{}})^2}} \nonumber \\
&\le& 0.91 \text{ (MHD)}, \quad 0.12 \text{ (rad.\ layer)}. \nonumber
\end{eqnarray}
The radiative-layer measurement matches its ceiling ($0.11$ at
$n{=}80$, averaged over probed depths), implying $\rho_f \approx 1$;
independently, the MHD calibrator's residual reduction ($0.34 \to
0.13$ log-units, $\approx 85\%$ of residual variance) back-solves to
$\rho_f \approx 0.96$--$1.0$. A single model therefore explains both
systems: \emph{the trajectory factors of $\cyc{}$ and $\err{}$ are
essentially perfectly coupled; what varies is whether the factor is
large enough, relative to the per-depth noise of a single cycle, to be
resolved}, trivially on MHD, marginally on the radiative
layer. The measured MHD fixed-depth Spearman ($0.91$-$0.98$ across
probed depths, $n{=}50$) sits at and slightly \emph{above} the $0.91$
ceiling: the static-factor model treats the within-trajectory
fluctuations of $\cyc{}$ and $\err{}$ as independent, but they share
the forward rollout leg and co-move, so the ceiling is conservative
where the model is accurate; at $i{=}80$, where error saturation
decouples the legs, the measurement ($0.85$ train / $0.91$ val)
matches the bound. Consistently, the pointwise $\mu(\cyc{})$
calibrator trails the identically fit depth-only calibrator by $0.78$
nats on the radiative layer's held-out test trajectories (with the
trajectory factor unresolvable, $\cyc{}$ acts only as a noisy depth
proxy) and $\cyc{i}$ retains only a weak within-rollout association
there (Spearman $0.36 \pm 0.13$, train). Averaging $\log \cyc{}$ over depths raises its reliability (to $\approx 0.93$ even on the radiative layer), capping the recoverable trajectory-mean gain there at $\approx +0.11$ nats; the measured gain on the nine-trajectory test split is $-0.008 \pm 0.005$, the nine calibration trajectories are too few to estimate a $\pm 11\%$ factor, while the same construction earns $+0.26$ nats on MHD. Incidentally, $\cyc{}$'s between-std exceeds $\err{}$'s on MHD: the round trip compounds forward and backward factors, making it a slightly amplified reading of trajectory difficulty.

%%%%%%%%%%%%%%%%%%%%%%%%%%%%%%%%%%%%%%%%%%%%%%%%%%%%%%
\section{Detailed Derivations for the Sandwich Bound}
%%%%%%%%%%%%%%%%%%%%%%%%%%%%%%%%%%%%%%%%%%%%%%%%%%%%%%
\label{app:theory-details}

\paragraph{Notation, written out.}
Each latent frame is $\zb\in\mathbb{R}^{n}$ (the flattened latent).
The pair state stacks two consecutive frames,
$\sv_k=(\zb_{t+k-1},\zb_{t+k})\in\mathbb{R}^{2n}$, and
$\pnorm{\sv}=\big(\tfrac{1}{2n}\|\sv\|_2^2\big)^{1/2}$ is the RMS over
all $2n$ entries, so that for any pair difference
\begin{eqnarray}
\pnorm{(\,a_1,a_2)-(b_1,b_2)}^2
&=& \tfrac{1}{2n}\big(\|a_1-b_1\|_2^2+\|a_2-b_2\|_2^2\big) \nonumber \\
&=& \tfrac12\big(\MSE(a_1,b_1) \nonumber \\
&& +\MSE(a_2,b_2)\big).
\end{eqnarray}
Applied to $(\sv_0,\tilde\sv_0)$ this is exactly the definition of
$\cyc{i}$ in the main paper:
the $\tfrac{1}{2n}$ normalization is chosen so pair-level and
frame-level (MSE) quantities interconvert without stray constants.
With a deterministic sampler, one model step is the shift-and-append
map on pairs,
\begin{equation}
\fwd(a,b)=\big(b,\,f^{+}_\theta(a,b)\big),\qquad
\bwd(a,b)=\big(f^{-}_\theta(a,b),\,a\big),
\label{eq:shiftappend}
\end{equation}
where $f^{\pm}_\theta$ is one full DDIM generation conditioned on the
pair (and the anchor and scalars, suppressed). The forward rollout is
$\hat\sv_i=\fwd^{\,i}(\sv_0)$, whose components we write
$\hat\sv_i=(\hat\zb_{t+i-1},\hat\zb_{t+i})$; by
\eqref{eq:shiftappend}, the older component of $\hat\sv_i$ equals the
newer component of $\hat\sv_{i-1}$, both are the same chain's
prediction of frame $t{+}i{-}1$.

\begin{lemma}[Pair error is the average of two frame errors]
\label{lem:pairframe}
With $\err{k}=\MSE(\zb_{t+k},\hat\zb_{t+k})$ and the convention
$\err{0}:=0$ (the seed is exact),
\begin{equation}
\perr{i}\;=\;\tfrac12\big(\err{i-1}+\err{i}\big),
\ \text{hence} \ 
\tfrac12\,\err{i}\;\le\;\perr{i}
\ \text{and} \
\err{i}\;\le\;2\,\perr{i},
\label{eq:pairframe}
\end{equation}
with $\err{i}=2\,\perr{i}$ iff $\err{i-1}=0$ (e.g., at $i{=}1$, where
the older slot of $\hat\sv_1$ is still the true seed $\zb_t$).
\end{lemma}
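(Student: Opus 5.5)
The plan is a direct computation from the definitions, using only the normalized-norm identity displayed just above the lemma and the shift-and-append structure \eqref{eq:shiftappend}. First, write the true pair $\sv_i=(\zb_{t+i-1},\zb_{t+i})$ and the predicted pair $\hat\sv_i=(\hat\zb_{t+i-1},\hat\zb_{t+i})$. Then apply the pair-difference identity with $a=\sv_i$ and $b=\hat\sv_i$:
\[
\perr{i}=\pnorm{\sv_i-\hat\sv_i}^2=\tfrac12\big(\MSE(\zb_{t+i-1},\hat\zb_{t+i-1})+\MSE(\zb_{t+i},\hat\zb_{t+i})\big).
\]

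The one point needing care is identifying the first slot of $\hat\sv_i$ with the same chain's prediction of frame $t{+}i{-}1$. By \eqref{eq:shiftappend}, $\fwd(a,b)=(b,f^+_\theta(a,b))$, so the older component of $\hat\sv_i$ is the newer component of $\hat\sv_{i-1}$. An induction on $i$ shows that the $k$th forward iterate carries the chain's $k$th generated frame in its newer slot. Hence the first term above is exactly $\err{i-1}$ as defined via the rollout's predictions. For $i=1$, the older slot of $\hat\sv_1$ is the seed frame $\zb_t$ itself, so that term vanishes, which is the convention $\err{0}:=0$. This gives $\perr{i}=\tfrac12(\err{i-1}+\err{i})$.

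The two inequalities then follow from nonnegativity of MSE: since $\err{i-1}\ge 0$, we get $\perr{i}\ge\tfrac12\err{i}$, which is equivalent to $\err{i}\le 2\perr{i}$. Equality holds iff $\err{i-1}=0$. In particular it holds at $i=1$, and more generally whenever the previous predicted frame is exact.

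I expect no real obstacle. The only subtle step is the bookkeeping that the rollout's older slot coincides with the previous step's prediction, rather than, say, a re-encoded true frame. This is what \eqref{eq:shiftappend} guarantees, and I will make the induction explicit there.
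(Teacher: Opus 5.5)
Your proposal is correct and follows essentially the same route as the paper's proof. Both expand $\perr{i}$ through the normalized pair norm into $\tfrac12$ of the two frame MSEs, identify the older slot of $\hat\sv_i$ as the chain's prediction of frame $t{+}i{-}1$ (the true seed $\zb_t$ when $i{=}1$), and drop the nonnegative $\err{i-1}$ to get the bound and the equality condition; your explicit induction just spells out the slot bookkeeping that the paper states in one line.
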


\begin{proof}
$\perr{i}=\pnorm{\sv_i-\hat\sv_i}^2
=\tfrac{1}{2n}\big(\|\zb_{t+i-1}-\hat\zb_{t+i-1}\|_2^2
+\|\zb_{t+i}-\hat\zb_{t+i}\|_2^2\big)
=\tfrac12(\err{i-1}+\err{i})$, using that the older component of
$\hat\sv_i$ is the chain's frame-$(t{+}i{-}1)$ prediction. Dropping
the nonnegative $\err{i-1}$ gives the bounds; the equality condition
is immediate.
\end{proof}

\paragraph{Statements recalled from the main paper.}
For self-containedness we restate the objects whose proofs are completed
here (numbering refers to the main paper). Assumption~1 posits constants
$0<\mu\le L<\infty$ such that, on the set $\mathcal{D}$ containing the two
backward trajectories, all pairs $a,b$ satisfy
\begin{equation}
  \mu\,\pnorm{a-b}\;\le\;\pnorm{\bwd(a)-\bwd(b)}\;\le\;L\,\pnorm{a-b},
  \label{eq:bilip-s}
\end{equation}
with the backward residual $\delta_i:=\pnorm{\bwd^{\,i}(\sv_i)-\sv_0}$.
Proposition~1 (the sandwich bound) states
\begin{equation}
  \Big(\max\big\{\mu^{\,i}\sqrt{\perr{i}}-\delta_i,\,0\big\}\Big)^{2}
  \;\le\;\cyc{i}\;\le\;
  \Big(L^{\,i}\sqrt{\perr{i}}+\delta_i\Big)^{2},
  \label{eq:sandwich-s}
\end{equation}
and Corollary~1 (the certificate) states
\begin{equation}
  \sqrt{\err{i}}\;\le\;\sqrt{2}\;\mu^{-i}\Big(\sqrt{\cyc{i}}+\delta_i\Big).
  \label{eq:certificate-s}
\end{equation}

\paragraph{The assumption, unpacked.}
Assumption~1 of the main paper, recalled in \eqref{eq:bilip-s},
constrains the backward step $\bwd$ only on
the set $\mathcal{D}$ of pairs actually visited by the two backward
trajectories in play from the true terminal pair and from the
predicted one not globally. The upper inequality is ordinary
Lipschitz continuity: one backward step cannot amplify a pair
discrepancy by more than $L$. The lower inequality (co-Lipschitz, or
expansiveness) says one backward step cannot \emph{shrink} a
discrepancy below a factor $\mu>0$. For differentiable $\bwd$ this is
equivalent to the singular values of its Jacobian lying in $[\mu,L]$
along the visited segment, and it is precisely the anti-cancellation
condition: cancellation means two different terminal pairs (true and
predicted) get mapped backward onto nearly the same returned seed,
which is exactly a collapse of distance that $\mu>0$ forbids. If the
learned backward map does collapse ($\mu\to0$), the certificate
weakens honestly: the bound \eqref{eq:certificate-s} blows up rather
than silently failing. The residual
$\delta_i=\pnorm{\bwd^{\,i}(\sv_i)-\sv_0}$ isolates the backward
model's \emph{own} inaccuracy on clean inputs: it is what the
round-trip would report even for a perfect forward rollout, i.e., the
detector's noise floor.

\begin{lemma}[$i$-step separation propagation]
\label{lem:iterate}
Let $u_k:=\bwd^{\,k}(\sv_i)$ and $v_k:=\bwd^{\,k}(\hat\sv_i)$ for
$k=0,\dots,i$, so $u_0=\sv_i$, $v_0=\hat\sv_i$, and
$v_i=\tilde\sv_0$. Under the bi-Lipschitz condition \eqref{eq:bilip-s},
for $k=0,\dots,i$,
\begin{equation}
\mu^{\,k}\,\pnorm{\sv_i-\hat\sv_i}\;\le\;\pnorm{u_k-v_k}\;\le\;
L^{\,k}\,\pnorm{\sv_i-\hat\sv_i}.
\label{eq:iterk}
\end{equation}
\end{lemma}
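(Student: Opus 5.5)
The plan is induction on $k$, applying the two-sided inequality \eqref{eq:bilip-s} once per backward step. For $k=0$ the claim is trivial, since $u_0=\sv_i$ and $v_0=\hat\sv_i$ and both bounds reduce to equality with $\pnorm{\sv_i-\hat\sv_i}$.

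For the inductive step, suppose \eqref{eq:iterk} holds at some $k<i$. By definition $u_{k+1}=\bwd(u_k)$ and $v_{k+1}=\bwd(v_k)$. The points $u_k$ and $v_k$ are members of the two backward trajectories $\{\bwd^{\,k}(\sv_i)\}_{k=0}^{i}$ and $\{\bwd^{\,k}(\hat\sv_i)\}_{k=0}^{i}$, and these are by Assumption~\ref{ass:lip} contained in $\mathcal{D}$. Since the assumption quantifies over all $a,b\in\mathcal{D}$, we may take $a=u_k$ and $b=v_k$ in \eqref{eq:bilip-s}, which gives
\[
\mu\,\pnorm{u_k-v_k}\le\pnorm{u_{k+1}-v_{k+1}}\le L\,\pnorm{u_k-v_k}.
\]
Chaining with the inductive hypothesis, multiplying the lower bound by $\mu>0$ and the upper bound by $L>0$ (both preserve the inequalities), yields the exponents $k+1$ on both sides. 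This closes the induction up to $k=i$.

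The only point needing care is the membership step. The assumption is stated on $\mathcal{D}$ only, not globally, so we must check that the pair $(u_k,v_k)$ is admissible. It is, precisely because $\mathcal{D}$ was defined to contain both trajectories for indices $0,\dots,i$; each step we invoke uses index $k\le i-1$, so it stays in range. Beyond this, the argument is a routine geometric iteration with no real obstacle. Evaluating at $k=i$, with $v_i=\tilde\sv_0$, then gives the separation estimate needed, together with the triangle inequality around $u_i=\bwd^{\,i}(\sv_i)$ and $\delta_i$, for Proposition~\ref{prop:sandwich}.
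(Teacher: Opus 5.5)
Your proposal is correct and matches the paper's proof. The paper also inducts on $k$ from the trivial base case, notes that $u_k,v_k\in\mathcal{D}$ by construction, applies \eqref{eq:bilip-s} once with $a=u_k$, $b=v_k$, and multiplies the inductive bounds by $\mu$ and $L$ to reach exponent $k+1$.
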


\begin{proof}
Induction on $k$. Base $k=0$: both sides equal
$\pnorm{\sv_i-\hat\sv_i}$. Step: $u_k,v_k\in\mathcal{D}$ by the
definition of $\mathcal{D}$ (it contains both backward trajectories),
so \eqref{eq:bilip-s} applies with $a=u_k$, $b=v_k$:
$\pnorm{u_{k+1}-v_{k+1}}=\pnorm{\bwd(u_k)-\bwd(v_k)}
\in\big[\mu\,\pnorm{u_k-v_k},\,L\,\pnorm{u_k-v_k}\big]$; multiplying
the inductive bounds by $\mu$ and $L$ respectively gives
\eqref{eq:iterk} at $k{+}1$.
\end{proof}

\paragraph{Proof of Proposition~1 of the main paper, in full.}
Set $k=i$ in Lemma~\ref{lem:iterate} and abbreviate
$P:=\sqrt{\perr{i}}=\pnorm{\sv_i-\hat\sv_i}$:
\begin{equation}
\mu^{\,i}P\;\le\;\pnorm{u_i-v_i}\;\le\;L^{\,i}P.
\label{eq:sep-at-i}
\end{equation}
The quantity of interest is
$\sqrt{\cyc{i}}=\pnorm{\sv_0-v_i}$ (the returned seed's distance to
the true seed). Insert $u_i$ and use the triangle inequality both
ways, together with $\pnorm{\sv_0-u_i}=\delta_i$ by definition:
\begin{align}
\pnorm{\sv_0-v_i}
&\le \pnorm{\sv_0-u_i}+\pnorm{u_i-v_i}
\;\le\; \delta_i + L^{\,i}P,
\label{eq:upper-branch}\\
\pnorm{\sv_0-v_i}
&\ge \pnorm{u_i-v_i}-\pnorm{u_i-\sv_0}
\;\ge\; \mu^{\,i}P-\delta_i.
\label{eq:lower-branch}
\end{align}
Since $\pnorm{\sv_0-v_i}\ge0$ always, \eqref{eq:lower-branch}
strengthens to
$\pnorm{\sv_0-v_i}\ge\max\{\mu^{\,i}P-\delta_i,\,0\}$. Both outer
bounds are now nonnegative, so squaring preserves the ordering and
yields \eqref{eq:sandwich-s}. \hfill$\square$

\paragraph{Proof of Corollary~1 of the main paper, in full.}
Inequality \eqref{eq:lower-branch} \emph{before} clipping rearranges,
for every case (whether or not the max clipped to zero), to
\begin{equation}
\mu^{\,i}\sqrt{\perr{i}}\;\le\;\sqrt{\cyc{i}}+\delta_i
\quad\Longrightarrow\quad
\sqrt{\perr{i}}\;\le\;\mu^{-i}\big(\sqrt{\cyc{i}}+\delta_i\big).
\end{equation}
Combining with Lemma~\ref{lem:pairframe},
$\sqrt{\err{i}}\le\sqrt{2\,\perr{i}}
=\sqrt{2}\,\sqrt{\perr{i}}
\le\sqrt{2}\,\mu^{-i}\big(\sqrt{\cyc{i}}+\delta_i\big)$,
which is \eqref{eq:certificate-s}. \hfill$\square$

\paragraph{Remarks on the constants.}
(i)~\emph{Everything is estimable offline.} $\delta_i$ is measured
directly on held-out data (backward rollouts from true terminal
pairs; reported in the experiments). $\mu$ and $L$ along visited
pairs are the extreme singular values of the backward step's
Jacobian, estimable by Jacobian--vector products or finite-difference
probes $\pnorm{\bwd(a+\epsilon r)-\bwd(a)}/(\epsilon\pnorm{r})$ over
random directions $r$ at pairs $a$ sampled from backward rollouts, these are model-only quantities requiring no test-time ground truth. See
\citet{virmaux2018lipschitz} for network-wide Lipschitz estimation.
(ii)~\emph{Tightness.} When $\delta_i\to0$ the sandwich pins the
ratio $\cyc{i}/\perr{i}$ into $[\mu^{2i},L^{2i}]$; if additionally
$\mu\approx L\approx1$ (the backward step nearly preserves pair
distances along the attractor), $\cyc{i}$ tracks $\perr{i}$ almost
proportionally, this is the regime the strong empirical correlations
suggest the trained models occupy.
(iii)~\emph{Honest looseness.} The window $[\mu^{2i},L^{2i}]$ widens
exponentially in $i$ whenever $\mu<1<L$, so the worst-case guarantee
degrades with depth even though the empirical relationship stays
tight; the calibrator of the main paper is the
practical instrument, with \eqref{eq:certificate-s} explaining
\emph{when} it can be trusted and \eqref{eq:sandwich-s}'s lower branch
explaining when it cannot (cancellation, $\mu\to0$).
(iv)~\emph{Certified constants.} Architectures with certified
bi-Lipschitz bounds, such as invertible residual networks
\citep{behrmann2019invertible}, offer a route to guaranteed $(\mu,L)$
rather than estimated ones.
(v)~\emph{Stochastic sampling.} With a stochastic sampler,
Proposition~1 of the main paper applies verbatim to the deterministic
probability-flow sampler, or to $S$-averaged cycles
$\bar{\mathcal{C}}_i=\tfrac1S\sum_s \mathcal{C}_i^{(s)}$ with the
additional sampling-variance term absorbed into $\delta_i$.

%%%%%%%%%%%%%%%%%%%%%%%%%%%%%%%%%%%%%%%%%%%%%%%%%%%%%%
\section{Why Bidirectional Training Can Beat Specialist Training:
A Linear Analysis}
%%%%%%%%%%%%%%%%%%%%%%%%%%%%%%%%%%%%%%%%%%%%%%%%%%%%%%
\label{app:bidir-theory}
 
The ablation in the main paper finds that one
bidirectional model outperforms direction specialists \emph{on each
specialist's own task} at matched compute. (Protocol of the measured
ablation, for reference: the identical architecture is trained
forward-only, backward-only, and bidirectionally with
$p_{\mathrm{bwd}}{=}0.5$ on the turbulent radiative layer, 80 training
trajectories, at matched compute; all three checkpoints are evaluated
direction-matched with fixed noise, 10 seeds, mean $\pm$ SE.) Here we isolate the
mechanism in the simplest setting that exhibits it: stationary linear
dynamics with a shared linear hypothesis.
 
\paragraph{Setting.}
Let $\zb_t\in\mathbb{R}^d$ be a zero-mean stationary Gaussian process
with one-step dynamics $\zb_{t+1}=A\zb_t+\varepsilon_t$,
$\varepsilon_t\sim\mathcal{N}(0,Q)$ independent of $\zb_t$, and
stationary covariance $\Sigma=\mathbb{E}[\zb_t\zb_t^{\!\top}]$
satisfying $\Sigma=A\Sigma A^{\!\top}+Q$.
 
\begin{lemma}[Time reversal is a reparameterization, not a new task]
\label{lem:reversal}
The reverse-time conditional is linear with
\begin{equation}
\mathbb{E}[\zb_t\mid \zb_{t+1}]
=A_{-}\,\zb_{t+1},\qquad
A_{-}=\Sigma A^{\!\top}\Sigma^{-1},
\label{eq:reversal}
\end{equation}
and residual covariance
$Q_{-}=\Sigma-\Sigma A^{\!\top}\Sigma^{-1}A\Sigma\succeq0$. In
particular, for whitened latents ($\Sigma=I$, which per-channel
standardization approximately enforces), $A_{-}=A^{\!\top}$ exactly:
the forward and backward prediction tasks share one parameter object,
and a single network queried with a direction flag is a
\emph{correctly specified} weight tying.
\end{lemma}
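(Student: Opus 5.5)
The plan is to use joint Gaussianity of the stationary pair $(\zb_t,\zb_{t+1})$ and read off the reverse-time conditional from the standard Gaussian conditioning formula. All second moments come from the forward recursion. Since $\zb_t$ is zero-mean with covariance $\Sigma$, and $\zb_{t+1}=A\zb_t+\varepsilon_t$ with $\varepsilon_t$ independent of $\zb_t$, the pair is jointly Gaussian with zero mean. Its covariance blocks are $\mathbb{E}[\zb_t\zb_t^{\!\top}]=\Sigma$ and $\mathbb{E}[\zb_{t+1}\zb_{t+1}^{\!\top}]=A\Sigma A^{\!\top}+Q=\Sigma$, the latter by the stationarity identity. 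The cross-covariance is $\mathbb{E}[\zb_t\zb_{t+1}^{\!\top}]=\mathbb{E}[\zb_t(A\zb_t+\varepsilon_t)^{\!\top}]=\Sigma A^{\!\top}$, because the noise term has zero mean and is independent of $\zb_t$.

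Next I will apply the Gaussian conditioning formula. This gives $\mathbb{E}[\zb_t\mid\zb_{t+1}]=\mathrm{Cov}(\zb_t,\zb_{t+1})\,\mathrm{Cov}(\zb_{t+1})^{-1}\zb_{t+1}=\Sigma A^{\!\top}\Sigma^{-1}\zb_{t+1}$, which identifies $A_-$. The same formula gives the conditional covariance $Q_-=\Sigma-\Sigma A^{\!\top}\Sigma^{-1}(\Sigma A^{\!\top})^{\!\top}=\Sigma-\Sigma A^{\!\top}\Sigma^{-1}A\Sigma$. This is a Schur complement of the joint covariance, which is positive semidefinite, so $Q_-\succeq0$. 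Alternatively, $Q_-$ is a conditional covariance and hence PSD. The reverse-time conditional is therefore $\zb_t\mid\zb_{t+1}\sim\mathcal{N}(A_-\zb_{t+1},Q_-)$. In particular it is linear, and the same residual structure holds for all $t$ by stationarity.

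The main obstacle is the invertibility of $\Sigma$, which the formula requires. I would assume $\Sigma\succ0$, which holds for instance if $Q\succ0$, since then $\Sigma\succeq Q$. If $\Sigma$ is singular, one replaces $\Sigma^{-1}$ by the pseudoinverse, and the conditional mean formula remains valid on the support. Finally, I would specialize to $\Sigma=I$, where the formula gives $A_-=A^{\!\top}$ and $Q_-=I-A^{\!\top}A$. The weight-tying claim is then interpretive: the backward regression target is a fixed, known function of the forward parameter. A shared hypothesis indexed by the direction flag can therefore represent both tasks exactly, with no additional parameters, so the tying is correctly specified.
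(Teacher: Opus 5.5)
Your proof is correct and follows essentially the same route as the paper: joint Gaussianity of $(\zb_t,\zb_{t+1})$ with cross-covariance $\Sigma A^{\top}$, the Gaussian conditioning formula giving $A_-$ and $Q_-$, and $Q_-\succeq0$ because it is a conditional covariance (the paper also checks $Q_-=I-A^{\top}A\succeq0$ directly in the whitened case from $AA^{\top}+Q=I$). Your explicit invertibility condition ($\Sigma\succeq Q\succ0$) is a useful addition that the paper leaves implicit.
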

 
\begin{proof}
$(\zb_t,\zb_{t+1})$ is jointly Gaussian with
$\mathrm{Cov}(\zb_{t+1},\zb_t)=A\Sigma$, hence
$\mathrm{Cov}(\zb_t,\zb_{t+1})=\Sigma A^{\!\top}$. The Gaussian
conditional-mean formula gives
$\mathbb{E}[\zb_t\mid\zb_{t+1}]
=\mathrm{Cov}(\zb_t,\zb_{t+1})\,\mathrm{Cov}(\zb_{t+1})^{-1}\zb_{t+1}
=\Sigma A^{\!\top}\Sigma^{-1}\zb_{t+1}$, and the conditional
covariance is
$\Sigma-\Sigma A^{\!\top}\Sigma^{-1}A\Sigma$, positive semidefinite
as a Gaussian conditional covariance. With $\Sigma=I$ the stationarity
identity reads $AA^{\!\top}+Q=I$, so all singular values of $A$ are at
most one and $Q_{-}=I-A^{\!\top}A\succeq0$ directly.
\end{proof}
 
\paragraph{Estimation with and without tying.}
Lemma~\ref{lem:reversal} converts the training question into: the same matrix $A$ is observable through two regression views: forward pairs, in which it acts as $A$, and backward pairs, in which it acts as $A^{\!\top}$ (taking $\Sigma=I$). A direction specialist uses one view; the bidirectional model, both. We compare them in the standard fixed-design idealization.
 
\begin{proposition}[Tying halves the estimation variance]
\label{prop:tying}
Suppose $A$ is estimated by least squares from $m_+$ forward
observations $y_k=Ax_k+\varepsilon_k$ and $m_-$ backward observations
$v_k=A^{\!\top}u_k+\varepsilon'_k$, with ($\mathbb{E}[xx^{\!\top}]=\mathbb{E}[uu^{\!\top}]=I$) and independent noise of equal scale $\sigma^2$ per coordinate. Then the expected squared estimation error obeys
\begin{equation}
\mathbb{E}\,\|\hat A-A\|_F^{2}
\;=\;\frac{\sigma^{2}d^{2}}{m_{+}+m_{-}}\big(1+o(1)\big).
\label{eq:risk}
\end{equation}
At matched total sample budget $m$, the specialist ($m_+=m$,
$m_-=0$) attains $\sigma^2d^2/m$, while the tied bidirectional
estimator ($m_+=m_-=m/2$) attains the \emph{same}
$\sigma^2d^2/m$ on the union of views, but per parameter of the
deployed system it has solved both directions with one $d\times d$
matrix, where the specialists require two. Equivalently, at matched
\emph{per-direction} data $n$ (each trajectory supplies both views),
the tied estimator's error is $\sigma^2d^2/(2n)$: half the
specialist's $\sigma^2d^2/n$.
\end{proposition}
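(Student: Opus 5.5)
The plan is to vectorize the problem and read the least-squares estimator as a single linear regression on $\mathrm{vec}(A)\in\mathbb{R}^{d^2}$. Each forward observation gives $y_k = (x_k^{\top}\otimes I_d)\,\mathrm{vec}(A) + \varepsilon_k$. For each backward observation, use the commutation matrix $K$ with $\mathrm{vec}(A^{\top}) = K\,\mathrm{vec}(A)$. This gives $v_k = (u_k^{\top}\otimes I_d)K\,\mathrm{vec}(A) + \varepsilon'_k$. The tied estimator is then ordinary least squares with a stacked design $X$ of $d(m_++m_-)$ rows and homoscedastic noise $\sigma^2 I$.

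The standard OLS identity gives
\[
\mathbb{E}\|\hat A - A\|_F^2 = \sigma^2\,\mathrm{tr}\big((X^{\top}X)^{-1}\big),
\]
conditionally on the design (fixed-design idealization).

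Next, compute the Gram matrix:
\[
X^{\top}X = \sum_k (x_kx_k^{\top}\otimes I_d) + K^{\top}\Big(\sum_k u_ku_k^{\top}\otimes I_d\Big)K .
\]
In the idealization, $\tfrac{1}{m_+}\sum x_kx_k^{\top}\to I$ and $\tfrac{1}{m_-}\sum u_ku_k^{\top}\to I$. Since $K$ is a permutation matrix, $K^{\top}(I\otimes I)K = I_{d^2}$. Hence $X^{\top}X = (m_++m_-)\,I_{d^2}\,(1+o(1))$. Inverting and taking the trace, with $\mathrm{tr}(I_{d^2}) = d^2$, yields \eqref{eq:risk}.

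The $o(1)$ term absorbs the deviation of the empirical second moments from $I$. I would control it by a perturbation bound of the form $\mathrm{tr}((nI+E)^{-1}) = \tfrac{d^2}{n}(1+O(\|E\|/n))$, together with the law of large numbers for the Gram matrices.

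The remaining claims are specializations. Setting $m_-=0$ gives the specialist risk $\sigma^2 d^2/m$. For the tied estimator at matched total budget, $m_+=m_-=m/2$ gives the same value, and the parameter-count remark is bookkeeping: one $d\times d$ matrix versus two. At matched per-direction data, $m_+=m_-=n$ gives $\sigma^2 d^2/(2n)$, versus $\sigma^2 d^2/n$ for a specialist that uses $m_+=n$ alone.

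The main obstacle is the precise meaning of ``$1+o(1)$''. The clean equality holds only when the empirical moment matrices equal $I$ exactly. I would state the result as an exact identity under exact-isotropy fixed design, and as an asymptotic statement as $m_\pm\to\infty$ with $d$ fixed. I would also note that the same-scale noise assumption is what lets the two views be pooled without weighting. Otherwise generalized least squares would be needed, and the constant would change.
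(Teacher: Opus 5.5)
Your proposal is correct and is essentially the paper's own argument: vectorize, identify the backward view as the same unknowns $\mathrm{vec}(A)$ under a fixed permutation, stack everything into one OLS problem with Gram matrix asymptotically $(m_++m_-)I_{d^2}$, and read off $\sigma^2\,\mathrm{tr}\big((X^{\top}X)^{-1}\big)=\sigma^2 d^2/(m_++m_-)$. Your Kronecker and commutation-matrix bookkeeping spells out more cleanly where the paper's heuristic count (parameters over observation rows, times $d$) comes from, and your explicit treatment of the $o(1)$ term makes precise what the paper leaves to its fixed-design idealization.
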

 
\begin{proof}
Vectorize: each forward observation contributes $d$ scalar equations
in the entries of $A$ with isotropic regressors; each backward
observation contributes $d$ scalar equations in the entries of
$A^{\!\top}$, i.e., in the \emph{same} unknowns under a fixed
permutation. Stacking gives a least-squares problem for
$\mathrm{vec}(A)\in\mathbb{R}^{d^2}$ with
$(m_++m_-)d$ equations and asymptotically isotropic design;
the standard OLS risk $\sigma^2\times(\text{\#params})/(\text{\#obs
rows})\times d = \sigma^2 d^2/(m_++m_-)$ follows. The two
readings of the budget give the two displayed comparisons.
\end{proof}
 
\paragraph{Remarks.}
(i)~\emph{Where the measured effect lives.} The gain is a
\emph{variance} reduction, so it scales as $1/n$ and vanishes with
abundant data. The mechanism predicts the benefit is largest in
the low-data regime, which is exactly where the ablation measures it
(80 trajectories), and variance reduction is regularization, matching
the observed delayed overfitting (epoch ${\sim}250$ vs.\
${\sim}180$). (ii)~\emph{Why the idealization overstates the
constant.} The two views of a trajectory share its underlying
randomness, so the independent-noise factor of two is an upper bound
on the mechanism; the measured $7$-$10\%$ per-direction improvement
is consistent with partially redundant views plus optimization
effects. (iii)~\emph{Off-direction failure of specialists.} The
converse is immediate: a forward specialist's objective never
constrains the transpose view, so nothing anchors its behavior under
the backward query, which is consistent with the $5$-$7\times$
off-direction degradation measured. (iv)~\emph{Beyond linear.} For
the nonlinear denoiser the tying acts through a shared representation
rather than a shared matrix; multitask representation bounds
\citep{baxter2000model,maurer2016benefit} give the analogous
complexity-per-task reduction when tasks share structure, which
Lemma~\ref{lem:reversal} identifies here as near-exact rather than
merely assumed. What is specific to our setting is that the same tying that improves the forward model is the structure that makes the round-trip meter of the main paper available at all.

%%%%%%%%%%%%%%%%%%%%%%%%%%%%%%%%%%%%%%%%%%%%%%%%%%%%%%
\section{Additional Results and Protocol Details}
%%%%%%%%%%%%%%%%%%%%%%%%%%%%%%%%%%%%%%%%%%%%%%%%%%%%%%
\label{app:extra}

\paragraph{Per-field calibration of decoded MHD errors.}
Table~\ref{tab:calibration-fields} accompanies the ``Transfer to
decoded field-space errors'' paragraph of the main paper. Per-field
polynomial degrees are selected by the same trajectory-level
cross-validation protocol as the latent calibrator, additionally
restricted to monotone candidates. The gains are largest for the
hydrodynamic quantities ($\rho, v_x, v_y, P$: $+0.67$--$+0.73$ nats)
and smaller for the magnetic components ($B_x, B_y$:
$+0.32$--$+0.36$), whose advantage over the depth-only baseline
concentrates at deeper rollouts ($i \ge 40$); pressure is the hardest
field in absolute terms.

\begin{table}[t]
\centering
\caption{Per-field calibration of \emph{decoded} reconstruction errors
on held-out trajectories. $\Delta$NLL: log-likelihood gain over the
identically-fit depth-only calibrator (nats); other columns as in
Table~3 of the main paper.}
\label{tab:calibration-fields}
\footnotesize
\setlength{\tabcolsep}{4pt}
\resizebox{\columnwidth}{!}{%
\begin{tabular}{lccccc}
\toprule
Field & $\Delta$NLL $\uparrow$ & $\times_{68}$ & $\times_{95}$ &
Cov$_{68/95}$ (\%) & MCA $\downarrow$ \\
\midrule
$\rho$ & $+0.73$ & $1.18$ & $1.39$ & $63.6\,/\,92.5$ & $0.022$ \\
$v_x$  & $+0.73$ & $1.17$ & $1.37$ & $65.1\,/\,93.1$ & $0.058$ \\
$v_y$  & $+0.72$ & $1.19$ & $1.38$ & $63.0\,/\,93.7$ & $0.040$ \\
$P$    & $+0.67$ & $1.30$ & $1.64$ & $67.2\,/\,95.1$ & $0.042$ \\
$B_x$  & $+0.32$ & $1.27$ & $1.66$ & $74.3\,/\,96.1$ & $0.024$ \\
$B_y$  & $+0.36$ & $1.26$ & $1.57$ & $63.9\,/\,92.0$ & $0.059$ \\
\bottomrule
\end{tabular}}
\end{table}

\paragraph{MHD calibration: safeguards, depth-conditional behavior,
and a trajectory-mean augmentation.}
Two safeguards apply to every calibrator fit reported in the main
paper: the mean function $\mu$ is verified monotone on the data range,
and inputs are clamped to the training support. While pooled
calibration of the $\mu(\cyc{i})$ calibrator is near-nominal,
depth-conditional calibration drifts at the deepest rollouts: the
per-depth miscalibration area reaches $0.27$ at $i \ge 40$, against
$0.013$ pooled, with the same drift recurring per decoded field, this is the regime where the certificate's $\mu^{-i}$ factor predicts the
signal loosens. Augmenting the calibrator with the trajectory mean of
$\log \cyc{}$ and the depth (five additional coefficients, same
fitting recipe) yields a further $+0.26$ nats (paired per-trajectory,
$\pm 0.02$) and largely repairs the mid-depth calibration drift
(miscalibration area $0.23 \to 0.09$ at $i{=}40$) while leaving
shallow-depth calibration intact; adding temporal-shape features of
the $\cyc{}$-curve (early-window level, slope) brings no further gain
($+0.01 \pm 0.02$), indicating that per-depth $\cyc{i}$ values act as
exchangeable measurements of a trajectory-level quality factor rather
than carrying timing information. Because the trajectory mean reads
the full curve, this variant suits post-hoc assessment of a completed
rollout (a prefix mean provides a causal analogue); residual drift at
the deepest depths ($0.27 \to 0.22$ at $i{=}80$) is where per-depth
recalibration or a split-conformal wrapper
\citep{angelopoulos2023conformal} would restore finite-sample
coverage, which we leave to future work.

\paragraph{LE-PDE-UQ: training additions and model selection.}
Because their baseline trains with a weighted four-step unrolled
objective while ours is single-step, we disclose two training additions:
Gaussian noise on context and anchor frames, and pushforward-style context
augmentation \citep{brandstetter2022message} (1--3 unrolled no-gradient
steps supply model-generated contexts; targets remain the true next
frames), applied as a short low-learning-rate fine-tune. Checkpoints are
selected by validation \emph{rollout} error rather than the validation
denoising loss: the $\epsilon$-prediction objective is a teacher-forced,
single-step surrogate that reflects neither the iterated sampler nor
robustness to self-generated contexts, and in our runs the two
anti-correlate: the $\epsilon$-selected checkpoint is the worst rollout
model in the sweep.

\paragraph{LE-PDE-UQ: calibration protocol.}
All fitted uncertainty objects including the heteroscedastic calibrators, the
depth-only baseline tables, pixel-scale factors, and shape-mixing weights
$\lambda$, are fit on rollouts from \emph{held-out calibration
trajectories} (drawn from the validation split), never on the surrogate's
training trajectories. This departs from the MHD calibration of the main paper, where fitting on training rollouts sufficed, for a measured reason: here the surrogate's held-out rollout error exceeds its training-trajectory
rollout error by a factor of $4.8$ on average (up to $6.3$ at depth $10$), so training-set rollouts under-represent deployment error and calibrators fit on them transfer systematically undersized uncertainty; calibration data must resemble the data being calibrated \citep{kuleshov2018accurate,
angelopoulos2023conformal}. During development we fit on one interleaved
half of the validation trajectories and report on the other; final numbers
are produced once, fitting on the full validation split and evaluating on
the untouched test split. Model selection uses validation rollout error
and is disjoint from calibration fitting.

\paragraph{LE-PDE-UQ: pixel $z$-score diagnostics and latent-space
results.}
The residual gap between the best training-free pixel calibration of
the main paper and learned per-pixel uncertainty is within-frame
heteroscedasticity that no frame-level scalar represents: pixel
$z$-scores under the composed calibrator are over-concentrated at
$\pm1\sigma$ ($P(|z|{<}1)\approx0.80$ vs.\ nominal $0.68$) with heavy
tails (kurtosis ${\approx}10$). The per-frame scale is right on
average while extreme pixels exceed Gaussian tails. In the latent
space where $\cyc{i}$ is computed, the same run completes the
cross-system spectrum of the main paper: $\cyc{i}$ and $\err{i}$ are
rank-correlated within trajectories (Spearman $0.88\pm0.17$, test) and
across trajectories at fixed depth ($0.32$-$0.59$), and the
identically fit latent calibrator improves pooled test NLL by $+0.04$
nats over depth-only, the gain concentrated at shallow depths (NLL
$0.06$ vs.\ $0.16$ at $i{=}1$; $0.34$ vs.\ $0.48$ at $i{=}4$) and gone
by $i{=}10$ as errors approach the attractor scale.

%%%%%%%%%%%%%%%%%%%%%%%%%%%%%%%%%%%%%%%%%%%%%%%%%%%%%%
\section{Inverse Rollouts}
%%%%%%%%%%%%%%%%%%%%%%%%%%%%%%%%%%%%%%%%%%%%%%%%%%%%%%
\label{app:inverse}

This section collects the evidence behind the ``Inverse Rollouts''
subsection of the main paper. All results use the single trained
bidirectional model with the direction flag set to $\cd{=}-1$ and the
deterministic sampler; no additional training or fine-tuning is
involved, and one backward step costs the same as one forward step.
Backward rollouts are seeded with the \emph{true} terminal pair
$(\zb_{T-1},\zb_T)$ of a held-out test trajectory and iterated toward
$t{=}0$; these are exactly the rollouts whose residual $\delta_i$
defines the noise floor of the sandwich bound
(Sec.~\ref{app:theory-details}), so the reconstructions of
Fig.~\ref{fig:inv-fields} visualize what $\delta_i$ measures. For the
direction-symmetric consistency check of Fig.~\ref{fig:inv-ci}, a
backward rollout of depth $i$ is cycled \emph{forward} for $i$ steps
and compared with its own terminal seed pair, yielding the
mirror-image round-trip error $\cyc{i}^{\,-}$: the defining formula
of the main paper with the roles of $\fwd$ and $\bwd$ exchanged.
Fig.~\ref{fig:inv-latent-forward} shows the same forward rollout in the latent space where all metrics are computed and Fig.~\ref{fig:inv-latent-back} shows the backward rollout.

\begin{figure*}[t]
\centering
\includegraphics[width=\textwidth]{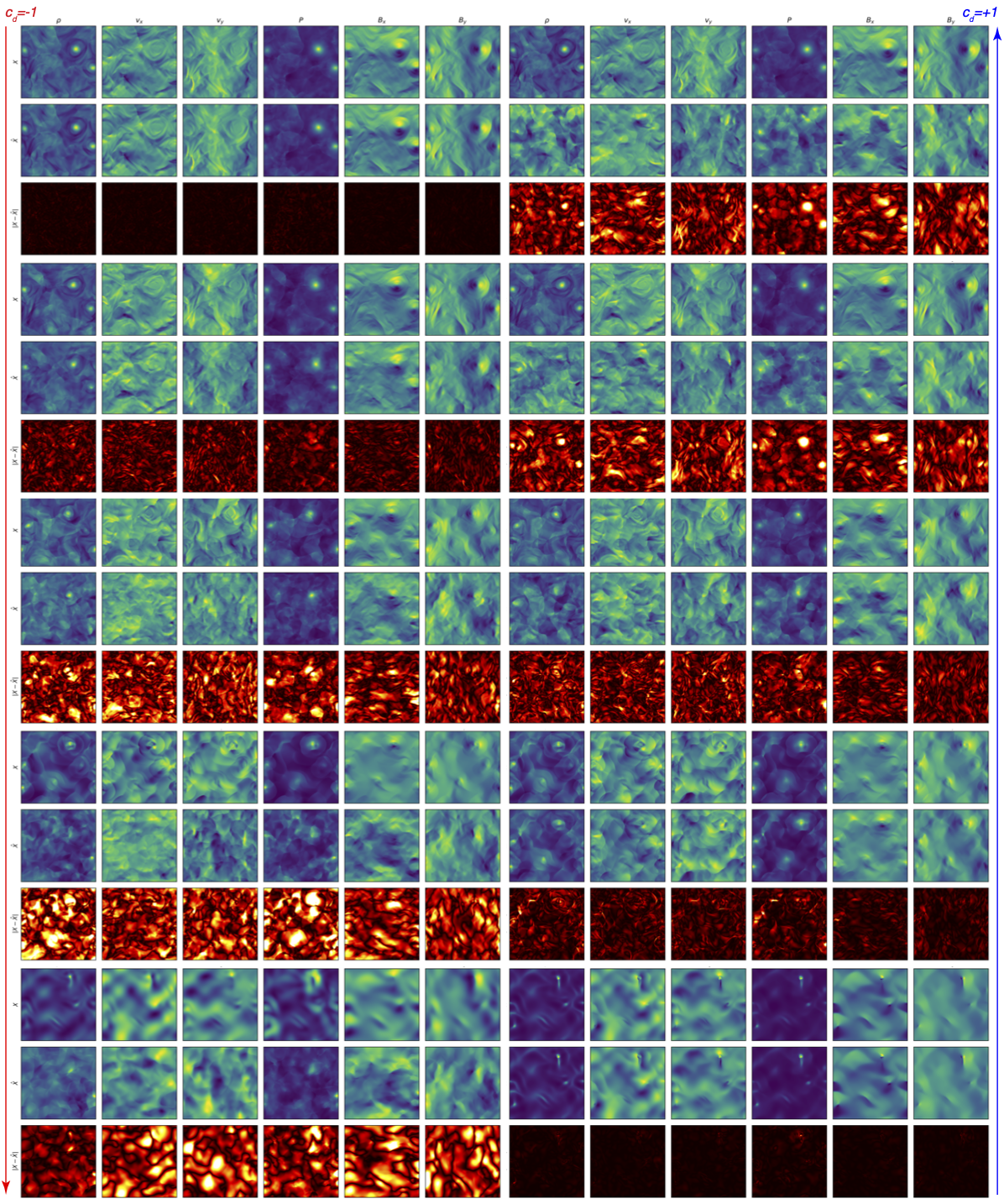}
\caption{\textbf{Physical-field inverse rollouts.} Backward rollout
($\cd{=}-1$) of a held-out MHD test trajectory seeded with only the
true terminal pair $(\zb_{T-1},\zb_T)$: VAE-decoded backward
reconstructions vs.\ ground truth at increasing backward depth $i$
(top to bottom, left 6 columns), with pixelwise absolute error beneath each pair. Reconstructions stay sharp and physically coherent at
shallow-to-moderate backward depths and degrade gracefully as
backward error accumulates, mirroring the forward direction. The forward rollout (bottom to top) is shown in the right 6 columns for all 6 MHD fields.}
\label{fig:inv-fields}
\end{figure*}

\begin{figure*}[t]
\centering
\includegraphics[width=0.85\textwidth]{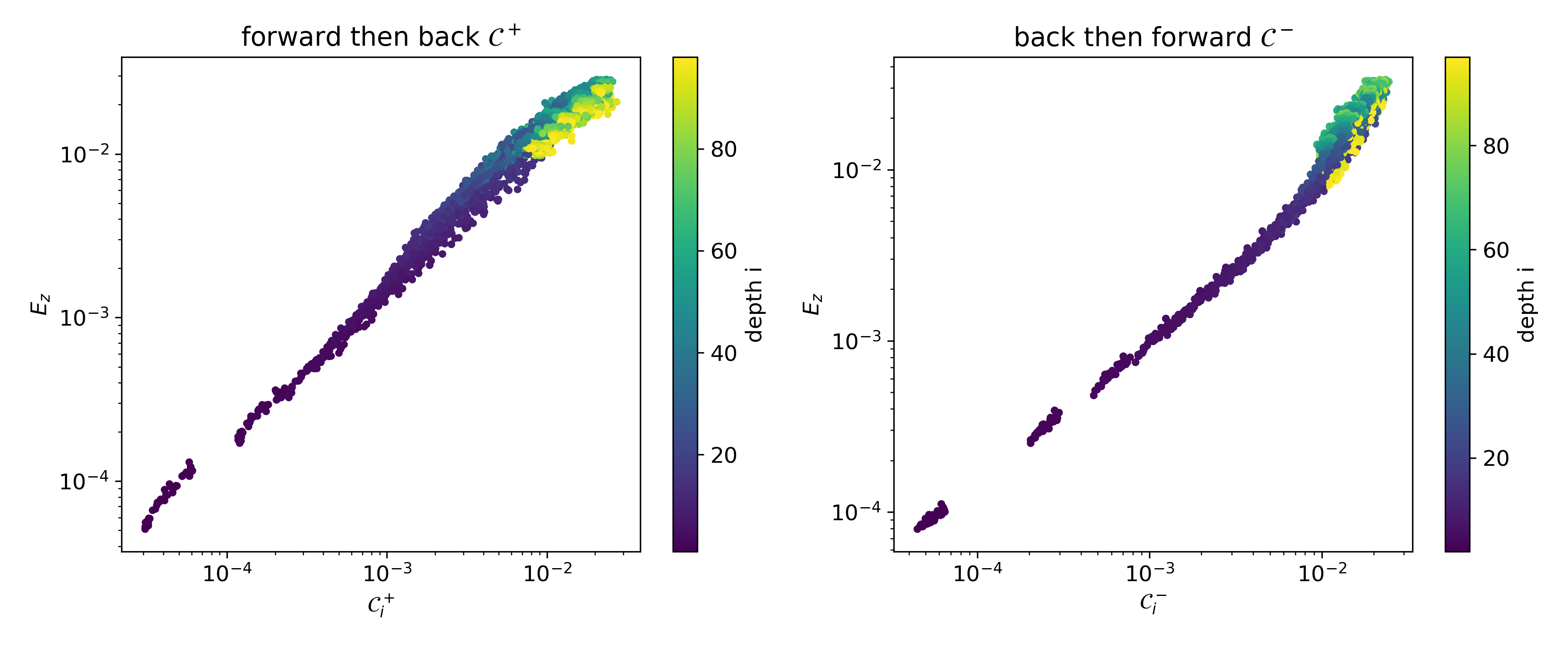}
\caption{\textbf{Predicting unknown latent rollout error: Round-trip consistency forward or backward in time.}
Forward rollouts cycled backward (the default $\cyc{i}$, left) and
backward rollouts cycled forward ($\cyc{i}^{\,-}$,
right), each plotted against the corresponding true rollout error on
the 50 held-out MHD validation trajectories, colored by rollout depth. The backward cycle tracks backward error when the model is deployed as an inverse solver.}
\label{fig:inv-ci}
\end{figure*}

\begin{figure*}[t]
\centering
\includegraphics[width=\textwidth]{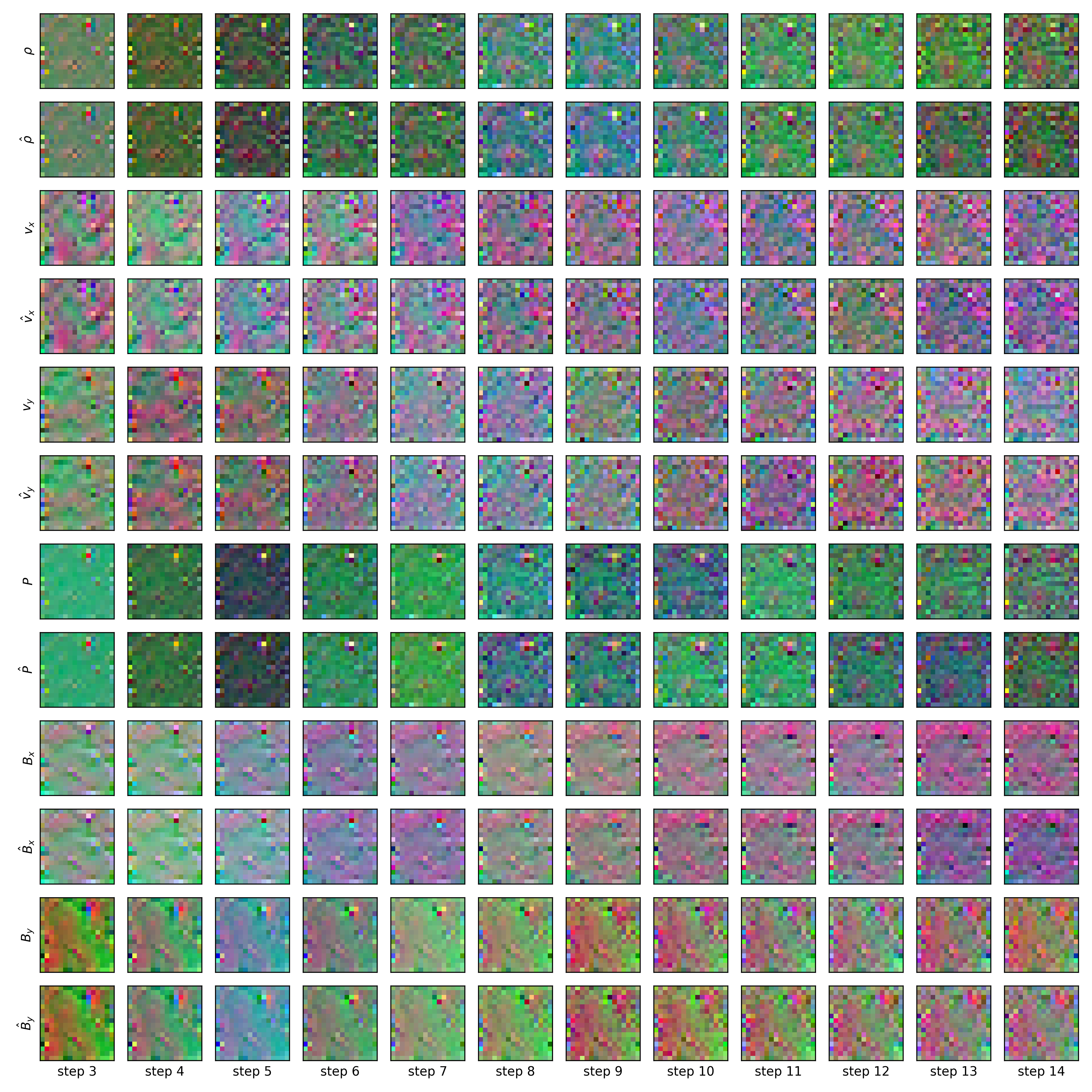}
\caption{\textbf{Latent rollouts, forward.} Latent
trajectories of a held-out MHD test trajectory in the
$16{\times}16{\times}4$ per-field latent space where $\cyc{i}$,
$\err{i}$, and $\delta_i$ are computed: encoded ground truth (top
of each row), forward rollout from the initial pair (from left to right), shown at
matched times. Drift away from the encoded truth grows with depth --- the latent-level picture behind the decoded fields of Fig.~\ref{fig:inv-fields}.}
\label{fig:inv-latent-forward}
\end{figure*}

\begin{figure*}[t]
\centering
\includegraphics[width=\textwidth]{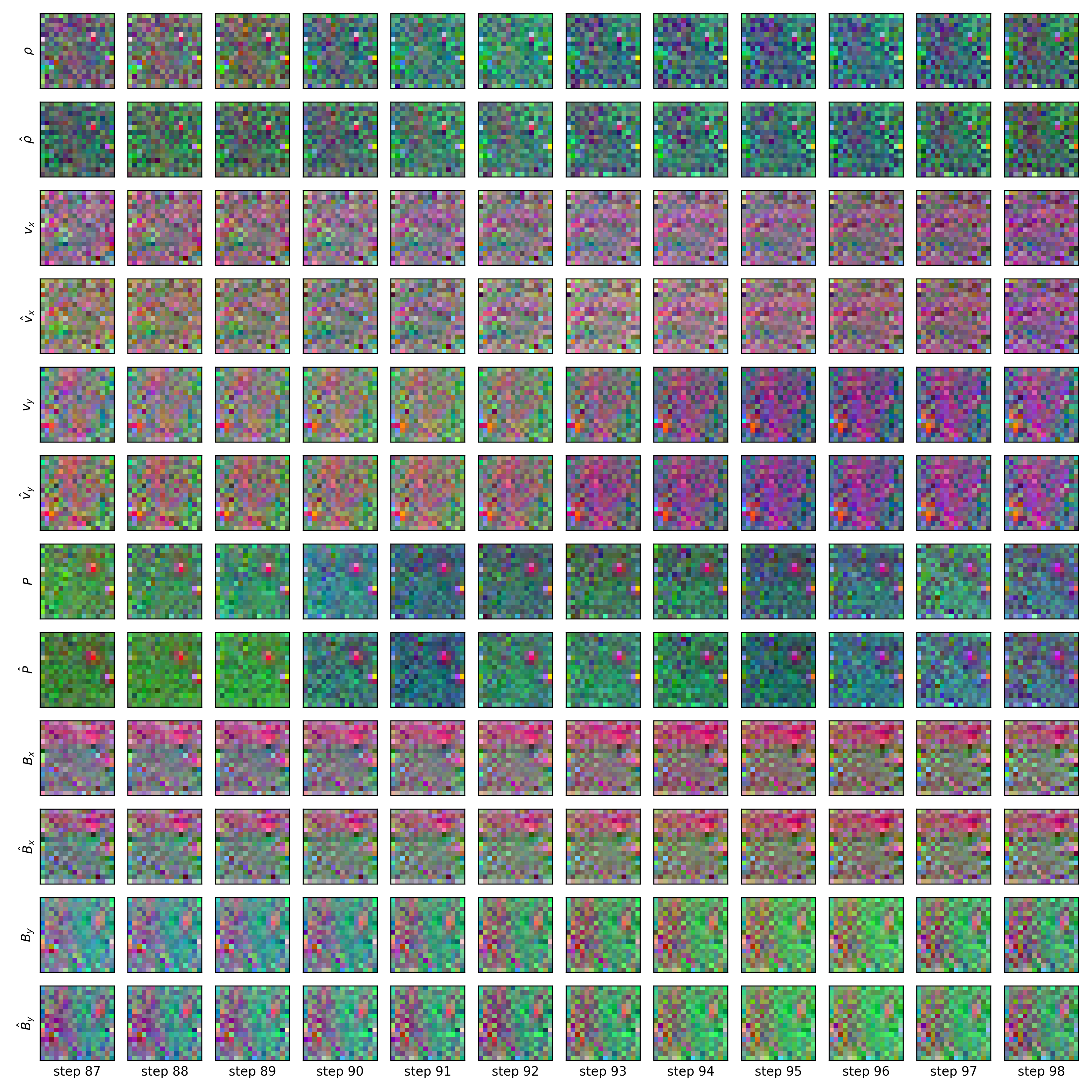}
\caption{\textbf{Latent rollouts, backward.} Latent
trajectories of a held-out MHD test trajectory in the
$16{\times}16{\times}4$ per-field latent space where $\cyc{i}$,
$\err{i}$, and $\delta_i$ are computed: encoded ground truth (top
of each row), backward rollout from the terminal pair (from right to left), shown at
matched times. Drift away from the encoded truth grows with depth --- the latent-level picture behind the decoded fields of Fig.~\ref{fig:inv-fields}.}
\label{fig:inv-latent-back}
\end{figure*}

\begin{figure}[t]
\centering
\includegraphics[width=0.85\columnwidth]{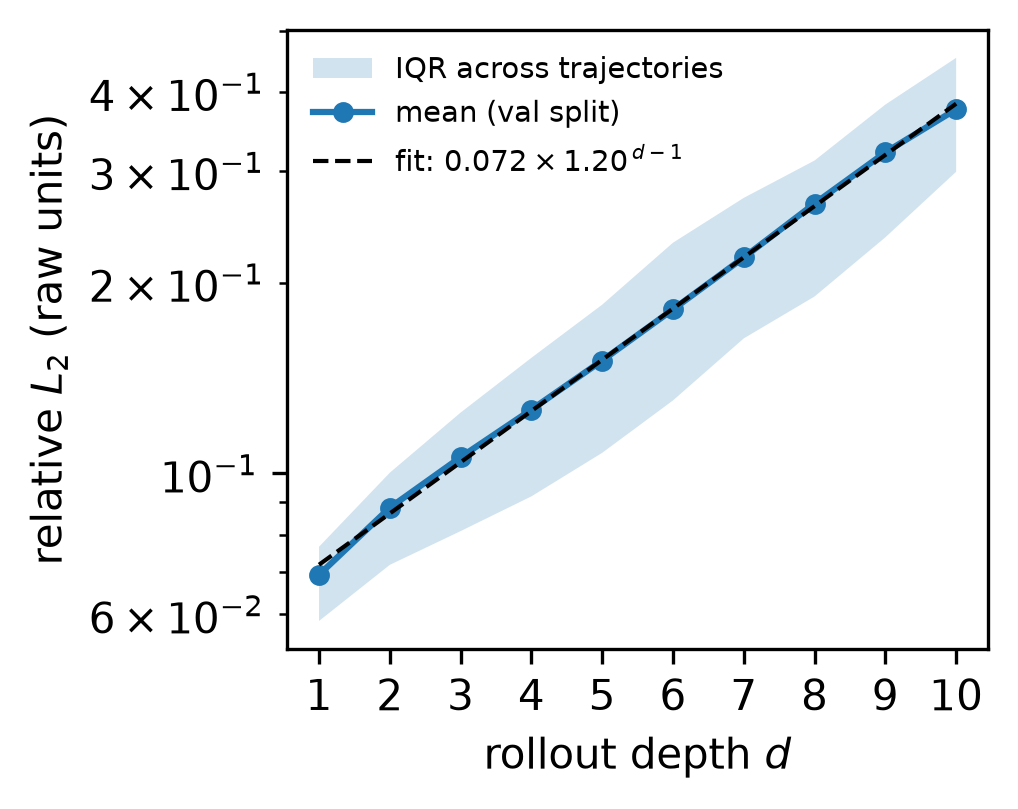}
\caption{Forward-rollout error on the Navier--Stokes benchmark: mean
relative $L_2$ vs.\ rollout depth with the interquartile range across
trajectories. Growth is geometric, $e_d \approx e_1 r^{\,d-1}$ with
$r \approx 1.21$ per step (dashed).}
\label{fig:fno_depth_growth}
\end{figure}

\paragraph{LE-PDE-UQ: additional details.}
The LE-PDE-UQ system evolves according to a set of 2D Navier-Stokes equations governing a viscous, incompressible fluid in vorticity form in a unit torus:
\begin{eqnarray}
    \frac{\partial w(t,x)}{\partial t} &=& -u(t,x)\cdot \nabla w(t,x) + \nu \Delta w(t,x) + f(x), \nonumber \\
    \nabla \cdot u(t,x) &=& 0, \nonumber \\
    w(0,x) &=& w_0(x), \ x\in (0,1)^2, \ t\in[0,T]
\end{eqnarray}
where $w(t,x)=\nabla \times u(t,x)$ is the vorticity, $\nu \in \mathbb{R}_+$ is the viscosity coefficient, the domain is discretized into a $64 \times 64$ grid and $Re=10^4$ (turbulent).

\begin{figure}[t]
\centering
\includegraphics[width=1.0\columnwidth]{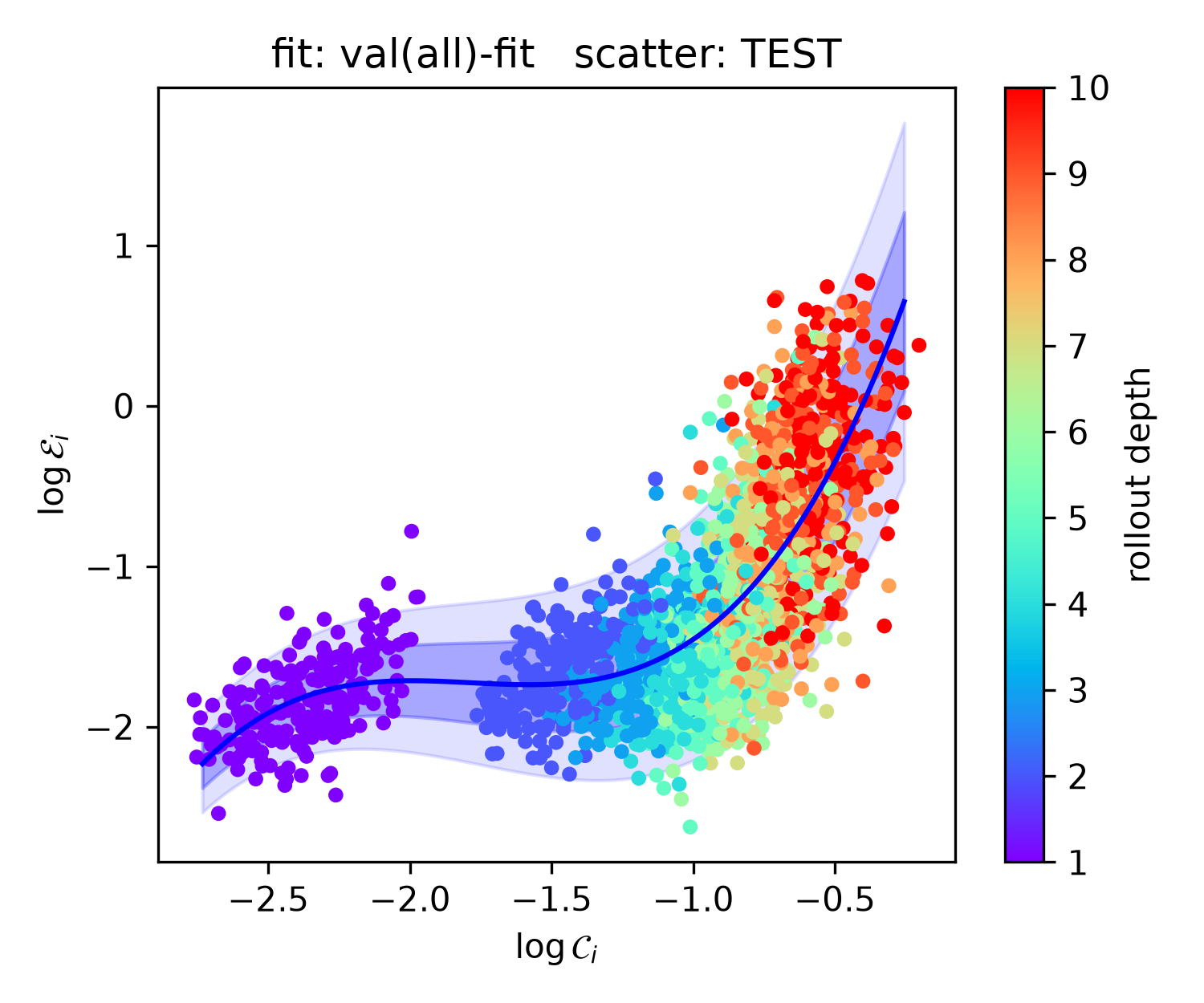}
\caption{The color bands show predicted $\mu\pm2\sigma$ of $\mathcal{E}_i$ as a function of $\cyc{i}$, which was fit on validation data, relative to the actual measurements of $(\cyc{i},\mathcal{E}_i)$ of the test data for the LE-PDE-UQ test dataset colored by rollout distance.}
\label{fig:logClogE_test}
\end{figure}

\paragraph{Additional CELEBV-HQ Rollouts} Additional forward autoregressive rollouts of CELEBV-HQ data, using just the first 2 images in a video, for held out test data, are shown in Figures \ref{fig:CELEBVHQ_Rolls}, \ref{fig:CELEBVHQ_Rolls_More}.

\paragraph{Additional Well Turbulent Flow Rollouts} Figures \ref{fig:back_forward_turb_flow}-\ref{fig:fwd_bck_trb_678} show forward and backward rollouts of all 9 test trajectories of The Well turbulent flow data. An asymmetry is visible here in that after approximately 50 steps, it is easier for the model to go backwards in time rather than forwards because the initial conditions are very similar for all examples, while the final conditions vary widely.

\begin{figure*}[t]
\centering
\includegraphics[width=0.495\textwidth]{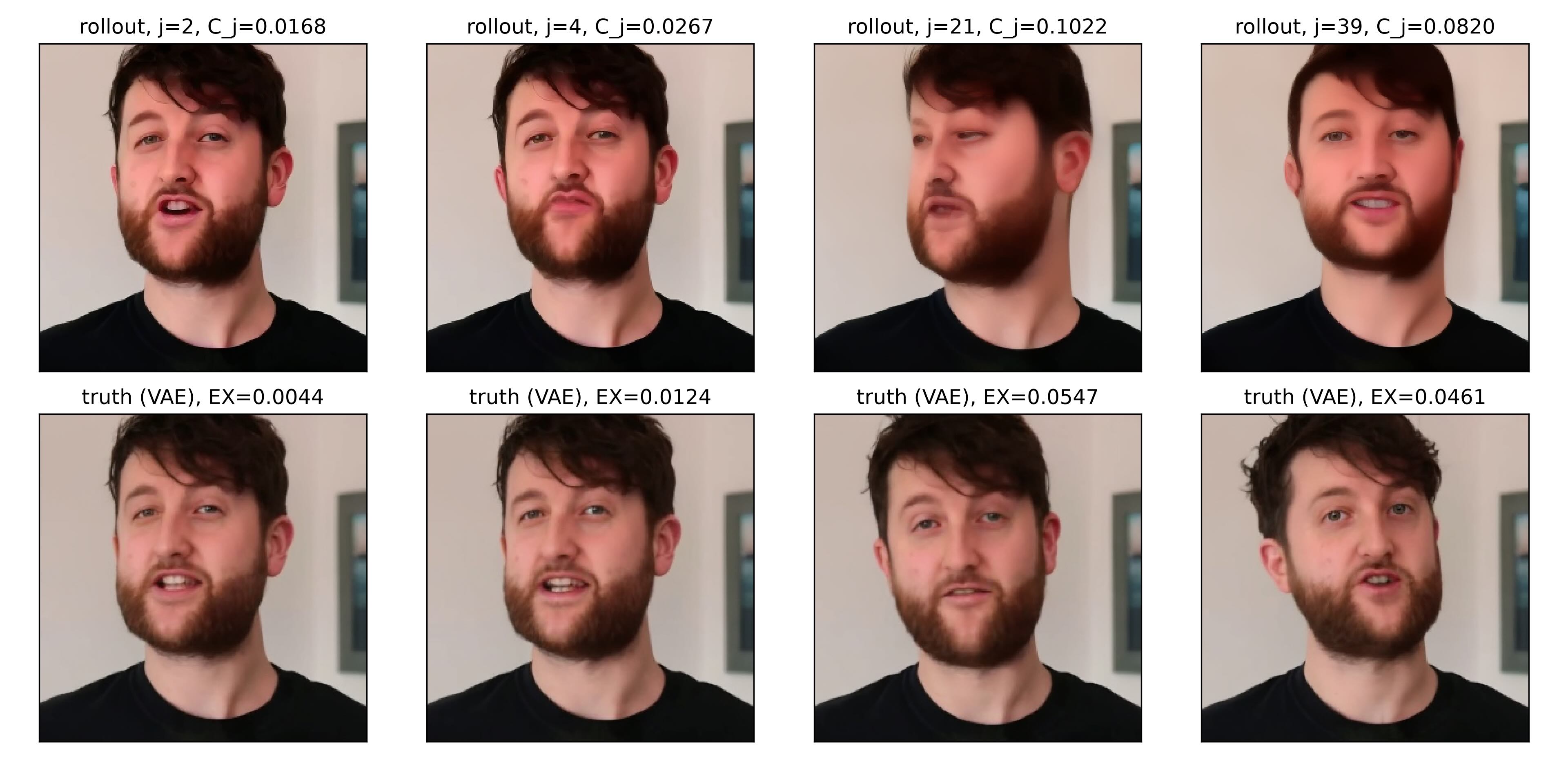}
\includegraphics[width=0.495\textwidth]{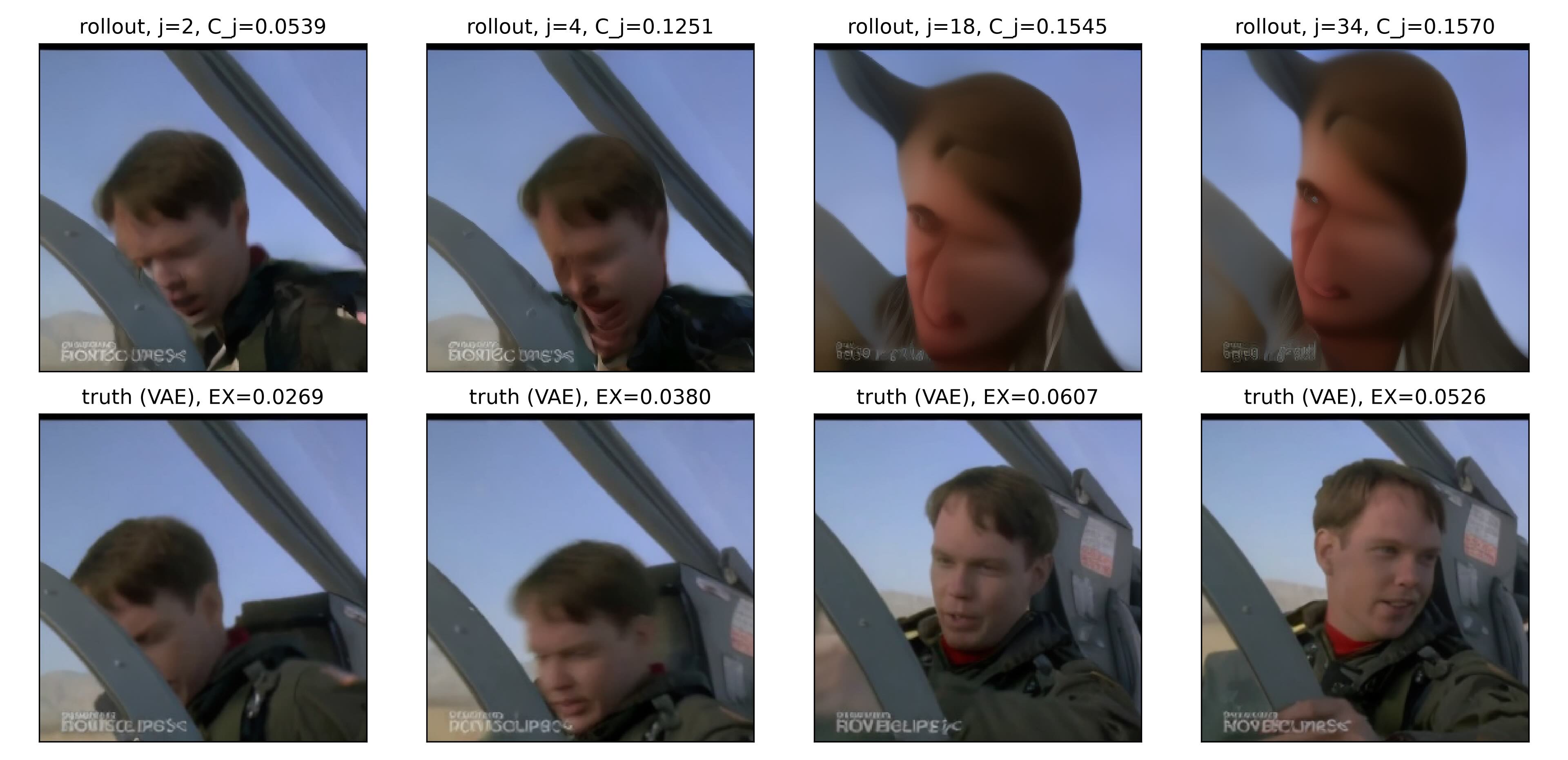}
\includegraphics[width=0.495\textwidth]{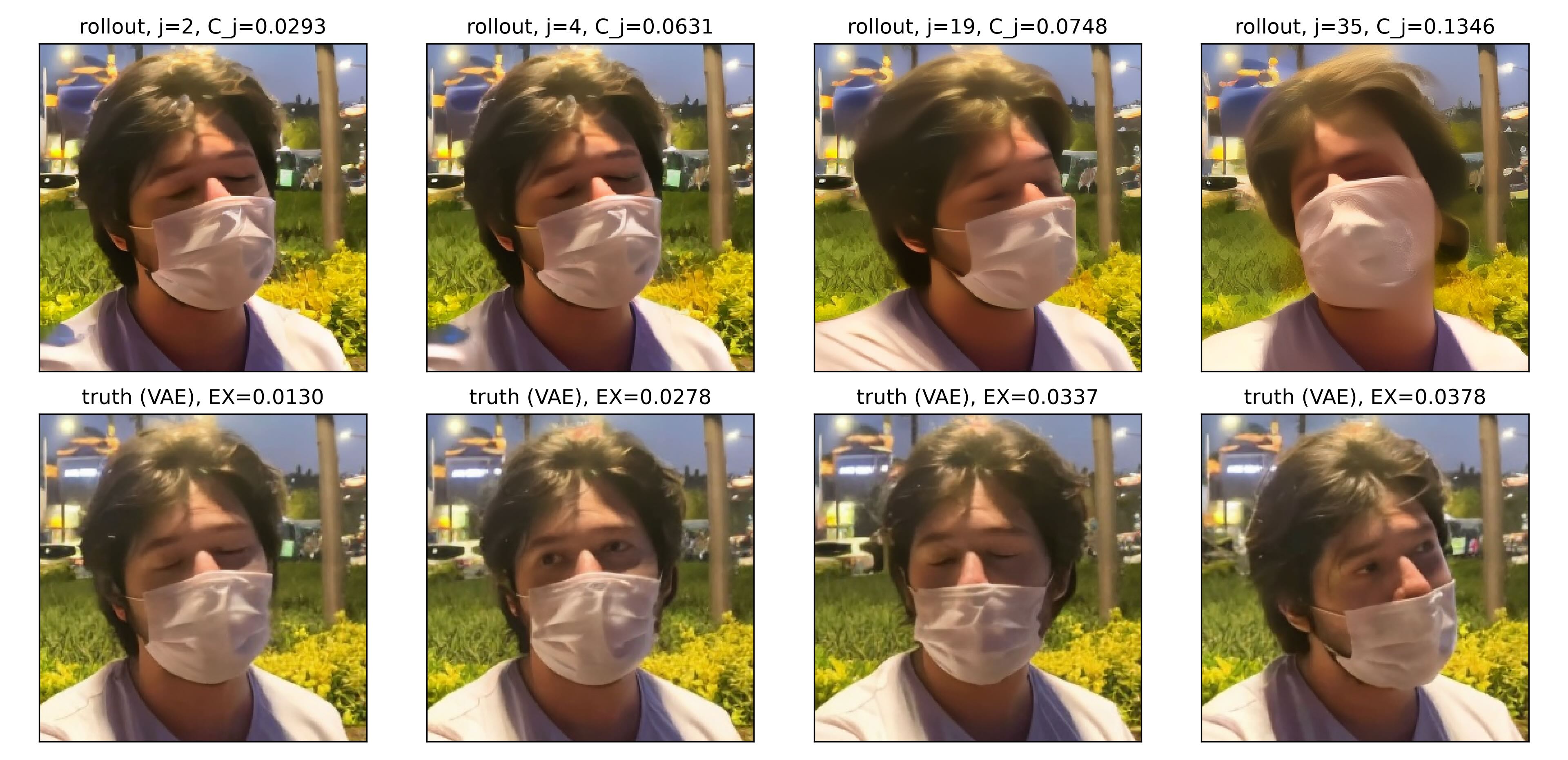}
\includegraphics[width=0.495\textwidth]{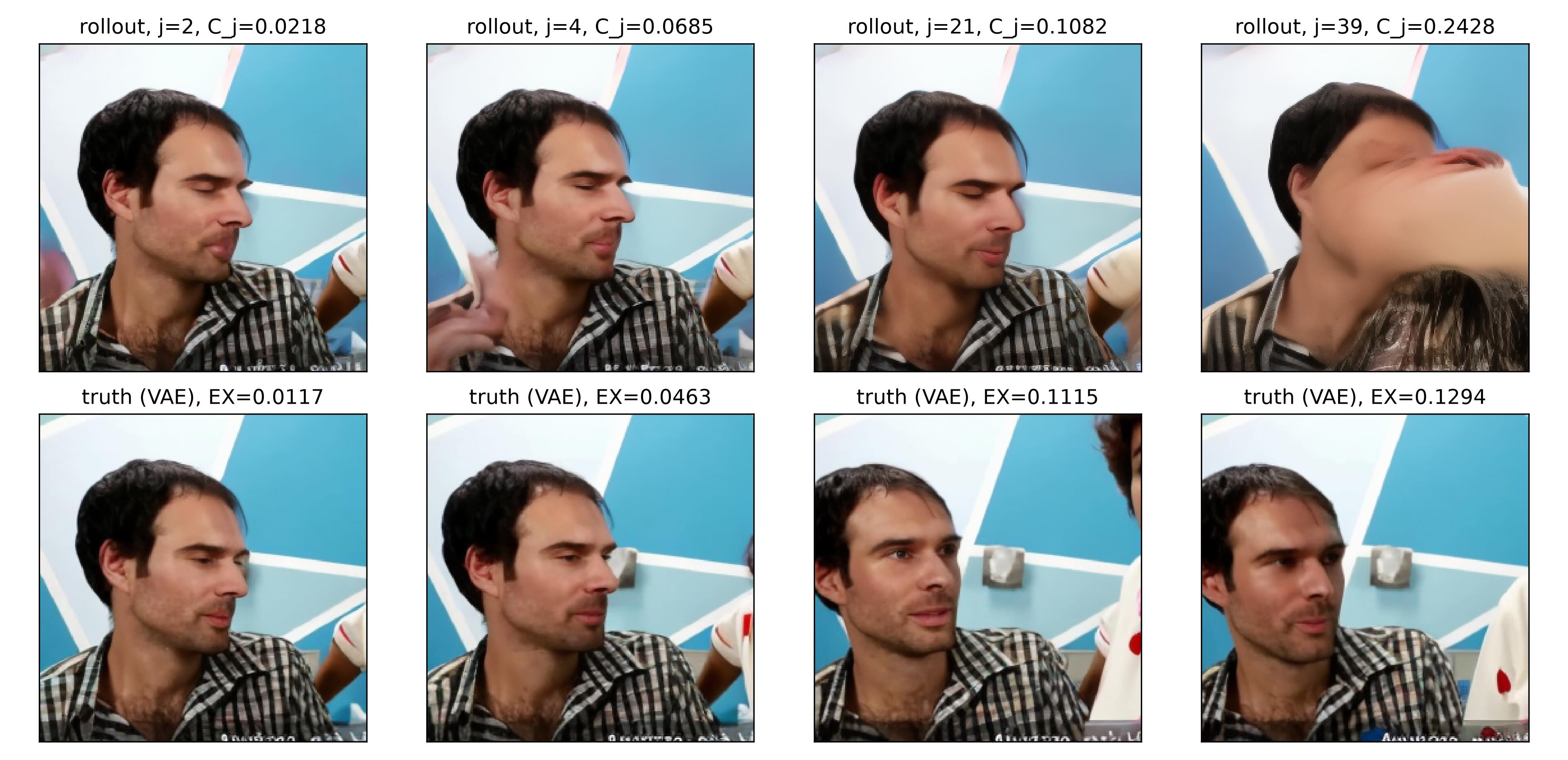}
\includegraphics[width=0.495\textwidth]{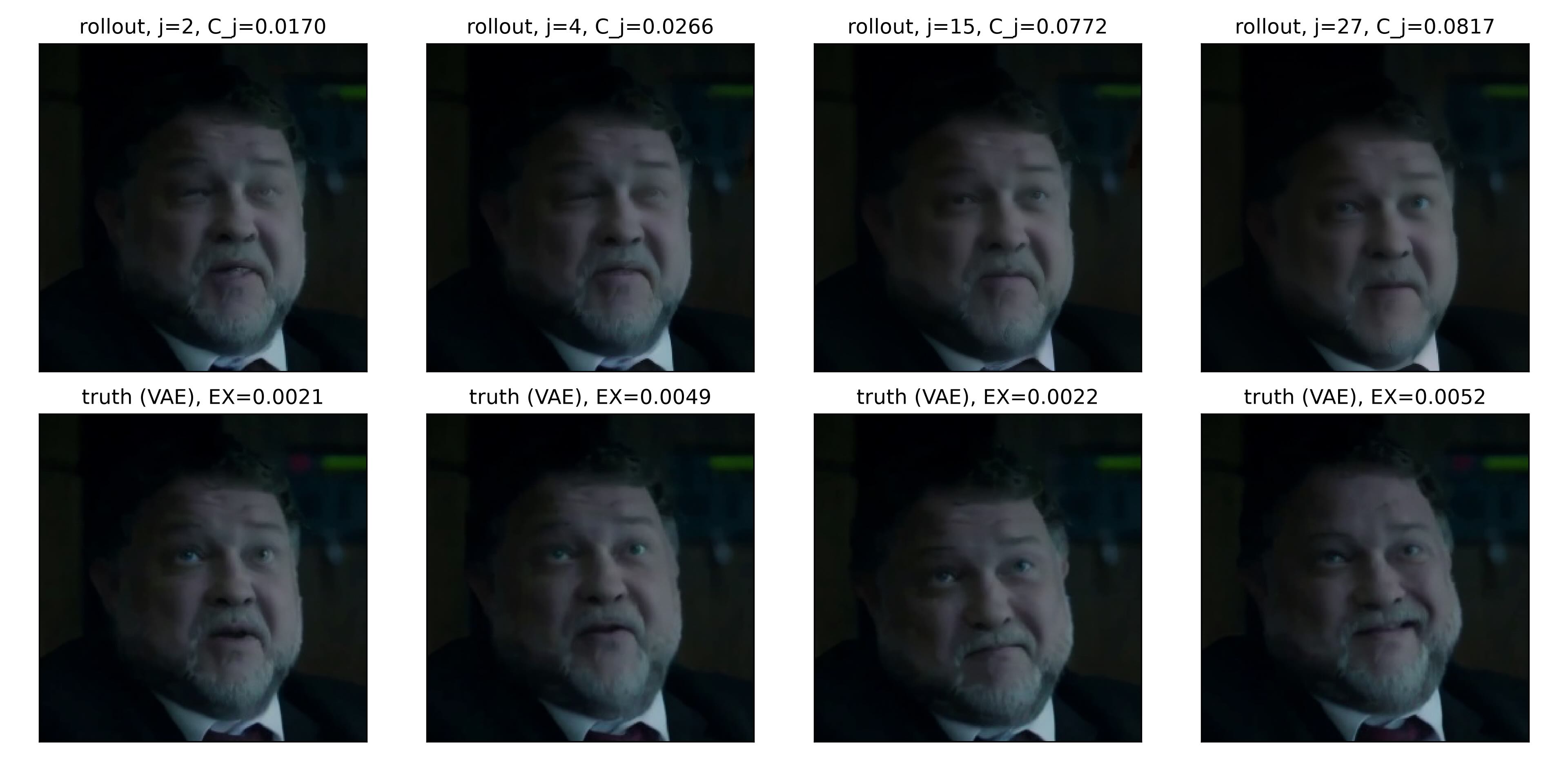}
\includegraphics[width=0.495\textwidth]{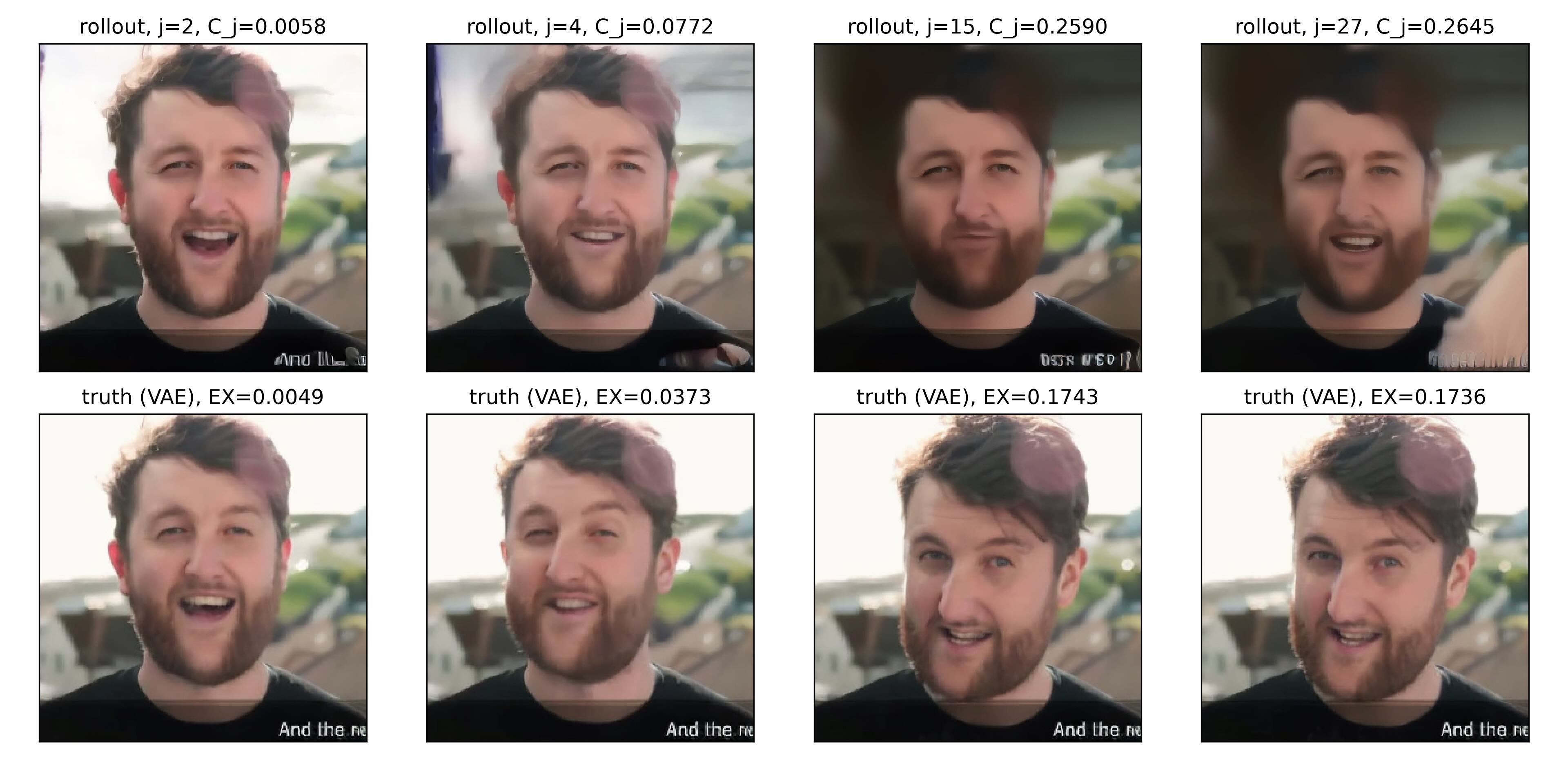}
\includegraphics[width=0.495\textwidth]{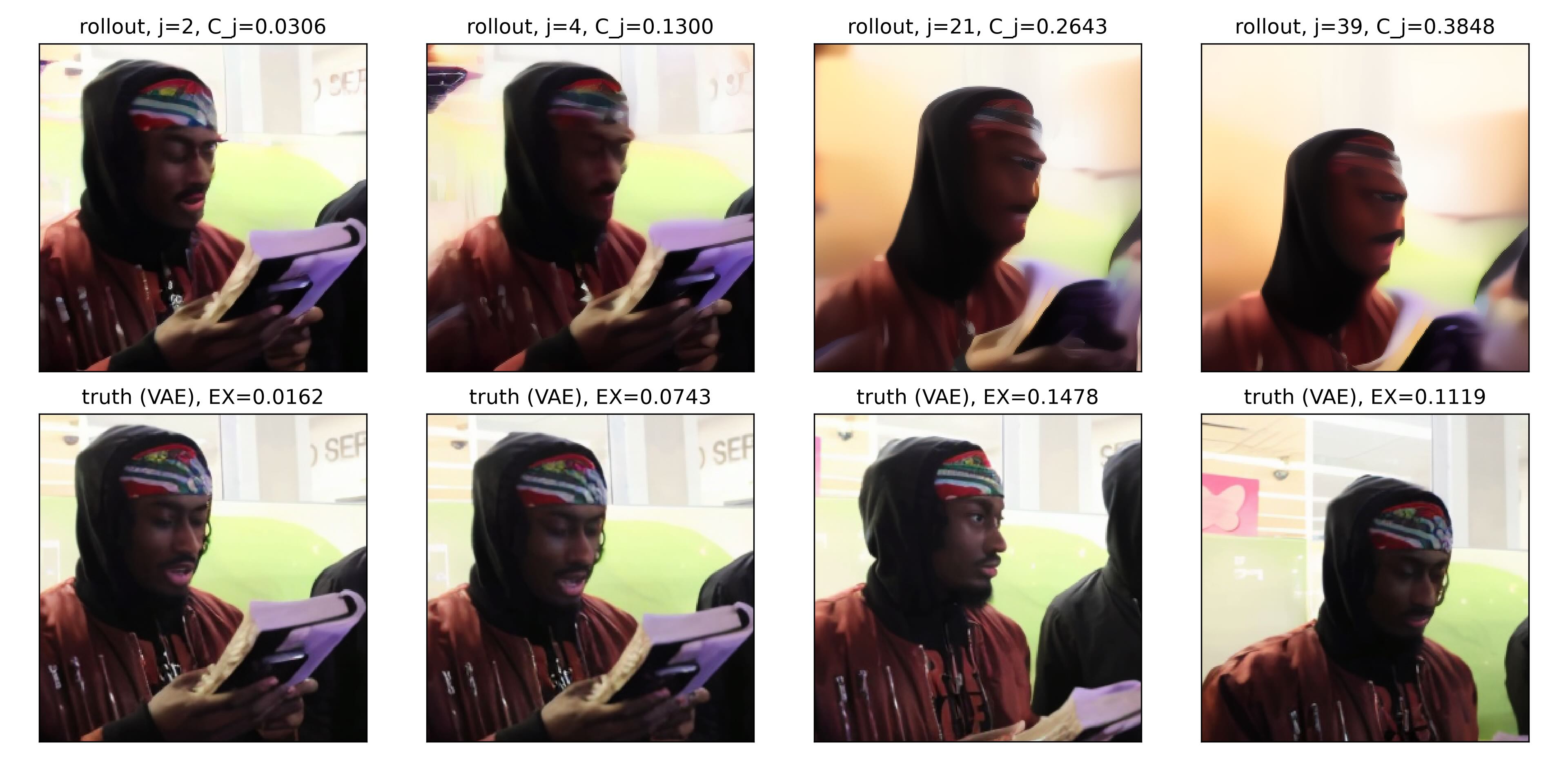}
\includegraphics[width=0.495\textwidth]{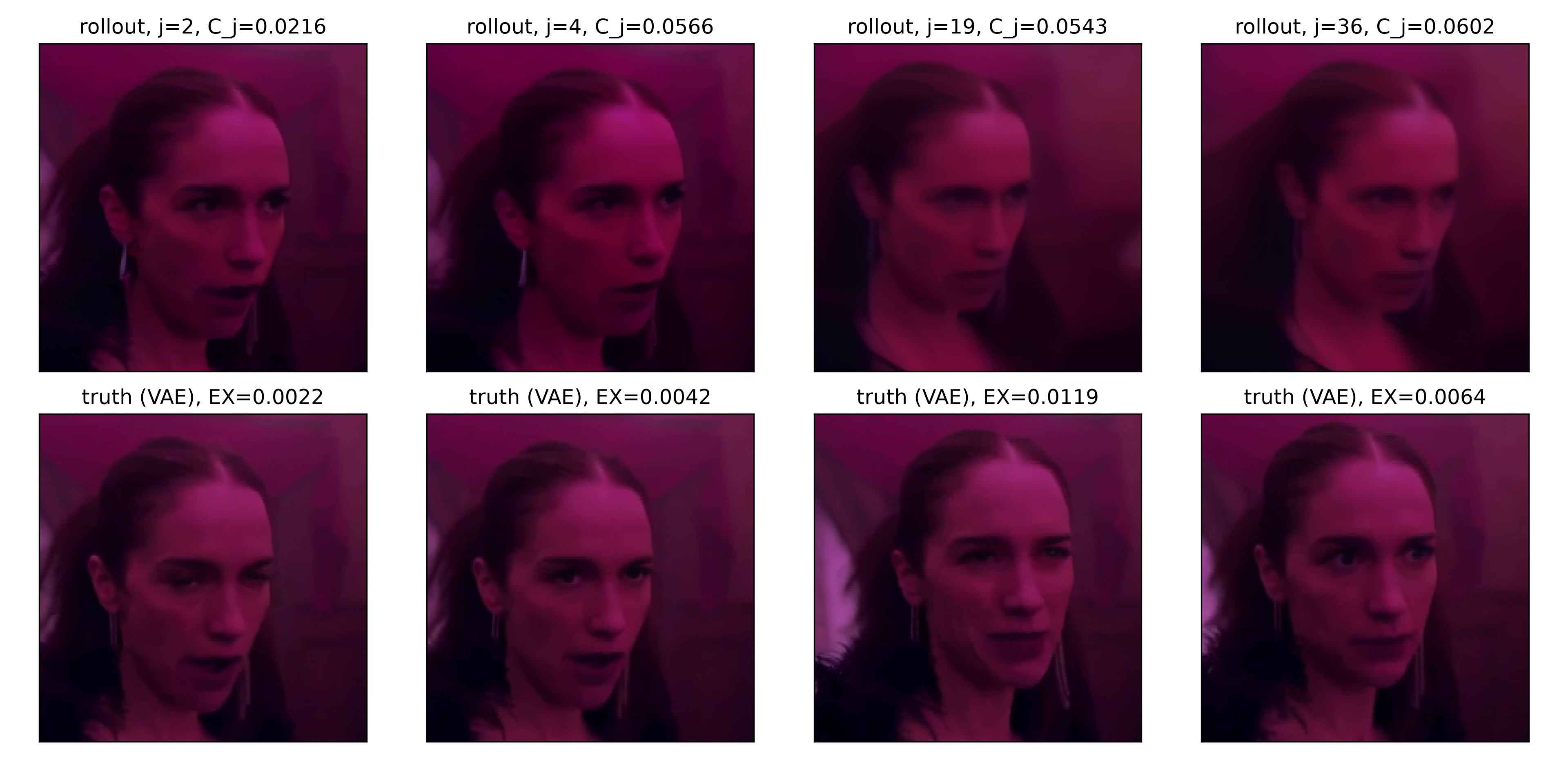}
\includegraphics[width=0.495\textwidth]{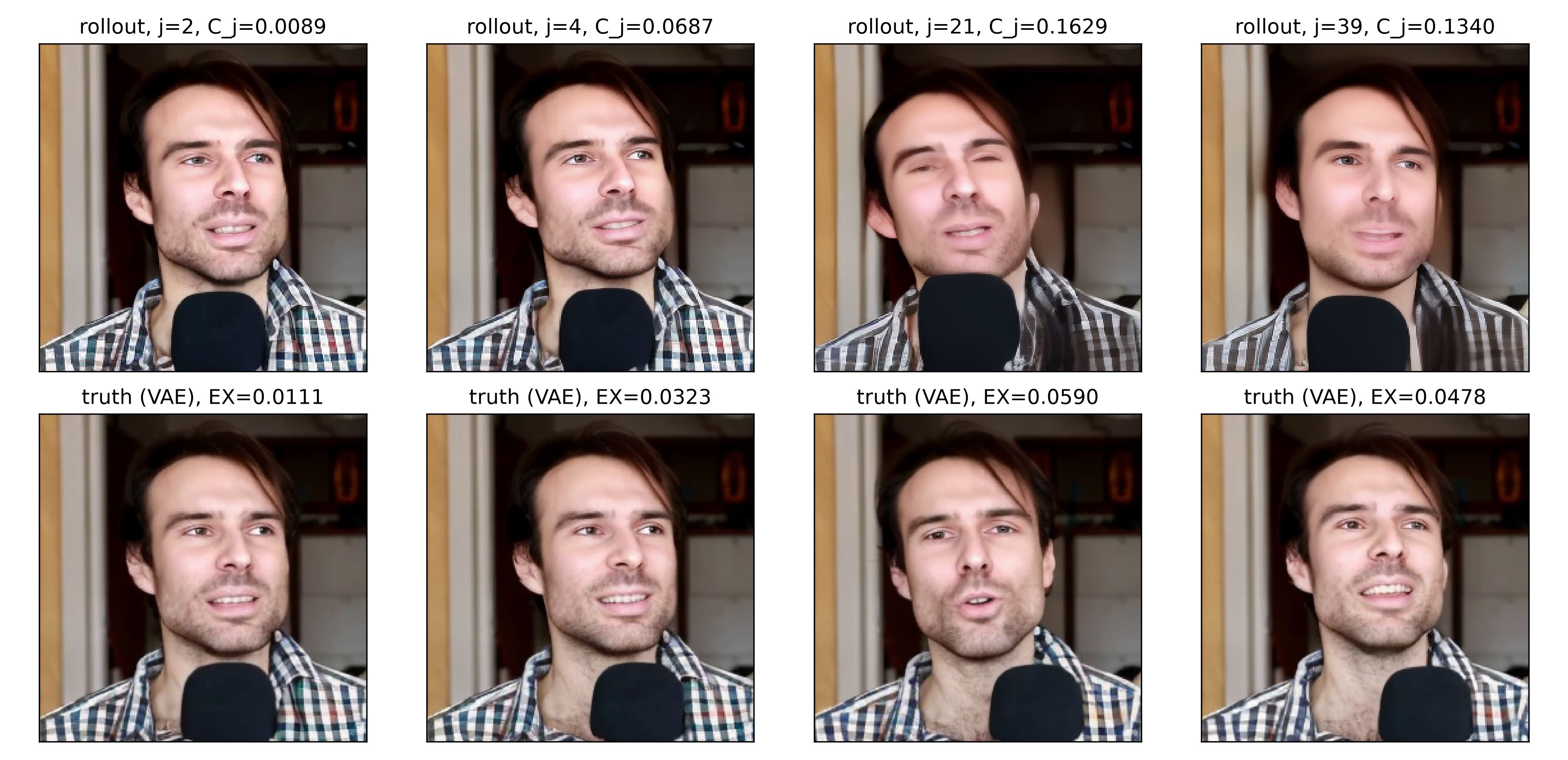}
\includegraphics[width=0.495\textwidth]{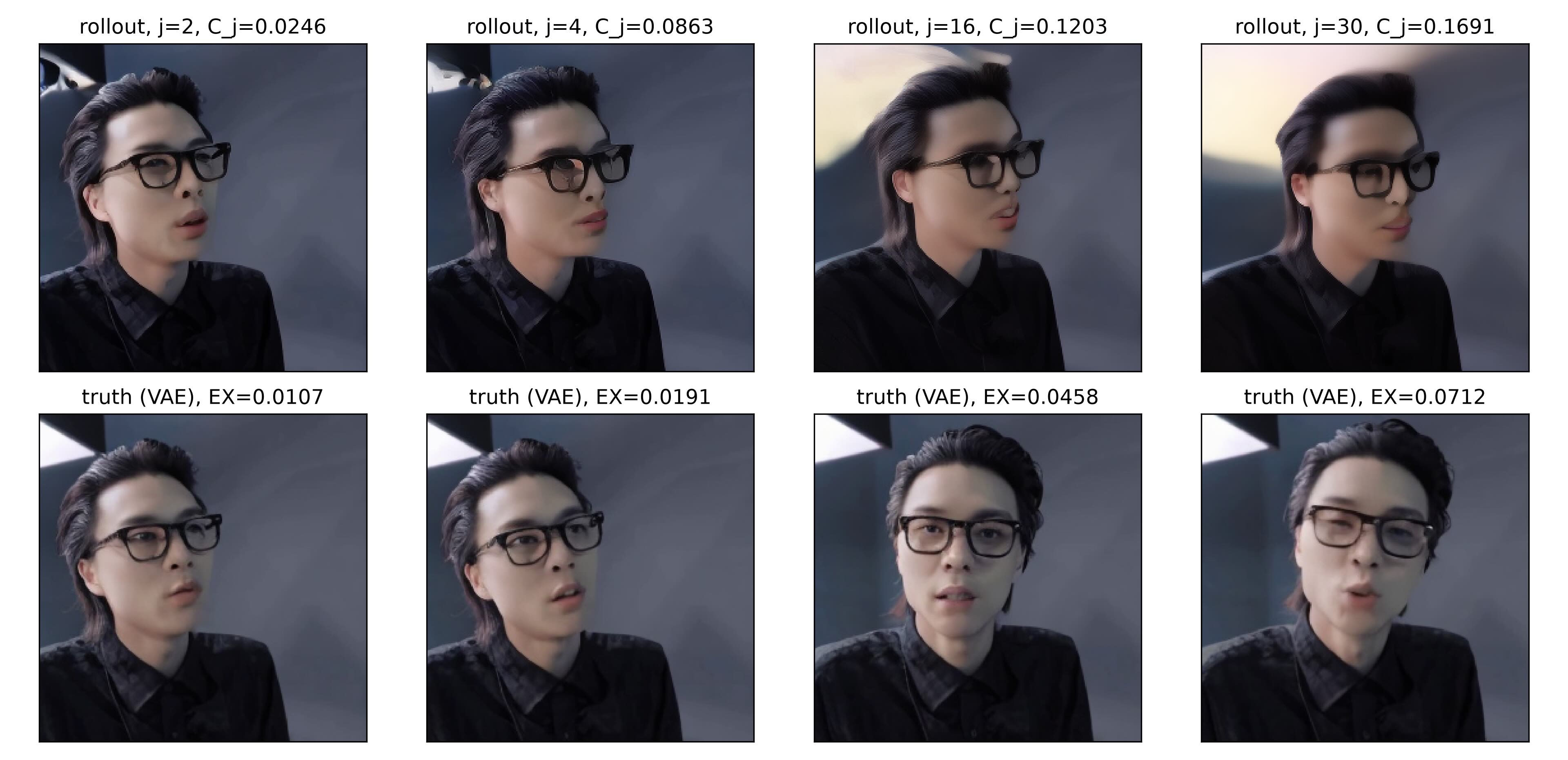}
\caption{CELEBV-HQ rollouts show with error and $\cyc{}$ values. While for such a multi-modal dataset it is impossible for the model to predict rollouts for test data, it is able to flag, in a self-supervised way, OOD data with increased $\cyc{}$ levels.}
\label{fig:CELEBVHQ_Rolls}
\end{figure*}

\begin{figure*}[t]
\centering
\includegraphics[width=0.495\textwidth]{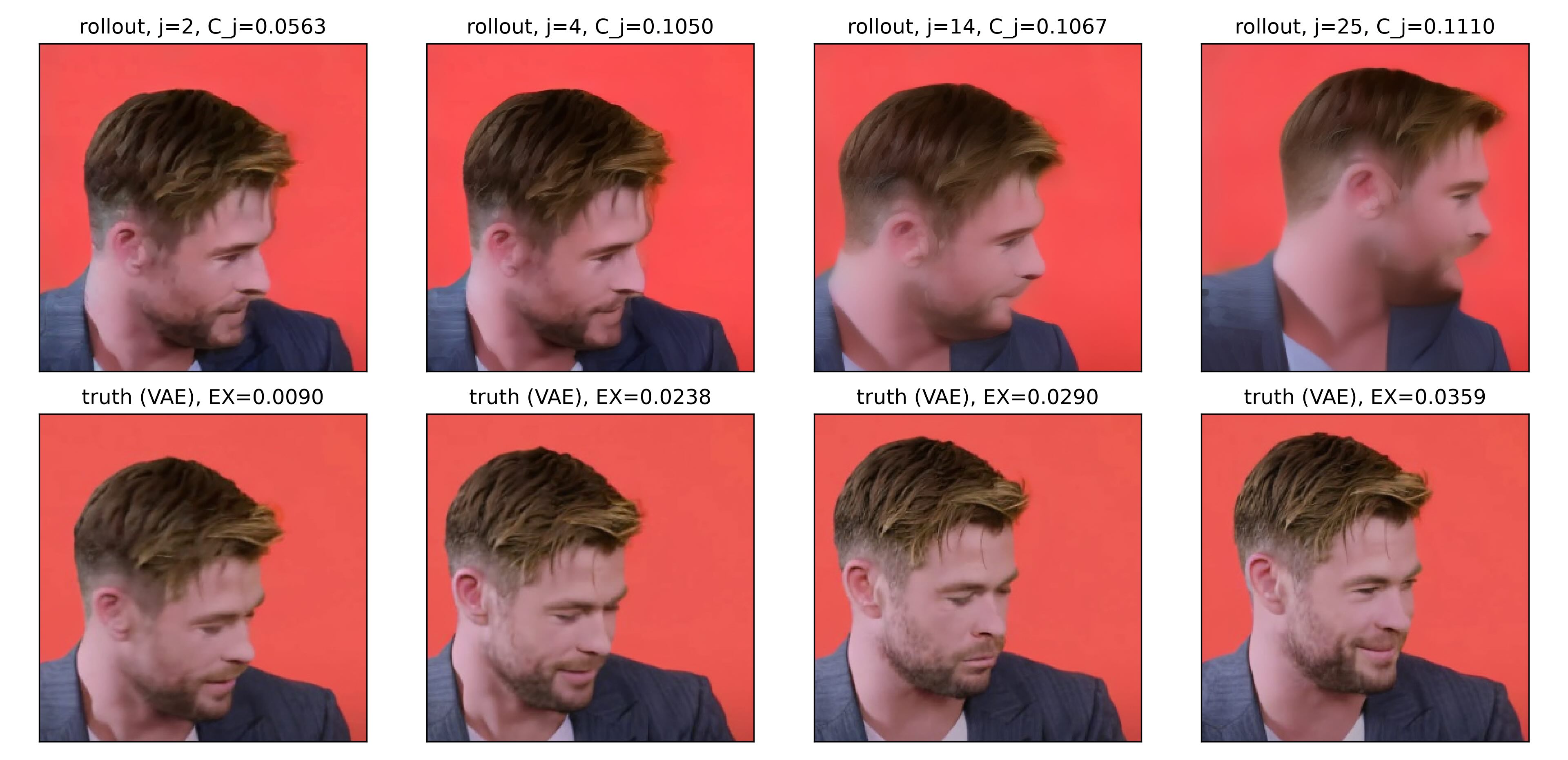}
\includegraphics[width=0.495\textwidth]{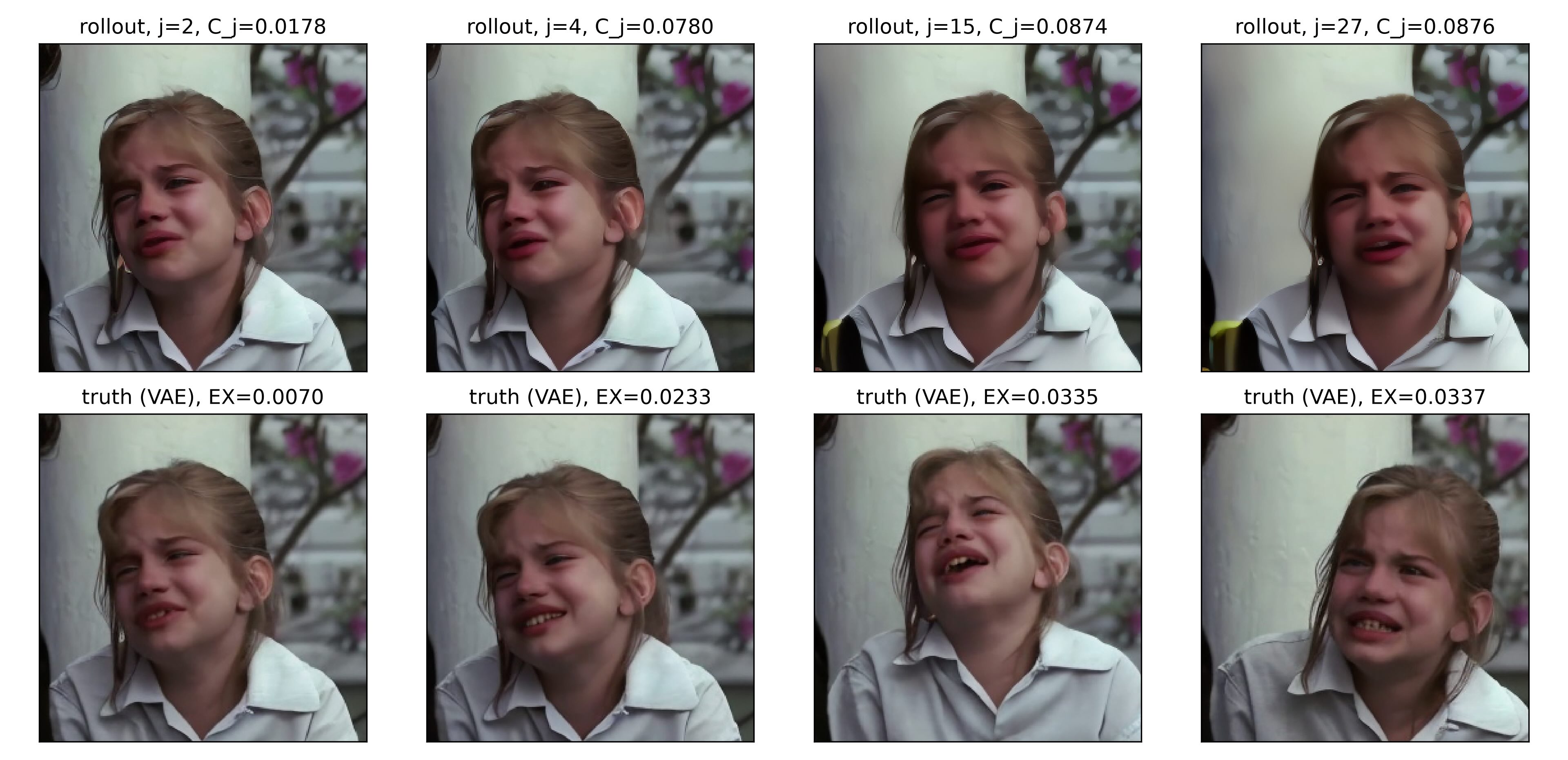}
\includegraphics[width=0.495\textwidth]{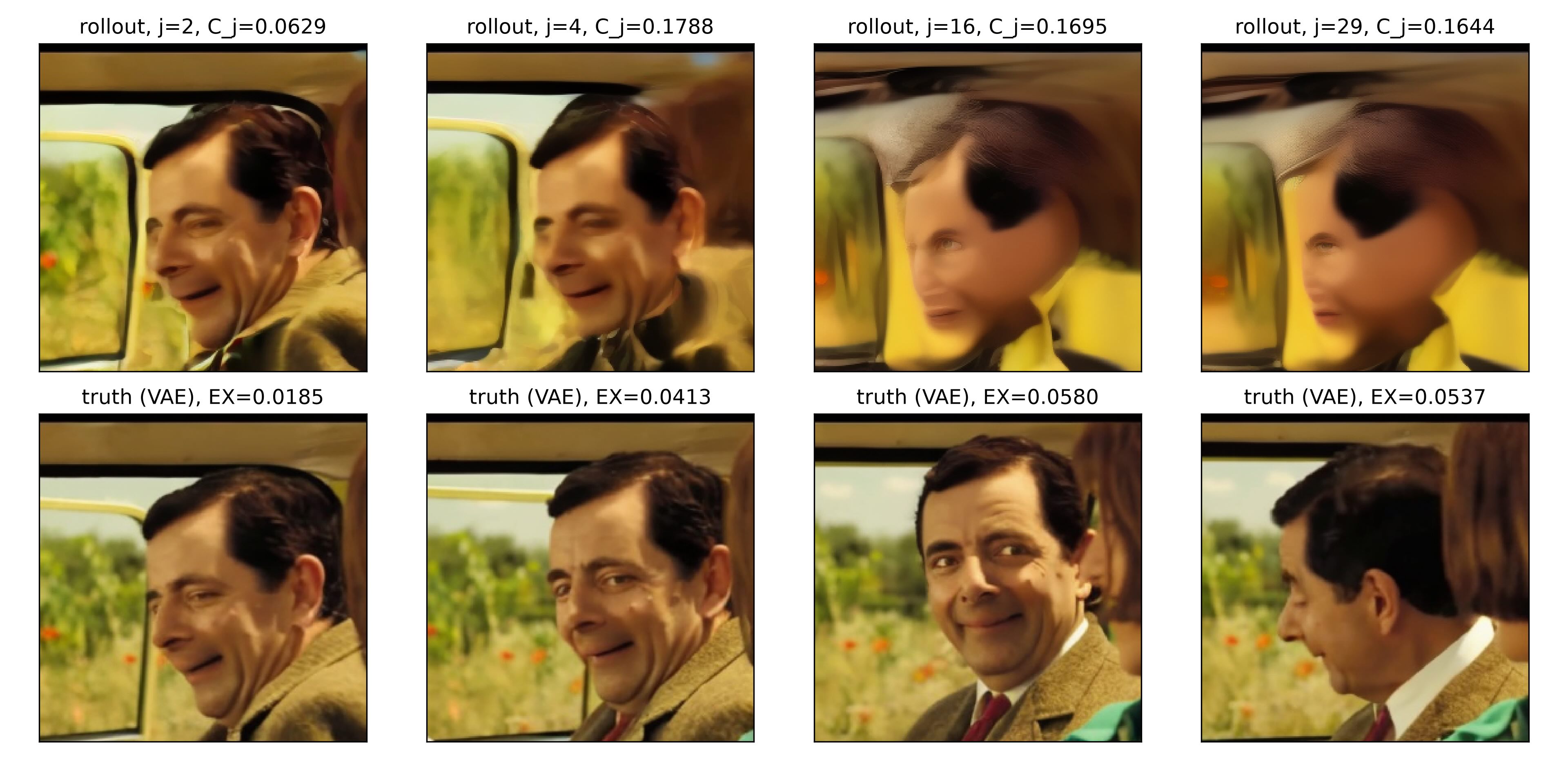}
\includegraphics[width=0.495\textwidth]{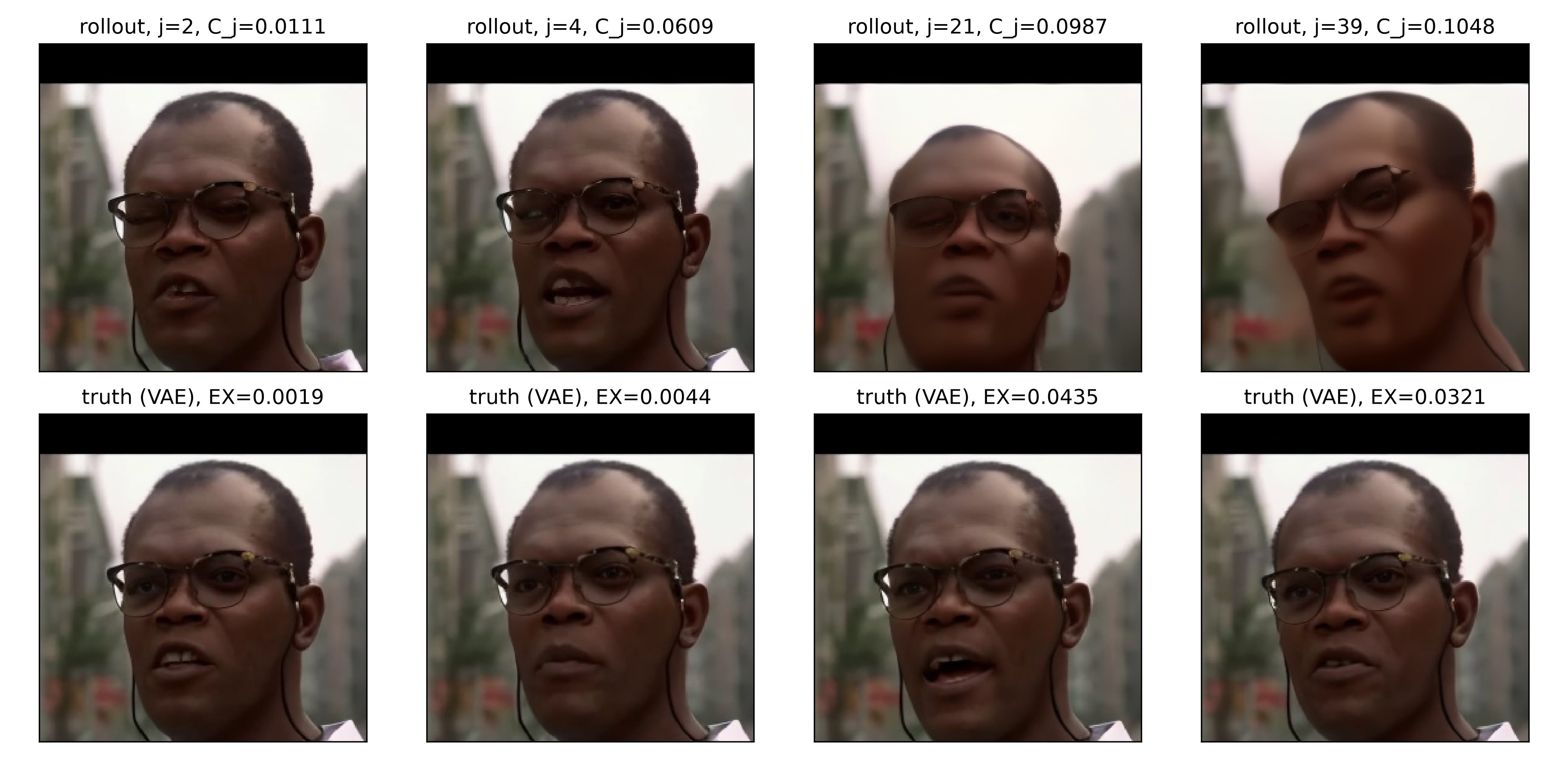}
\includegraphics[width=0.495\textwidth]{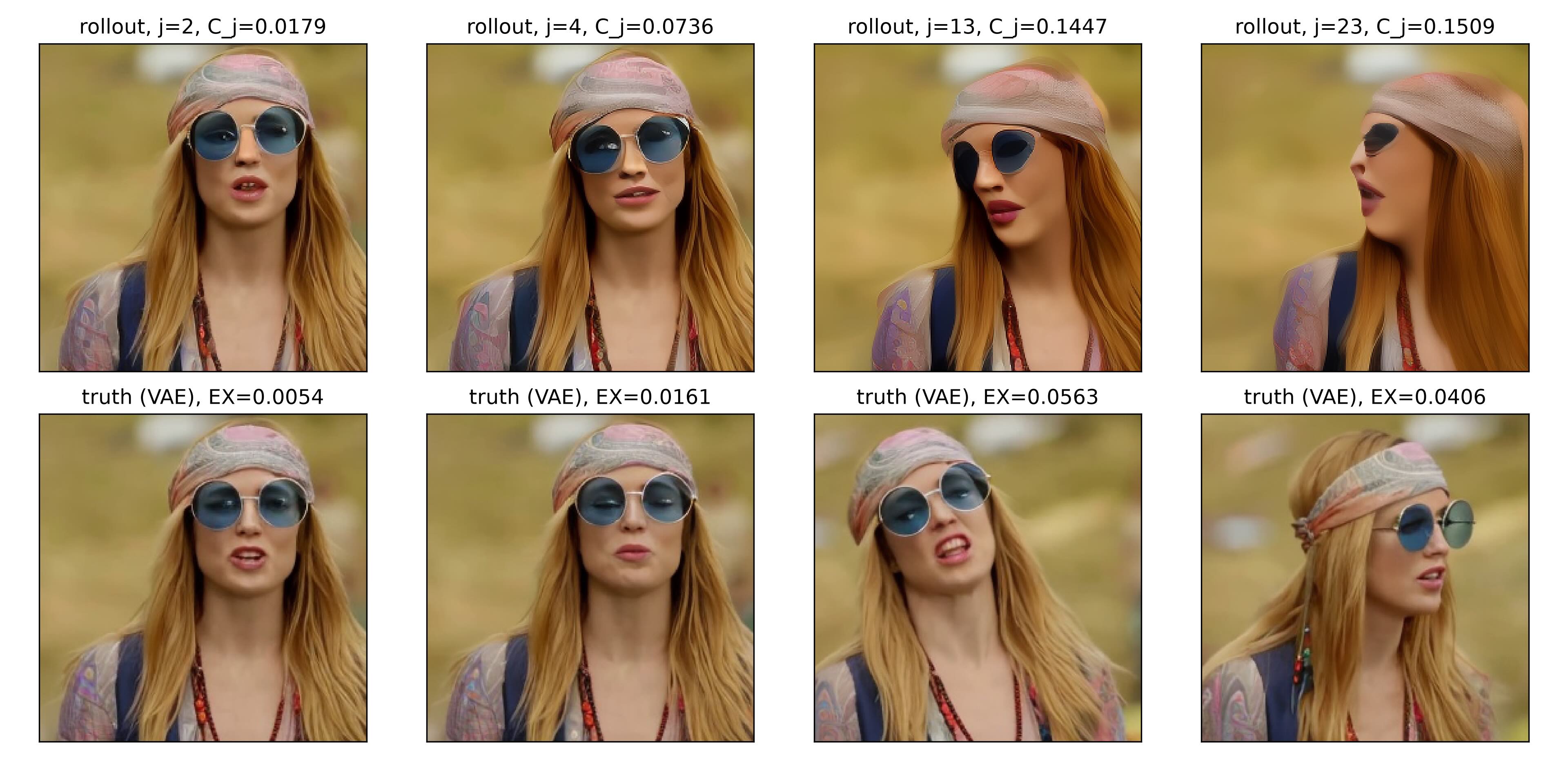}
\includegraphics[width=0.495\textwidth]{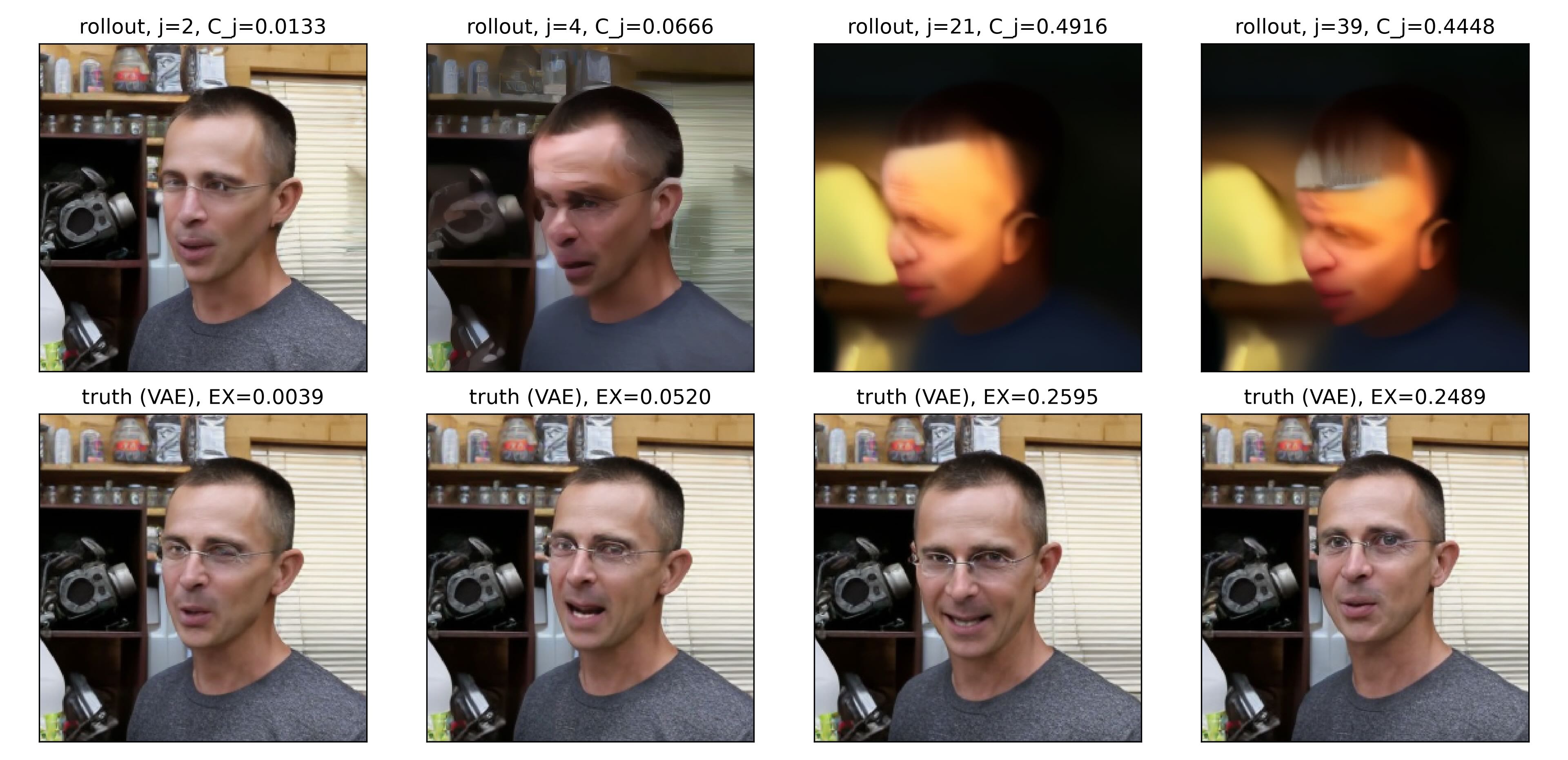}
\includegraphics[width=0.495\textwidth]{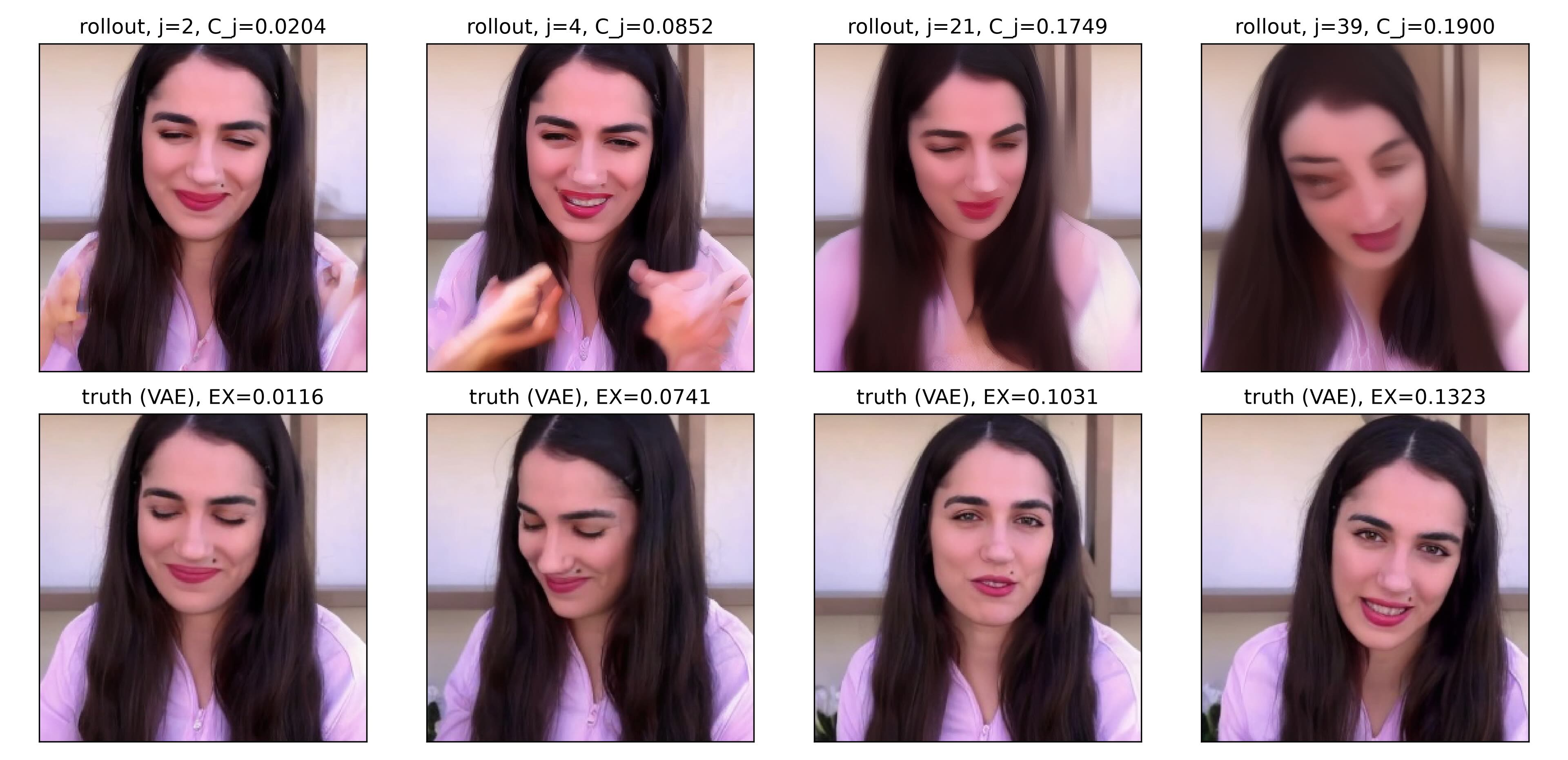}
\includegraphics[width=0.495\textwidth]{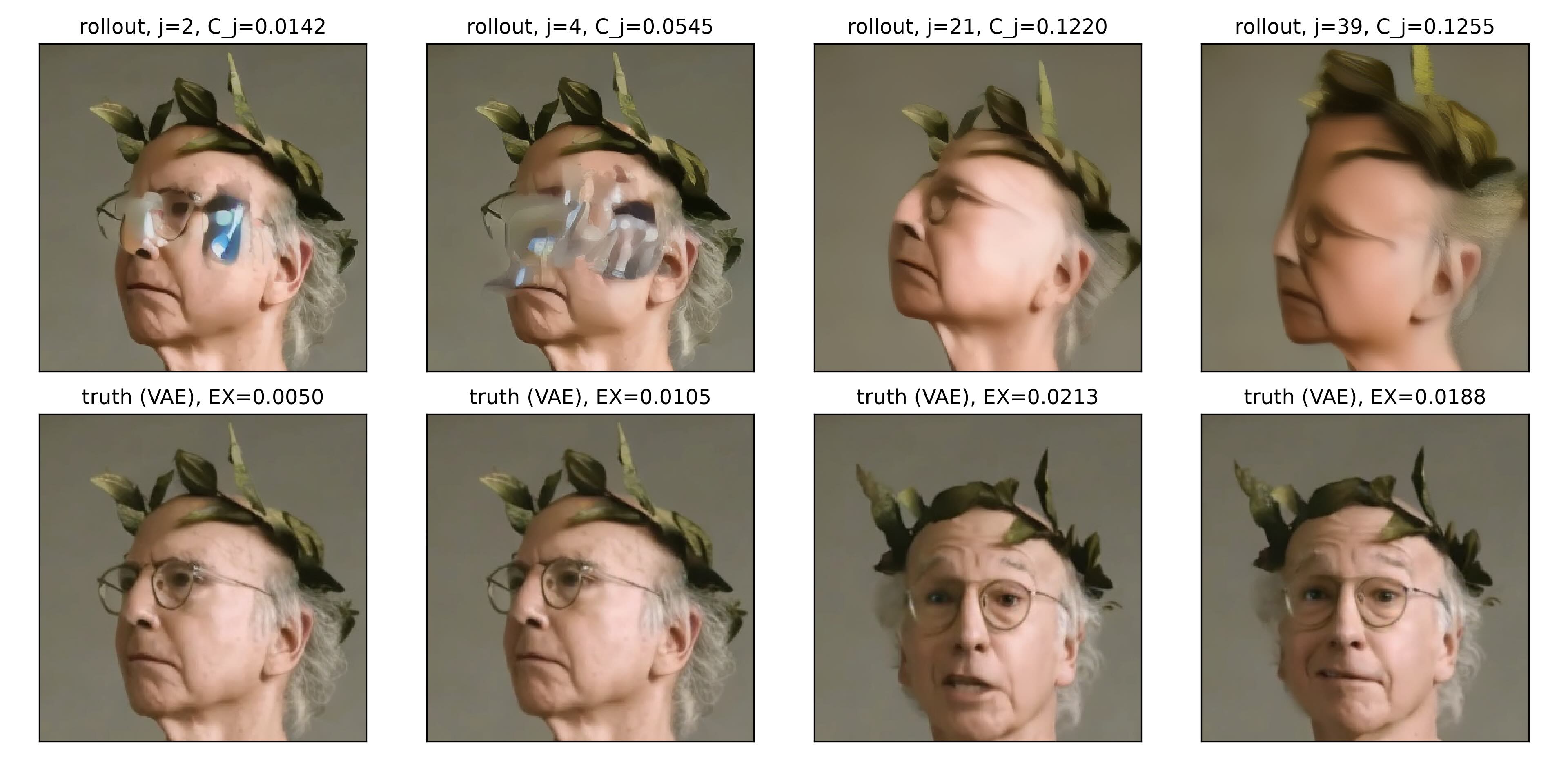}
\includegraphics[width=0.495\textwidth]{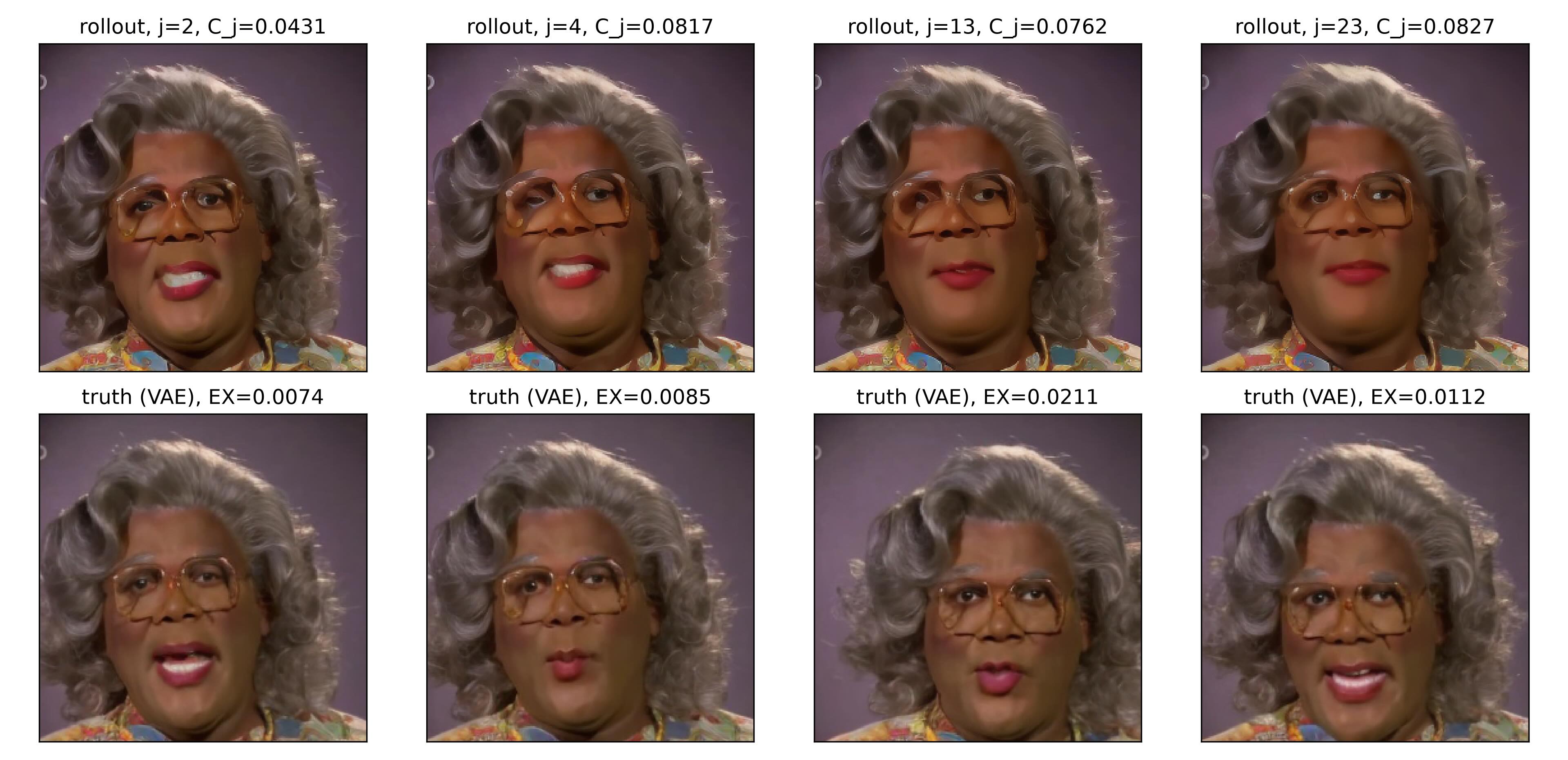}
\includegraphics[width=0.495\textwidth]{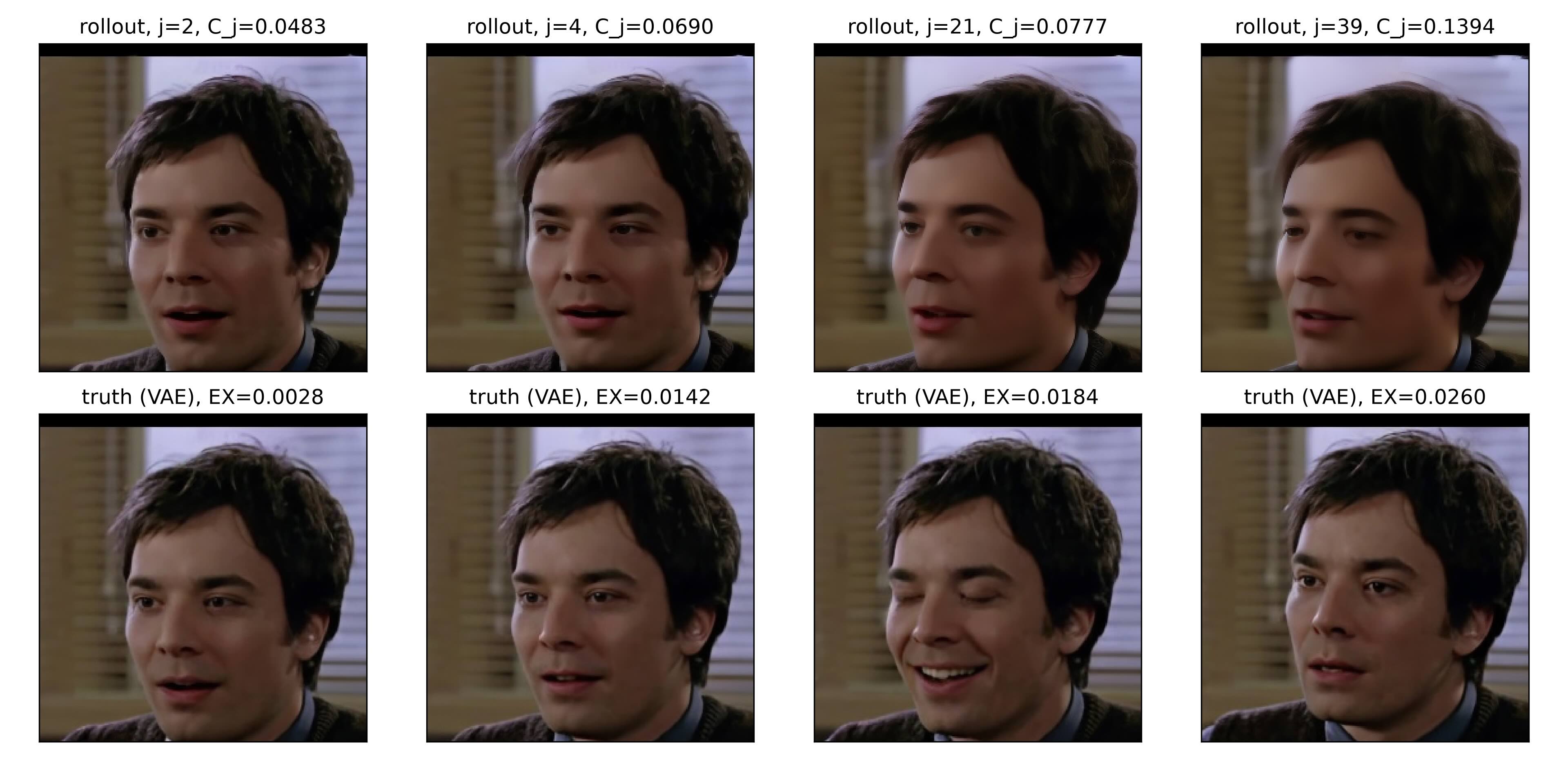}
\caption{CELEBV-HQ rollouts show with error and $\cyc{}$ values. While for such a multi-modal dataset it is impossible for the model to predict rollouts for test data, it is able to flag, in a self-supervised way, OOD data with increased $\cyc{}$ levels.}
\label{fig:CELEBVHQ_Rolls_More}
\end{figure*}

\begin{figure*}[h]
\centering
   \includegraphics[width=1.0\textwidth]{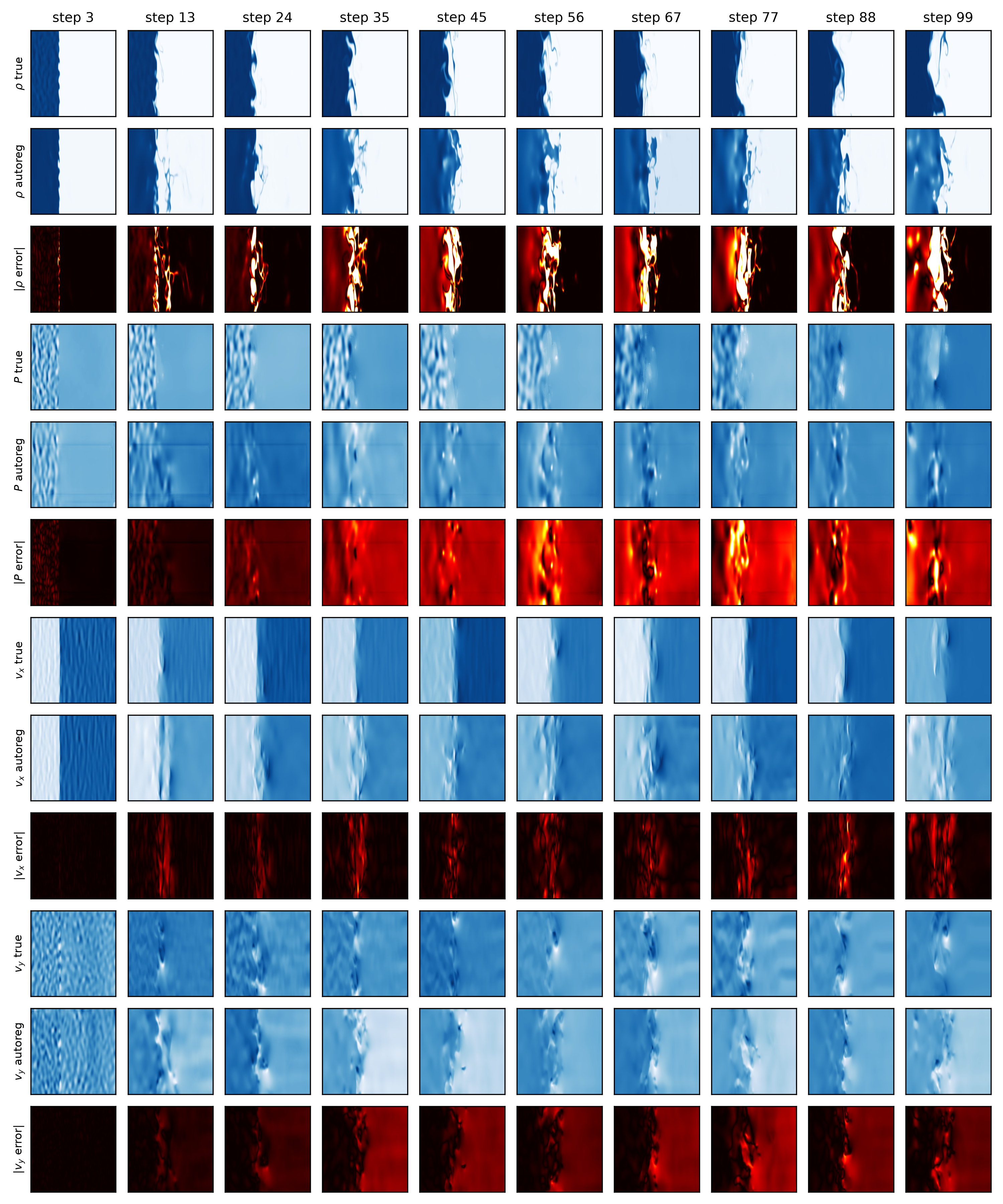}
  \caption{Forward rollout 99 steps for one test trajectory.}
  \label{fig:back_forward_turb_flow}
\end{figure*}

\begin{figure*}[h]
\centering
   \includegraphics[width=1.0\textwidth]{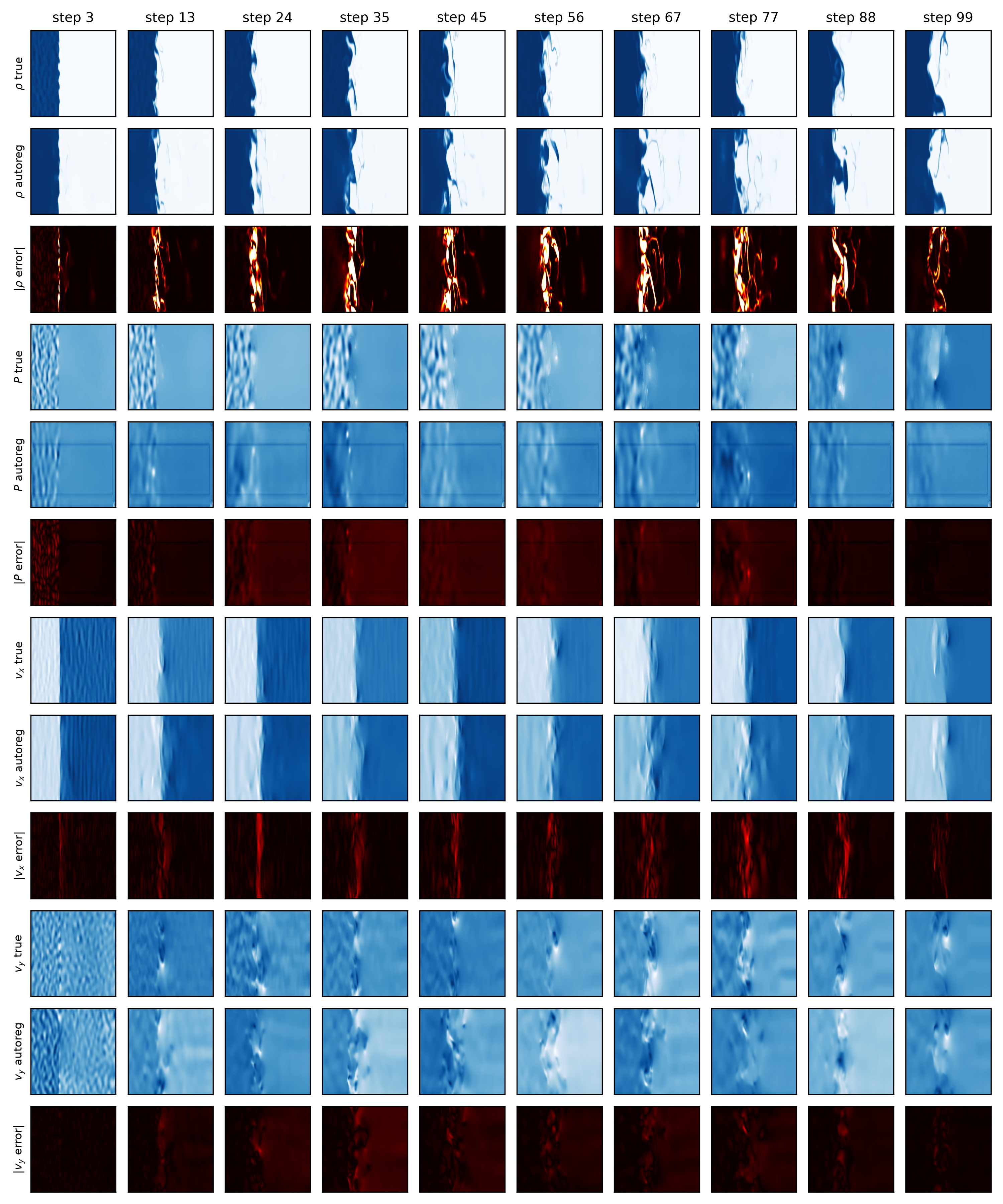}
  \caption{Backward rollout 99 steps for one test trajectory.}
  \label{fig:back_forward_turb_flow}
\end{figure*}

\begin{figure*}[h]
\centering
   \includegraphics[width=0.75\textwidth]{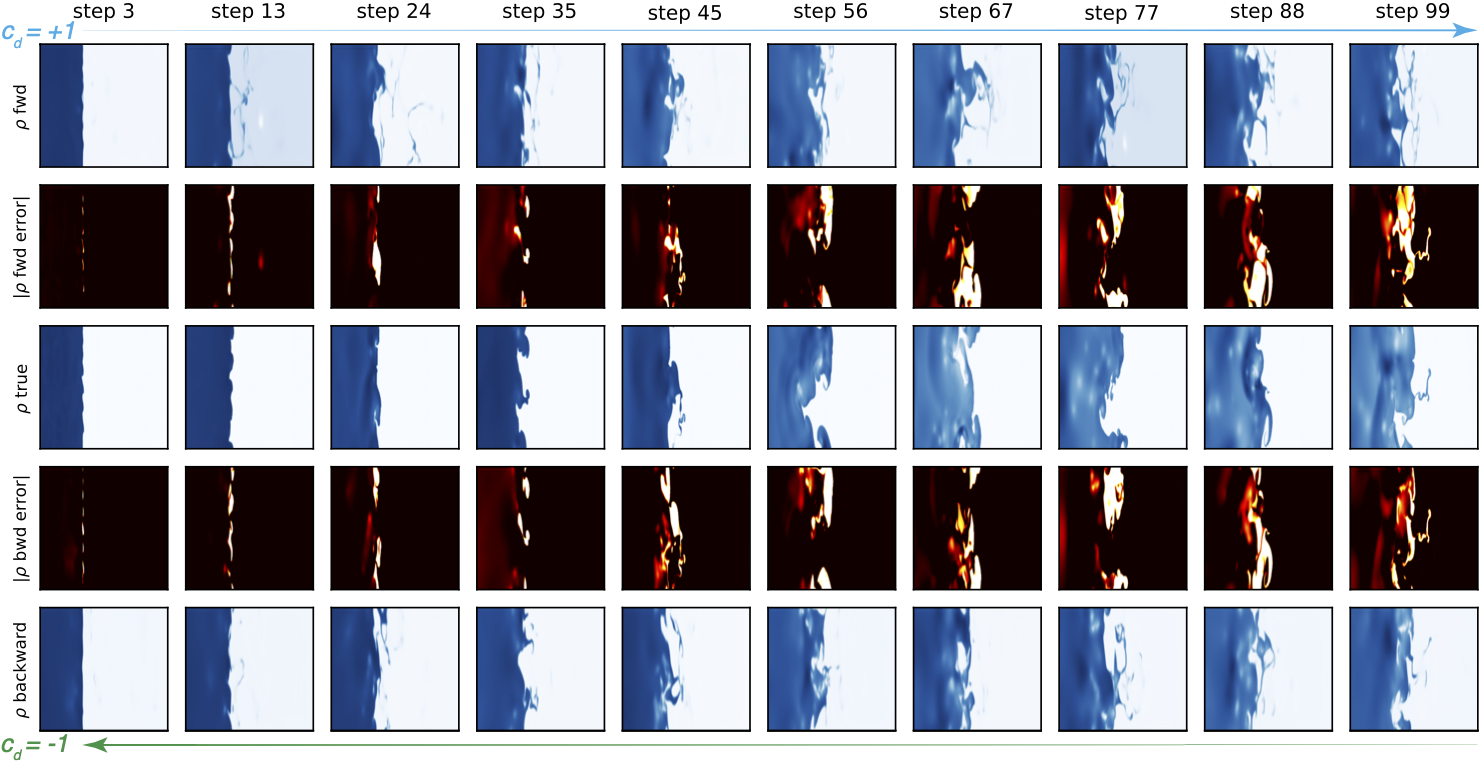}
   \includegraphics[width=0.75\textwidth]{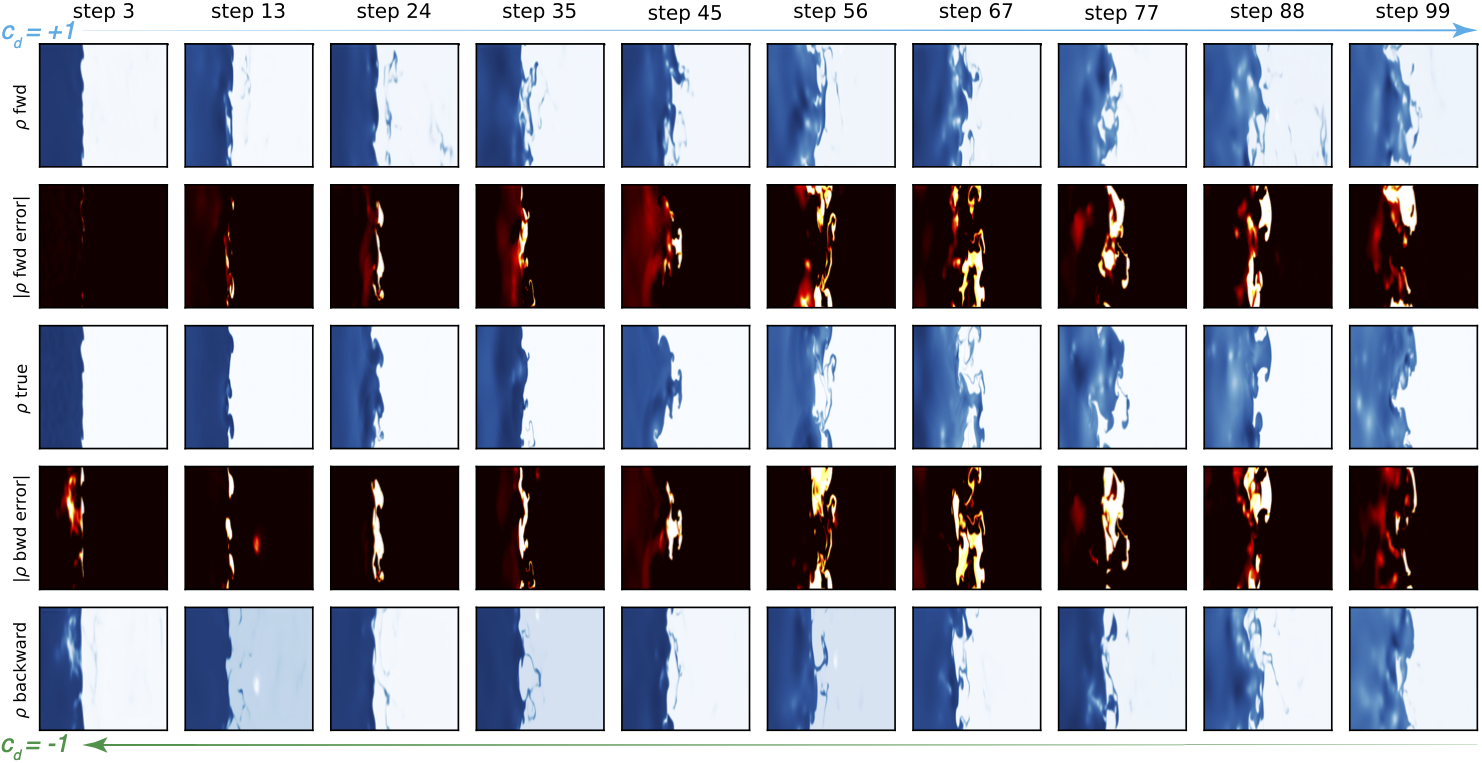}
   \includegraphics[width=0.75\textwidth]{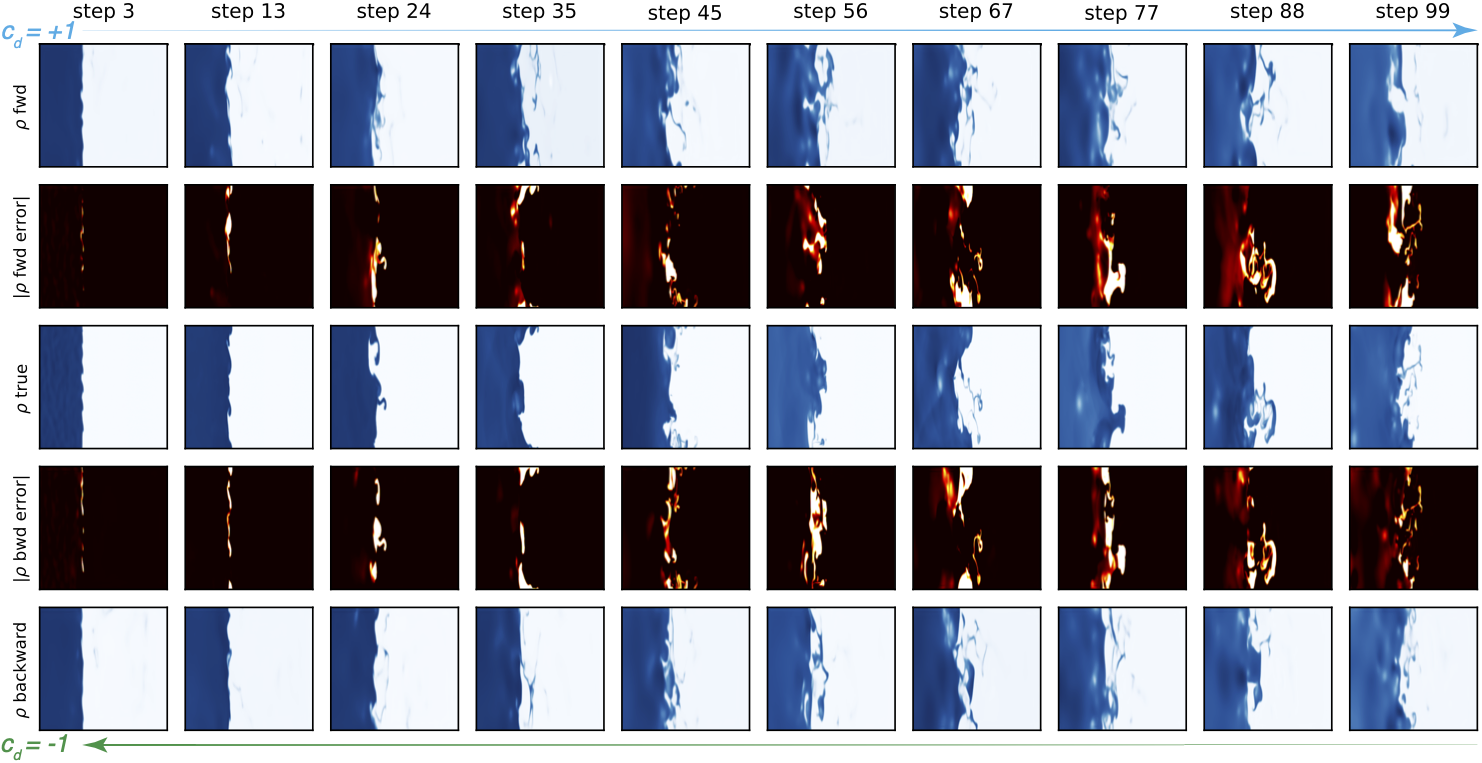}
  \caption{Comparing forward and backward rollout of the denisity fields for 99 steps for test trajectories 1, 2, and 3. For each of the 3 images, the 1st row shows forward autoregressive generation using $c_d=+1$, the 2nd row shows the absolute difference between forward generated density and the true density, the 3rd row shows the true density at each time step, the 4th row shows the absolute difference between backward generated density and the true density, and finally the 5th row shows the backward autoregressive generation using $c_d=-1$.}
  \label{fig:fwd_bck_trb_012}
\end{figure*}

\begin{figure*}[h]
\centering
   \includegraphics[width=0.75\textwidth]{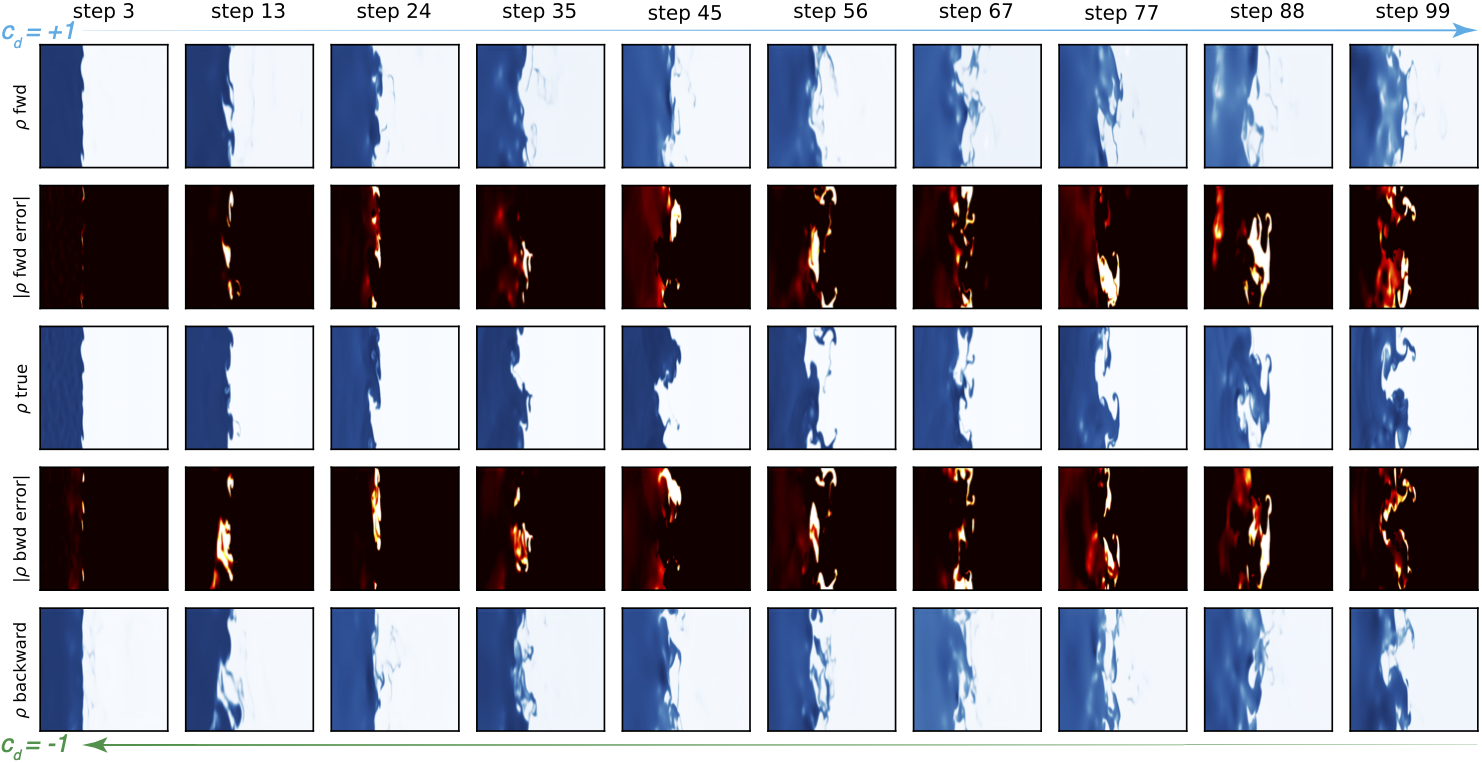}
   \includegraphics[width=0.75\textwidth]{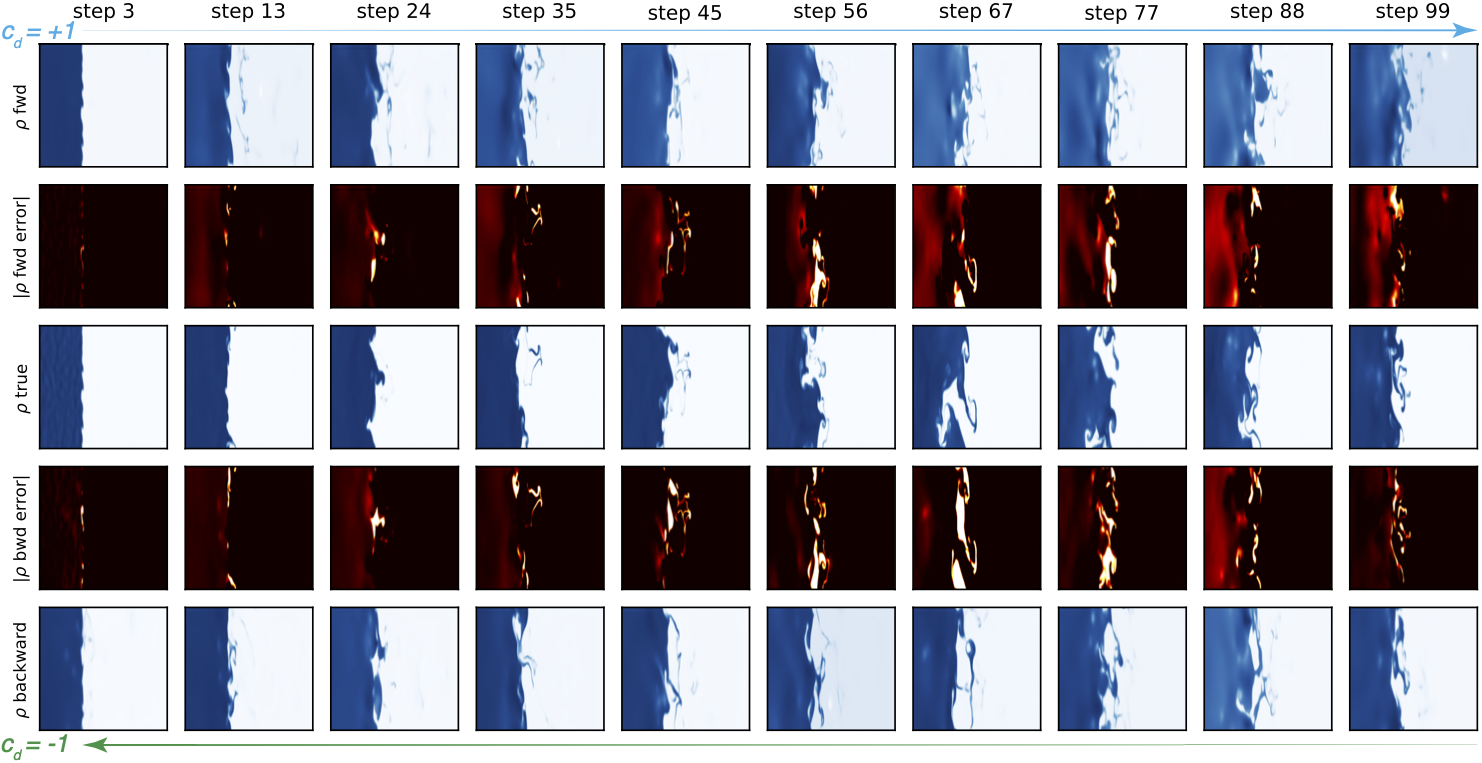}
   \includegraphics[width=0.75\textwidth]{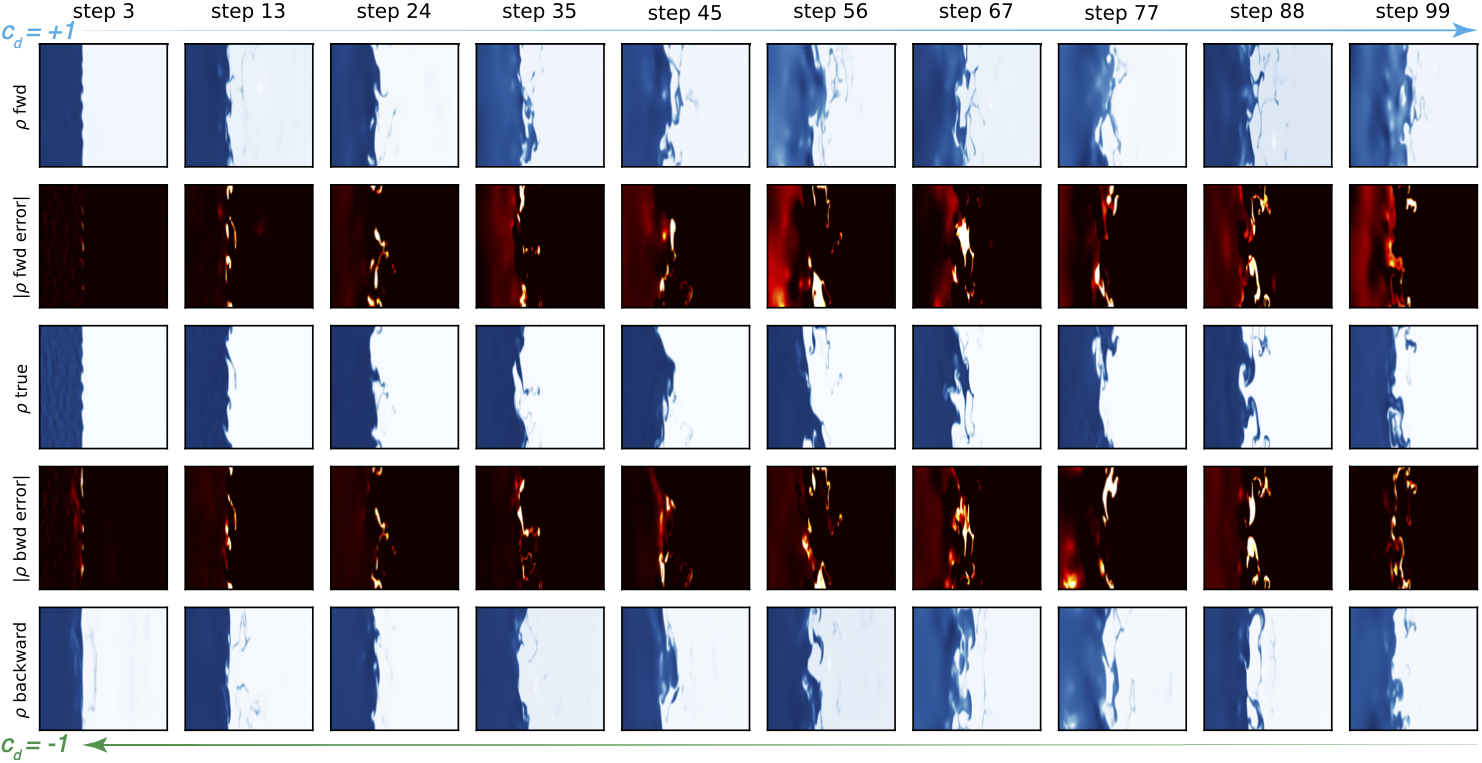}
  \caption{Comparing forward and backward rollout of the denisity fields for 99 steps for test trajectories 4, 5, and 6. For each of the 3 images, the 1st row shows forward autoregressive generation using $c_d=+1$, the 2nd row shows the absolute difference between forward generated density and the true density, the 3rd row shows the true density at each time step, the 4th row shows the absolute difference between backward generated density and the true density, and finally the 5th row shows the backward autoregressive generation using $c_d=-1$.}
  \label{fig:fwd_bck_trb_345}
\end{figure*}

\begin{figure*}[h]
\centering
   \includegraphics[width=0.75\textwidth]{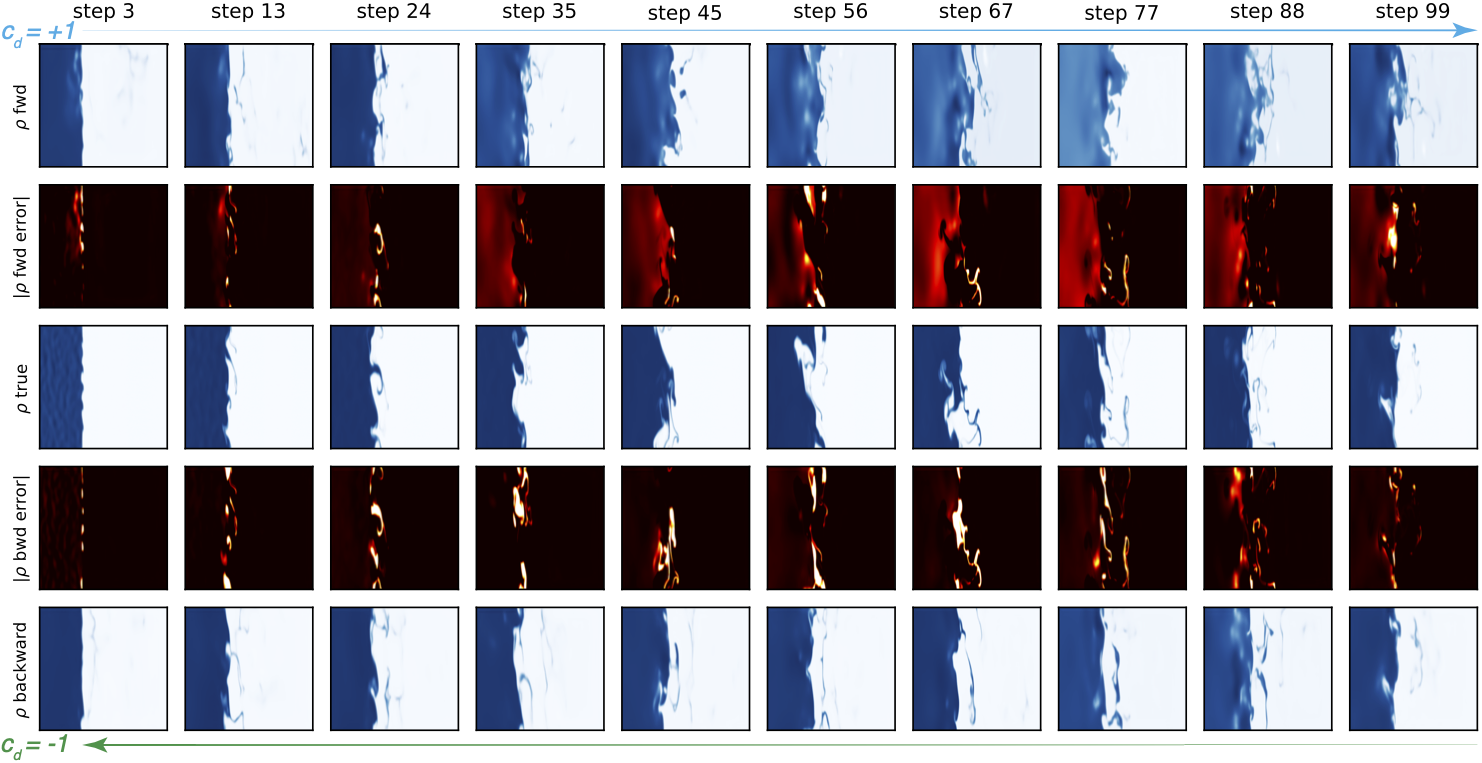}
   \includegraphics[width=0.75\textwidth]{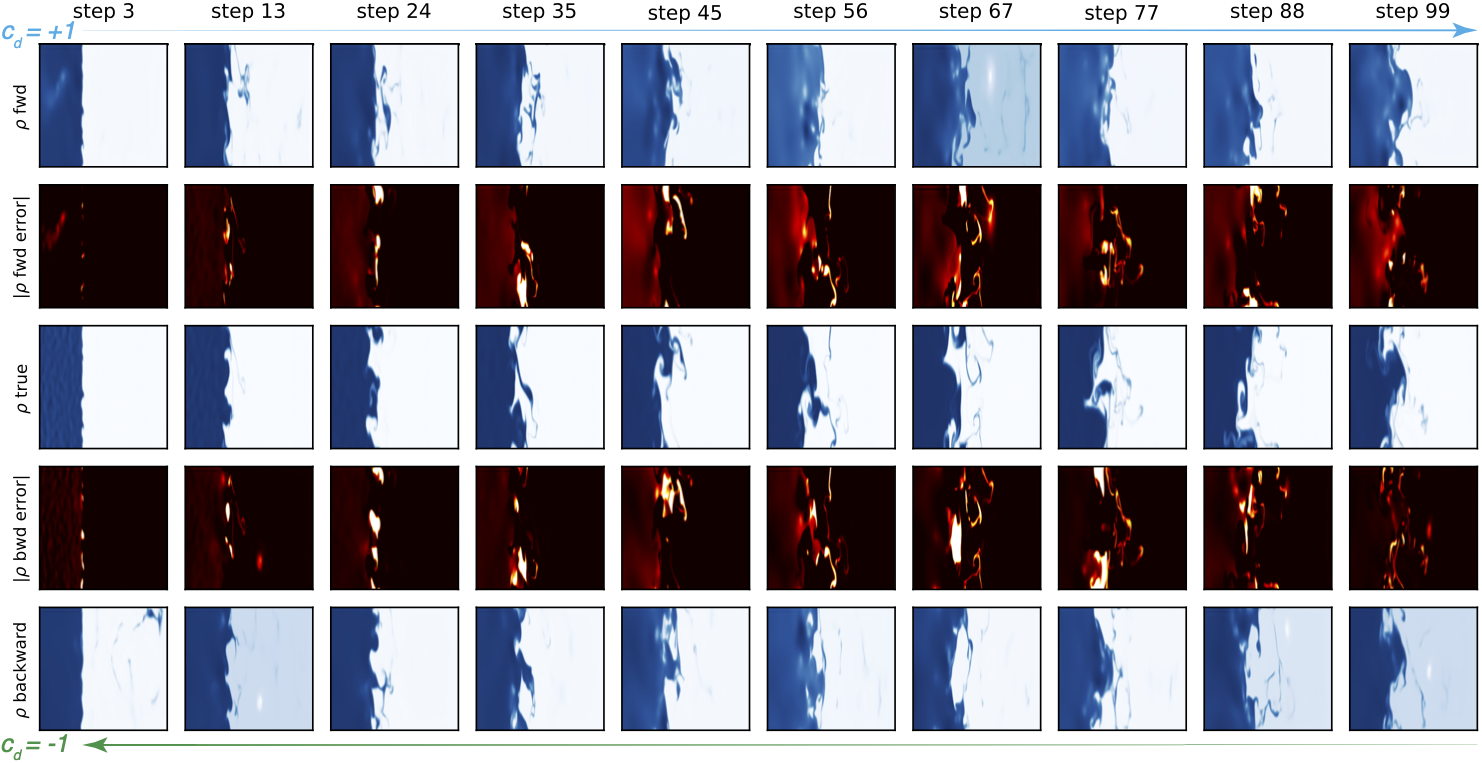}
   \includegraphics[width=0.75\textwidth]{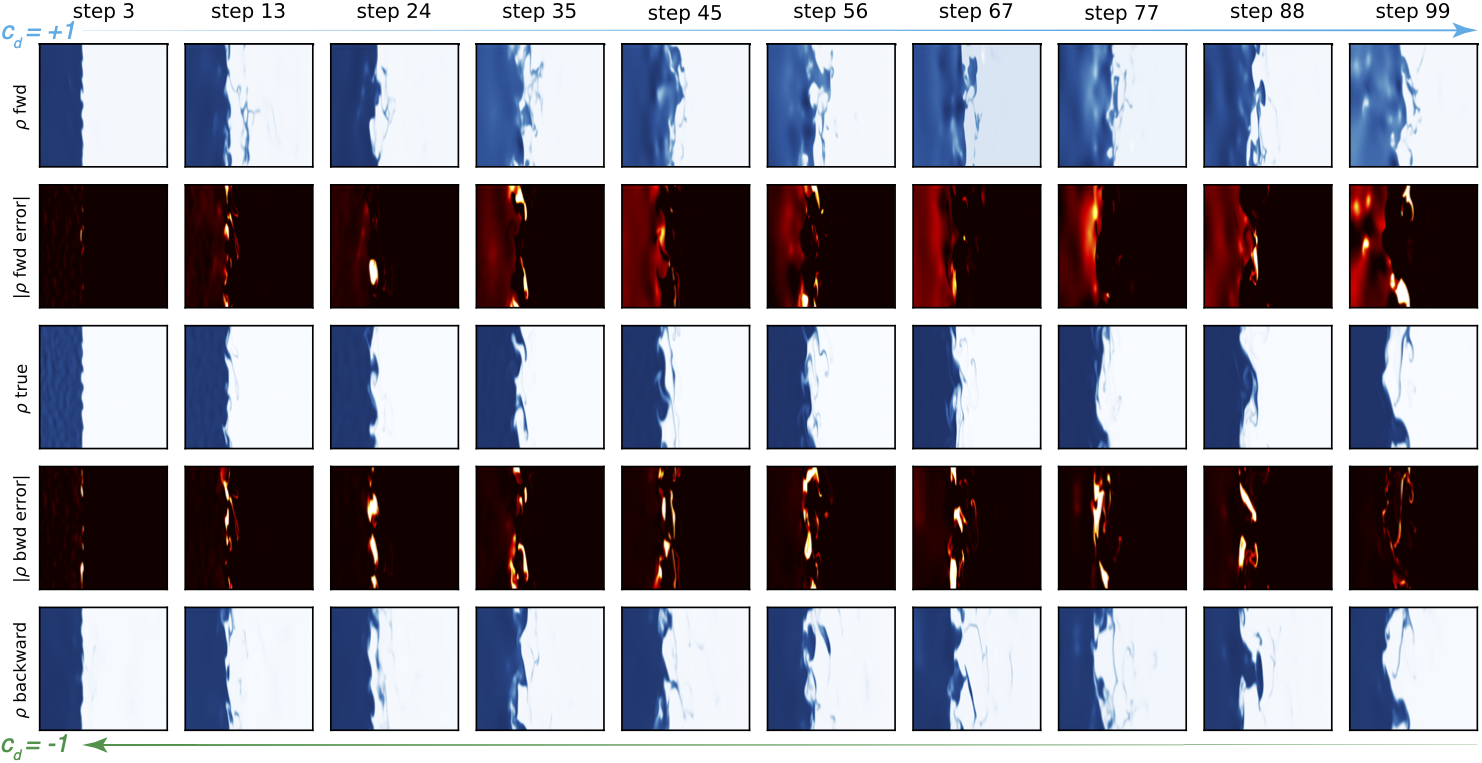}
  \caption{Comparing forward and backward rollout of the denisity fields for 99 steps for test trajectories 7, 8, and 9. For each of the 3 images, the 1st row shows forward autoregressive generation using $c_d=+1$, the 2nd row shows the absolute difference between forward generated density and the true density, the 3rd row shows the true density at each time step, the 4th row shows the absolute difference between backward generated density and the true density, and finally the 5th row shows the backward autoregressive generation using $c_d=-1$.}
  \label{fig:fwd_bck_trb_678}
\end{figure*}

\begin{figure}[t]
\centering
\includegraphics[width=1.0\columnwidth]{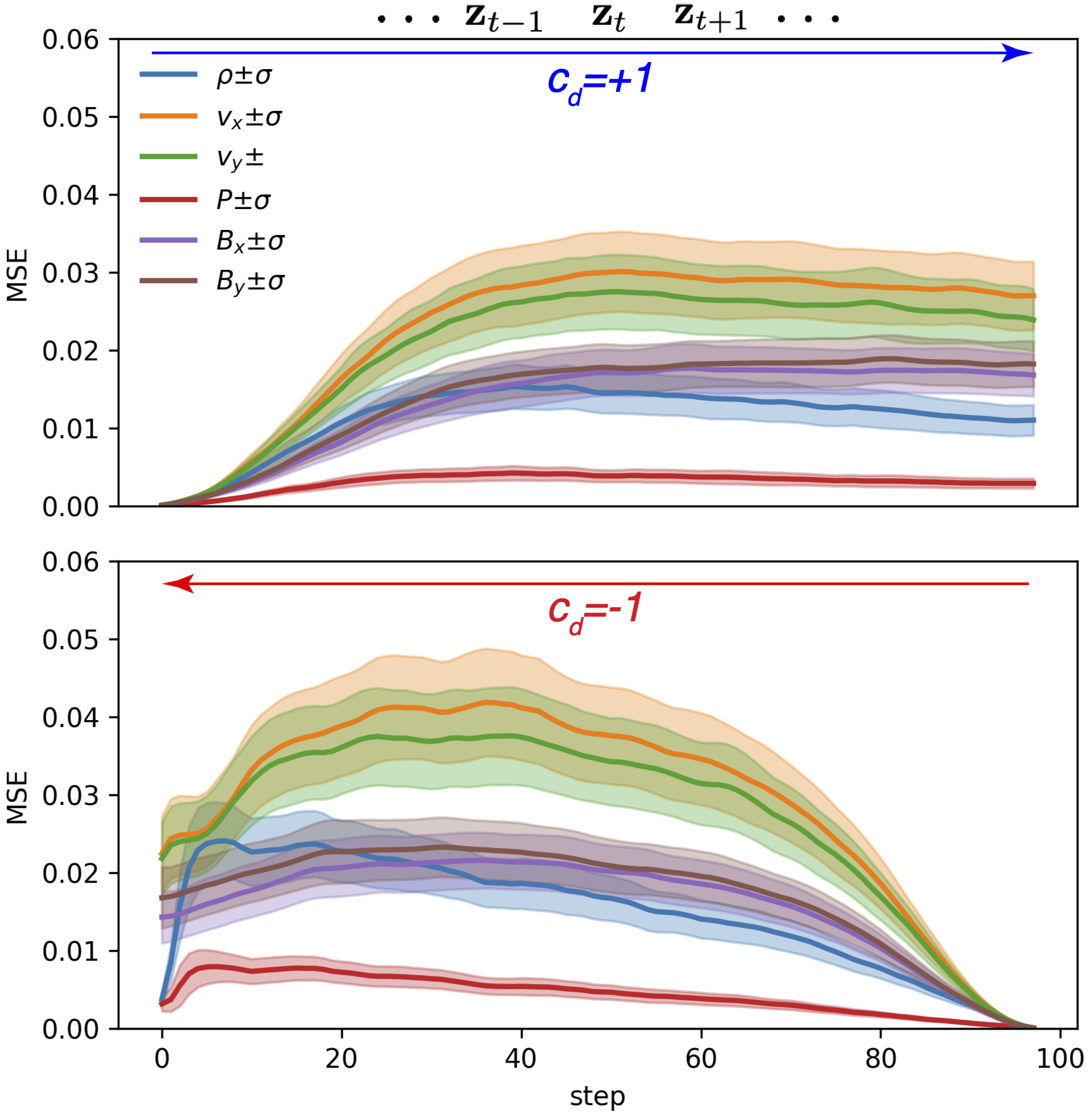}
\caption{MHD forward (top) and backward (bottom) autoregressive rollout error accumulation.}
\label{fig:inv-ci}
\end{figure}

\begin{figure*}[t]
\centering
\includegraphics[width=0.85\textwidth]{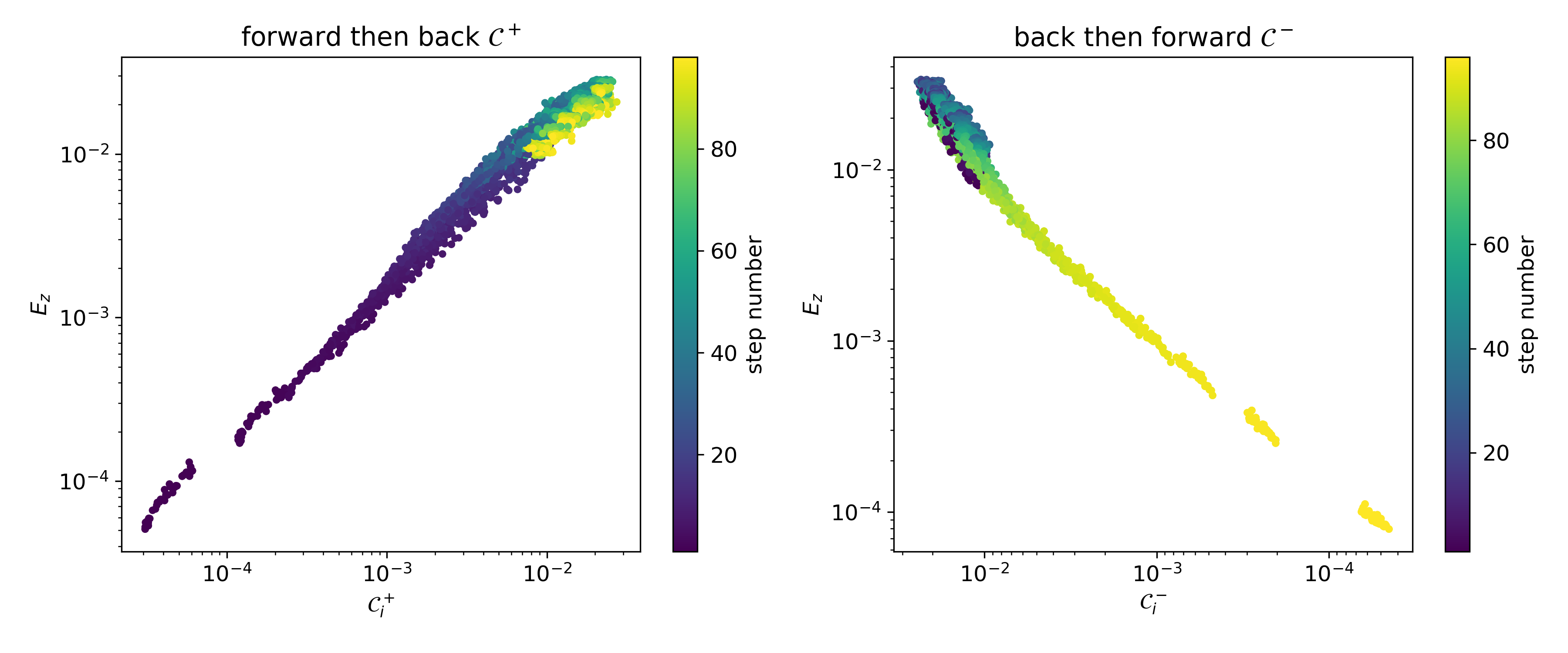}
\caption{\textbf{Round-trip consistency forward or backward in time.}
Forward rollouts cycled backward (the default $\cyc{i}$, left) and
backward rollouts cycled forward (the mirror image $\cyc{i}^{\,-}$,
right), each plotted against the corresponding true rollout error on
the 50 held-out MHD validation trajectories, colored by steo number. The backward cycle tracks backward error when the model is deployed as an inverse solver. The x-axis of the backward rollouts is reversed.}
\label{fig:inv-ci_step}
\end{figure*}

\end{document}